\documentclass{article} 
\usepackage{iclr2023_conference,times}
\usepackage{xcolor}  
\usepackage{preamble/color-edits}
\addauthor{ms}{blue}
\usepackage{etoolbox,comment}   
 \newtoggle{arxiv}
 \togglefalse{arxiv}
\usepackage{xspace}
\usepackage{booktabs,color,xcolor,wrapfig} 
\usepackage{subcaption}
\usepackage{enumitem}


\usepackage{boxedminipage}
\usepackage{multirow,nicefrac}
\usepackage{makecell,upgreek}
\usepackage{footnote}
\usepackage{longtable}

 \usepackage{stackengine}

\usepackage{booktabs}   
\usepackage{float}

\addauthor{kz}{brown}
\addauthor{ms}{purple}

\usepackage{amsthm}
\iftoggle{arxiv}
{
\usepackage{subfig}

  \usepackage{fullpage}
  \usepackage[noend]{algpseudocode}
  \usepackage{algorithm}
}
{
  \usepackage{algorithm}
  \usepackage{algorithmic}
}
\usepackage[breaklinks=true]{hyperref}

\usepackage{amssymb,mathrsfs}
\usepackage{mathtools}
\DeclareMathSymbol{\shortminus}{\mathbin}{AMSa}{"39}

	\usepackage{natbib}

\usepackage{prettyref}

\usepackage[capitalise,nameinlink]{cleveref}
\Crefname{equation}{Eq.}{Eqs.}
\Crefname{assumption}{Assumption}{Assumptions}
\Crefname{condition}{Condition}{Conditions}
\Crefname{claim}{Claim}{Claims}

\makeatletter
\newcommand\incircbin
{%
  \mathpalette\@incircbin
}
\newcommand\@incircbin[2]
{%
  \mathbin%
  {%
    \ooalign{\hidewidth$#1#2$\hidewidth\crcr$#1\bigcirc$}%
  }%
}

\usepackage{breakcites,bm}

\hypersetup{colorlinks,citecolor=blue,linkcolor = blue}
\usepackage{latexsym}
\usepackage{relsize}
\usepackage{thm-restate}
\usepackage{appendix}

\newcommand{\N}{\mathbb{N}}

\newcommand{\R}{\mathbb{R}}

\numberwithin{equation}{section}
\newcommand\numberthis{\addtocounter{equation}{1}\tag{\theequation}}


\newcommand{\rmd}{\mathrm{d}}








\def\bv{\mathbf{v}}

\DeclareFontFamily{U}{mathx}{\hyphenchar\font45}
\DeclareFontShape{U}{mathx}{m}{n}{
      <5> <6> <7> <8> <9> <10>
      <10.95> <12> <14.4> <17.28> <20.74> <24.88>
      mathx10
      }{}
\DeclareSymbolFont{mathx}{U}{mathx}{m}{n}
\DeclareFontSubstitution{U}{mathx}{m}{n}
\DeclareMathAccent{\widecheck}{0}{mathx}{"71}
\DeclareMathAccent{\wideparen}{0}{mathx}{"75}

\newcommand{\ignore}[1]{}



%
%
%



\newcommand{\op}{\mathrm{op}}

\newcommand{\fro}{\mathrm{F}}

	\theoremstyle{plain}
	\newtheorem{theorem}{Theorem}

	\newtheorem{lemma}{Lemma}[section]

	\newtheorem{corollary}{Corollary}[section]
	\newtheorem{proposition}[lemma]{Proposition}

	\theoremstyle{definition}

	\newtheorem{definition}{Definition}[section]

	\newtheorem{remark}{Remark}[section]

  \newtheorem{assumption}{Assumption}[section]
  \newtheorem{condition}{Condition}[section]


	
\makeatletter
\newcommand{\neutralize}[1]{\expandafter\let\csname c@#1\endcsname\count@}
\makeatother

\newtheorem*{theorem*}{Theorem}
\newtheorem*{lemma*}{Lemma}
\newtheorem*{corollary*}{Corollary}
\newtheorem*{proposition*}{Proposition}
\newtheorem*{claim*}{Claim}
\newtheorem*{fact*}{Fact}
\newtheorem*{observation*}{Observation}

\newtheorem*{definition*}{Definition}
\newtheorem*{remark*}{Remark}
\newtheorem*{example*}{Example}

\newtheoremstyle{named}{}{}{\itshape}{}{\bfseries}{}{.5em}{\Cref{#3} {\normalfont (informal)} }
{}
\theoremstyle{named}

\theoremstyle{plain}


%




\DeclareMathAlphabet{\mathbfsf}{\encodingdefault}{\sfdefault}{bx}{n}


\let\Pr\relax
\DeclareMathOperator{\Pr}{\mathbb{P}}





\newcommand{\poly}{\mathrm{poly}}

\let\oldtfrac\tfrac
\renewcommand{\tfrac}[2]{\smash{\oldtfrac{#1}{#2}}}

\let\nablaold\nabla
\renewcommand{\nabla}{\nablaold\mkern-2.5mu}

\newcommand{\Exp}{\mathbb{E}}



\newcommand{\I}{\mathbb{I}}

 %


\newcommand{\Dtest}{\cD_{\mathrm{test}}}
\newcommand{\Dtrain}{\cD_{\mathrm{train}}}

\newcommand{\fstk}{\fst_k}
\newcommand{\gstk}{\gst_k}
\newcommand{\hst}{h^{\star}}

\newcommand{\fhat}{\hat{f}}
\newcommand{\ghat}{\hat{g}}

\newcommand{\gst}{g_{\star}}
\newcommand{\fst}{f_{\star}}

\newcommand{\rank}{\mathrm{rank}}

\newcommand{\Risk}{\cR}

\newcommand{\hstk}{\hst_k}

\def\ddefloop#1{\ifx\ddefloop#1\else\ddef{#1}\expandafter\ddefloop\fi}
\def\ddef#1{\expandafter\def\csname bb#1\endcsname{\ensuremath{\mathbb{#1}}}}
\ddefloop ABCDEFGHIJKLMNOPQRSTUVWXYZ\ddefloop

\def\ddefloop#1{\ifx\ddefloop#1\else\ddef{#1}\expandafter\ddefloop\fi}
\def\ddef#1{\expandafter\def\csname fr#1\endcsname{\ensuremath{\mathfrak{#1}}}}
\ddefloop ABCDEFGHIJKLMNOPQRSTUVWXYZ\ddefloop

\def\ddefloop#1{\ifx\ddefloop#1\else\ddef{#1}\expandafter\ddefloop\fi}
\def\ddef#1{\expandafter\def\csname eul#1\endcsname{\ensuremath{\EuScript{#1}}}}
\ddefloop ABCDEFGHIJKLMNOPQRSTUVWXYZ\ddefloop

\def\ddefloop#1{\ifx\ddefloop#1\else\ddef{#1}\expandafter\ddefloop\fi}
\def\ddef#1{\expandafter\def\csname scr#1\endcsname{\ensuremath{\mathscr{#1}}}}
\ddefloop ABCDEFGHIJKLMNOPQRSTUVWXYZ\ddefloop

\def\ddefloop#1{\ifx\ddefloop#1\else\ddef{#1}\expandafter\ddefloop\fi}
\def\ddef#1{\expandafter\def\csname b#1\endcsname{\ensuremath{\mathbf{#1}}}}
\ddefloop ABCDEFGHIJKLMNOPQRSTUVWXYZ\ddefloop

\def\ddefloop#1{\ifx\ddefloop#1\else\ddef{#1}\expandafter\ddefloop\fi}
\def\ddef#1{\expandafter\def\csname bhat#1\endcsname{\ensuremath{\hat{\mathbf{#1}}}}}
\ddefloop ABCDEFGHIJKLMNOPQRSTUVWXYZ\ddefloop

\def\ddefloop#1{\ifx\ddefloop#1\else\ddef{#1}\expandafter\ddefloop\fi}
\def\ddef#1{\expandafter\def\csname btil#1\endcsname{\ensuremath{\tilde{\mathbf{#1}}}}}
\ddefloop ABCDEFGHIJKLMNOPQRSTUVWXYZ\ddefloop

\def\ddefloop#1{\ifx\ddefloop#1\else\ddef{#1}\expandafter\ddefloop\fi}
\def\ddef#1{\expandafter\def\csname bbar#1\endcsname{\ensuremath{\bar{\mathbf{#1}}}}}
\ddefloop ABCDEFGHIJKLMNOPQRSTUVWXYZ\ddefloop

\def\ddefloop#1{\ifx\ddefloop#1\else\ddef{#1}\expandafter\ddefloop\fi}
\def\ddef#1{\expandafter\def\csname bst#1\endcsname{\ensuremath{\mathbf{#1}^\star}}}
\ddefloop ABCDEFGHIJKLMNOPQRSTUVWXYZ\ddefloop

\def\ddef#1{\expandafter\def\csname c#1\endcsname{\ensuremath{\mathcal{#1}}}}
\ddefloop ABCDEFGHIJKLMNOPQRSTUVWXYZ\ddefloop

\def\ddef#1{\expandafter\def\csname cbar#1\endcsname{\ensuremath{\bar{\mathcal{#1}}}}}
\ddefloop ABCDEFGHIJKLMNOPQRSTUVWXYZ\ddefloop

\def\ddef#1{\expandafter\def\csname cvec#1\endcsname{\ensuremath{\vec{\mathcal{#1}}}}}
\ddefloop ABCDEFGHIJKLMNOPQRSTUVWXYZ\ddefloop

\def\ddef#1{\expandafter\def\csname h#1\endcsname{\ensuremath{\widehat{#1}}}}
\ddefloop ABCDEFGHIJKLMNOPQRSTUVWXYZ\ddefloop
\def\ddef#1{\expandafter\def\csname hc#1\endcsname{\ensuremath{\widehat{\mathcal{#1}}}}}
\ddefloop ABCDEFGHIJKLMNOPQRSTUVWXYZ\ddefloop
\def\ddef#1{\expandafter\def\csname t#1\endcsname{\ensuremath{\widetilde{#1}}}}
\ddefloop ABCDEFGHIJKLMNOPQRSTUVWXYZ\ddefloop
\def\ddef#1{\expandafter\def\csname tc#1\endcsname{\ensuremath{\widetilde{\mathcal{#1}}}}}
\ddefloop ABCDEFGHIJKLMNOPQRSTUVWXYZ\ddefloop

\newcommand{\kz}[1]{\textcolor{red}{[Kaiqing: #1]}}

\newcommand{\aviv}[1]{\textcolor{magenta}{#1}} 
\newcommand{\oos}{\textsc{oos}\xspace}
\newcommand{\ooc}{\textsc{ooc}\xspace}

\newcommand{\ood}{\textsc{ood}\xspace}

\title{Learning to Extrapolate: A Transductive Approach}

\author{Aviv Netanyahu$^{1,2}$\thanks{denotes equal contribution. Correspondence to Aviv Netanyahu $<$avivn@mit.edu$>$, Abhishek Gupta $<$abhgupta@cs.washington.edu$>$.}$\,\,$, Abhishek Gupta$^{1,2,3*}$, Max Simchowitz$^2$, Kaiqing Zhang$^{2,4}$ \& Pulkit Agrawal$^{1,2}$\\ 
Improbable AI Lab$^1$ \,\,\,\,\,\,\,\,\, MIT$^2$ \,\,\,\,\,\,\,\,\,
University of Washington$^3$ \,\,\,\,\,\,\,\,\,
University of Maryland, College Park$^4$
}

%


\iclrfinalcopy 
\begin{document} 

\maketitle 

\begin{abstract}
Machine learning systems, especially with overparameterized deep neural networks, can generalize to novel test instances drawn from the same distribution as the training data. However, they fare poorly when evaluated on \emph{out-of-support} test points. In this work, we tackle the problem of developing machine learning systems that retain the power of overparameterized function approximators while enabling extrapolation to out-of-support test points when possible. This is accomplished by noting that under certain conditions, a ``transductive'' reparameterization can convert an out-of-support extrapolation problem into a problem of within-support combinatorial generalization. We propose a simple strategy based on bilinear embeddings to enable this type of combinatorial generalization, thereby addressing the out-of-support extrapolation problem under certain conditions. We instantiate a simple, practical algorithm applicable to various supervised learning and imitation learning tasks.

\end{abstract}

\section{Introduction}

Generalization is a central problem in machine learning. Typically, one expects generalization when the test data is sampled from {\it the same distribution} as the training set, i.e {\it out-of-sample} generalization. 
However, in many scenarios, test data is sampled from a different distribution from the training set, i.e., {\it out-of-distribution} (\ood). In some \ood scenarios, the test distribution is assumed to be known during training -- a common assumption made by meta-learning methods~\citep{finn2017model}. Several works have tackled a more general scenario of ``reweighted'' distribution shift~\citep{koh21wilds, quinonero2008dataset} 
where the test distribution  shares support with the training distribution, but has a different and unknown probability density. This setting 
can be tackled via distributional robustness approaches ~\citep{sinha18dro, rahimian19dro}.

Our paper aims to find structural conditions under which generalization to test data with
support {\it outside} of the training distribution is possible. Formally, assume the problem of learning function $h$: $\hat{y} = h(x)$ using data $\{(x_i, y_i)\}_{i=1}^N \sim \mathcal{D_{\mathrm{train}}}$, where $x_i \in \mathcal{X}_{\mathrm{train}}$, the training domain. 
We are interested in making accurate predictions $h(x)$ for $x \notin \mathcal{X}_{\mathrm{train}}$ (see examples in Fig~\ref{fig:intro}).
Consider an example task of predicting actions to reach a desired goal (Fig~\ref{subfig:intro-reach}). During training, goals are provided from the blue cuboid ($x \in \mathcal{X}_{\mathrm{train}})$, but test time goals are from the orange cuboid ($x \notin \mathcal{X}_{\mathrm{train}}$). If $h$ is modeled using a deep neural network, its predictions on test goals in the blue area are likely to be accurate, but for the goals in the orange area the performance can be arbitrarily poor unless further domain knowledge is incorporated. 
This challenge manifests itself in a variety of real-world problems, ranging from supervised learning problems like object classification~\citep{barbu2019objectnet}, sequential decision making with reinforcement learning~\citep{kirk21surveygen}, transferring reinforcement learning policies from simulation to real-world~\citep{zhao2020sim}, imitation learning~\citep{Haan2019causal}, etc. Reliably deploying learning algorithms in unconstrained environments requires accounting for such ``out-of-support'' distribution shift, which we refer to as \emph{extrapolation}. 

It is widely accepted that if one can identify some \textit{structure} in the training data that constrains the behavior of optimal predictors on novel data, then extrapolation may become possible. Several methods can extrapolate if the nature of distribution shift is known apriori: convolution neural networks are appropriate if a training pattern appears at an out-of-distribution translation in a test example~\citep{kayhan2020translation}. Similarly, accurate predictions can be made for object point clouds in out-of-support orientations by \textit{building in} SE(3) equivariance~\citep{deng21vectorneuron}. Another way to extrapolate is if the model class is known apriori: fitting a linear function to a linear problem will extrapolate. Similarly, methods like NeRF~\citep{zhang2022ray} use physics of image formation to learn a 3D model of a scene which can synthesize images from novel viewpoints. 

We propose an alternative structural condition under which extrapolation is feasible. Typical machine learning approaches are \emph{inductive}: decision-making rules are inferred from training data and employed for test-time predictions. An alternative to induction is \textit{transduction}~\citep{gammerman1998transduction} where a test example is compared with the training examples to make a prediction. Our main insight is that in the transductive view of machine learning, extrapolation can be reparameterized as a combinatorial generalization problem, which, under certain low-rank and coverage conditions \citep{shah2020sample, agarwal2021causal, andreas16modular, andreas19compositionality}, admits a solution.

First we show how we can 
(i) reparameterize out-of-support inputs $h(x_{\mathrm{test}}) \rightarrow h(\Delta x, x')$, where $x' \in \mathcal{X}_{\mathrm{train}}$, 
and $\Delta x$ is a representation of the difference between $x_{\mathrm{test}}$ and $x'$.
%
We then (ii) provide conditions under which $h(\Delta x, x')$ makes accurate predictions for unseen combinations of $(\Delta x, x')$  based on a theoretically justified bilinear modeling approach: $h(\Delta x, x') \rightarrow f(\Delta x)^{\top}g(x')$, where $f$ and $g$ map their inputs into vector spaces of the same dimension. Finally,  
(iii) empirical results demonstrate the generality of extrapolation of our algorithm on a wide variety of tasks: (a) regression for analytical functions and high-dimensional point cloud data; (b) sequential decision-making tasks such as goal-reaching for a simulated robot. 


\begin{figure}[t!]
    \centering
    \begin{subfigure}{.3\textwidth}
      \centering
        \includegraphics[width=0.6\linewidth]{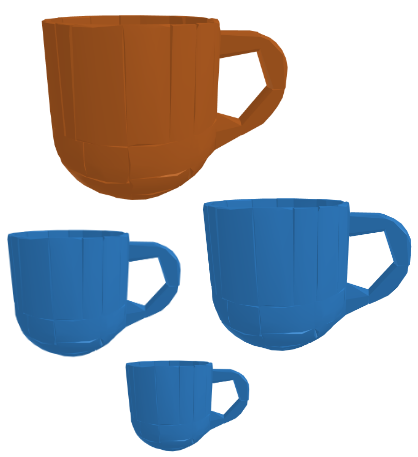}
      \caption{\footnotesize{Grasp point prediction}}
      \label{subfig:intro-grasp}
    \end{subfigure} 
    \begin{subfigure}{.3\textwidth}
      \centering
        \includegraphics[width=.9\linewidth]{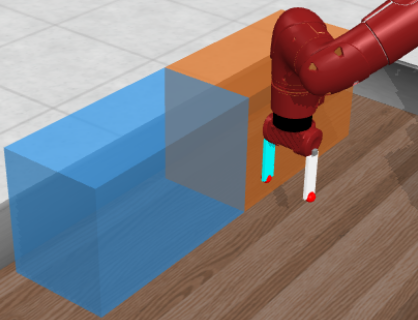}
      \caption{\footnotesize{Action prediction}}
      \label{subfig:intro-reach}
    \end{subfigure}
    \begin{subfigure}{.3\textwidth}
      \centering
        \includegraphics[width=.9\linewidth]{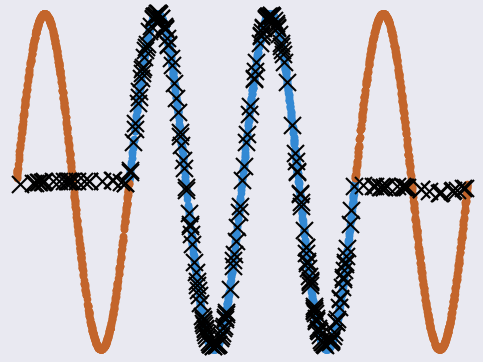}
      \caption{\footnotesize{Function value prediction}}
      \label{subfig:intro-sine}
    \end{subfigure}    
    \caption{
    \footnotesize{In the real-world the test distribution (orange) often has a \textit{different support} than the training distribution (blue). Some illustrative tasks:
    \textbf{(a)} grasp point prediction for object instances with out-of-support scale, \textbf{(b)} action prediction for reaching out-of-support goals, \textbf{(c)} function value prediction for an out-of-support input range. The black crosses show predictions for a conventionally trained deep neural network that makes accurate predictions for in-support inputs, but fails on out-of-support inputs. We propose an algorithm that makes accurate out-of-support predictions under a set of assumptions. 
    }}
    \label{fig:intro}
\end{figure}

\renewcommand{\hst}{h_{\star}}
\newcommand{\hbarst}{\bar{h}_{\star}}
\newcommand{\Prob}{\mathrm{Pr}}
\newcommand{\btheta}{\bm{\theta}}
\newcommand{\risk}{\cR}
\newcommand{\simplex}{\cP}
\newcommand{\bSigma}{\mathbf{\Sigma}}
\newcommand{\epsgen}{\epsilon_{\mathrm{gen}}}
\newcommand{\xtest}{x_{\mathrm{test}}}
\renewcommand{\hstk}[1][k]{h_{\star,#1}}
\renewcommand{\fstk}[1][k]{f_{\star,#1}}
\renewcommand{\gstk}[1][k]{g_{\star,#1}}
\newcommand{\hthetk}[1][k]{h_{\btheta,#1}}
\newcommand{\hthet}{h_{\btheta}}

\newcommand{\fthetk}[1][k]{f_{\btheta,#1}}
\newcommand{\gthetk}[1][k]{g_{\btheta,#1}}

\section{Setup}\label{sec:prelim}
\paragraph{Notation.}
Given a space of inputs $\cX$ and targets $\cY$, we aim to learn a predictor $\hthet: \cX \to \simplex(\cY)$\footnote{Throughout, we let $\simplex(\cY)$ denote the set of distributions supported on $\cY$.}  parameterized by $\theta$, which best fits a 
ground  truth function $\hst: \cX \to \cY$. 
Given some non-negative loss function $\ell: \cY \times \cY \to \R_{\ge 0}$ on the outputs (e.g., squared loss), and a distribution $\cD$ over $\cX$, the \emph{risk} is defined as 
	\begin{align}\label{eq:risk}
	\risk(\hthet;\cD) := \Exp_{x \sim \cD}\Exp_{y \sim \hthet(x)} \ell(y,\hst(x)).
	\end{align}
Various training ($\Dtrain$) and test ($\Dtest$) distributions yield different generalization settings:
       


\textbf{In-Distribution Generalization.} This setting  assumes $\Dtest=\Dtrain$. 
The challenge is to ensure that with $N$ samples from $\Dtrain$, the expected risk $\risk(\hthet;\Dtest) = \risk(\hthet;\Dtrain)$ is small. This is a common paradigm in both empirical supervised learning (e.g.,  \cite{simonyan2014very})
and in standard statistical learning theory (e.g., \cite{vapnik2006estimation}).

\textbf{Out-of-Distribution ({\ood}).} This is more challenging and requires  accurate predictions when $\Dtrain \ne \Dtest$. When the ratio between the density function of $\Dtest$ to that of $\Dtrain$ is bounded, rigorous \ood extrapolation guarantees exist and are detailed in Appendix~\ref{sec:gen_bounded_dens}. Such a situation arises when $\Dtest$ shares support with $\Dtrain$ but is differently distributed as depicted in Fig~\ref{subfig:ood}.

\textbf{Out-of-Support ({\oos}).} There are innumerable forms of distribution shift in which density ratios are not bounded. The most extreme case is when the support of $\Dtest$ is not contained in that of $\Dtrain$. I.e., when there exists some $\cX' \subset \cX$ such that
$\Pr_{x \sim \Dtest}[x \in \cX'] > 0$, but 
$\Pr_{x \sim \Dtrain}[x \in \cX'] = 0$
(see Fig~\ref{subfig:oos}). 
We term the problem of achieving low risk on such a $\Dtest$ as \emph{\oos{} extrapolation}.
 \newcommand{\Dtrainz}{\cD_{\mathrm{train,\cZ}}}
 \newcommand{\Dtrainv}{\cD_{\mathrm{train,\cV}}}
 \newcommand{\Dtestz}{\cD_{\mathrm{test,\cZ}}}
 \newcommand{\Dtestv}{\cD_{\mathrm{test,\cV}}}
  \newcommand{\Dtrainxone}{\cD_{\mathrm{train,\Xone}}}
 \newcommand{\Dtrainxtwo}{\cD_{\mathrm{train,\Xtwo}}}
 
 \newcommand{\Dtestxone}{\cD_{\mathrm{test,\Xone}}}
 \newcommand{\Dtestxtwo}{\cD_{\mathrm{test,\Xtwo}}}
 
\newcommand{\Xtwo}{\cX_2}
\newcommand{\Xone}{\cX_1}	\newcommand{\combll}[1][]{\ll_{#1,\mathrm{comb}}}

\textbf{Out-of-Combination ({\ooc}).} This 
is a special case of \oos{}. Let $\cX = \Xone \times \Xtwo$ be the product of two spaces. Let $\Dtrainxone$, $\Dtrainxtwo$  denote the marginal distributions of $x_1 \in \Xone$, $x_2 \in \Xtwo$ under $\Dtrain$, and $\Dtestxone,\Dtestxtwo$ 
under
$\Dtest$. In \ooc{} learning, 
$\Dtestxone$, $\Dtestxtwo$ are in the support of $\Dtrainxone$, $\Dtrainxtwo$, but the joint distributions $\Dtest$ need not be in the support of $\Dtrain$. 

\textbf{Transduction.} 
In classical transduction ~\citep{gammerman1998transduction}, given data points $\{x_i\}_{i=1}^l$ and their labels $\{y_i\}_{i=1}^l$, the objective is making predictions for points $\{x_i\}_{i=l+1}^{l+k}$. In this paper we refer to predictors of $x$ that are functions of the labeled data points $\{x_i\}_{i=1}^l$ as \textit{transductive predictors}.

\begin{figure}[!h]         
    \centering
    \hspace{-6pt}
    \begin{subfigure}{.26\textwidth}
      \centering
        \includegraphics[width=\linewidth]{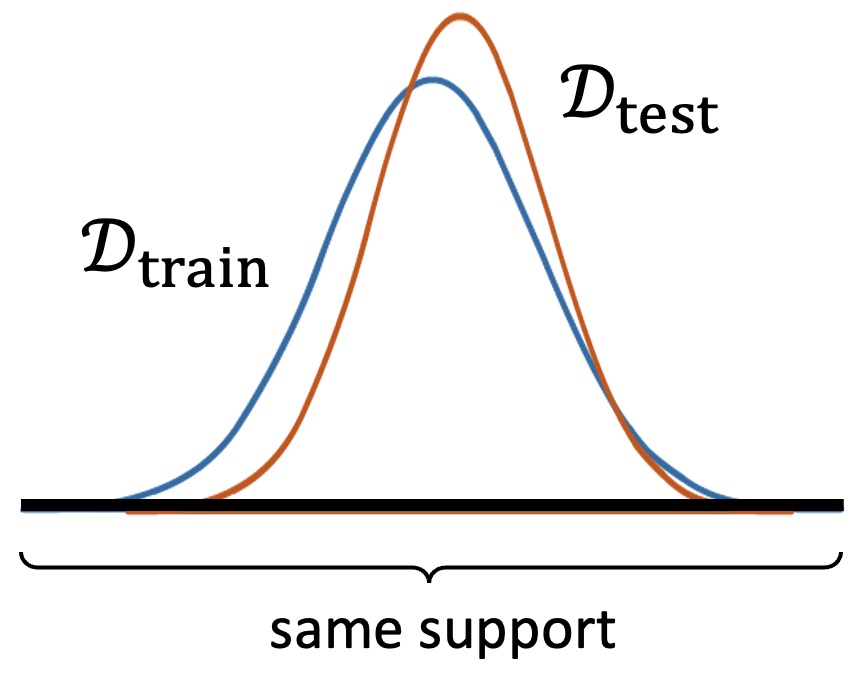}
      \caption{\footnotesize{\ood{}}}
      \label{subfig:ood}
    \end{subfigure} 
    \hspace{-3pt}
    \begin{subfigure}{.47\textwidth}
      \centering
        \includegraphics[width=\linewidth]{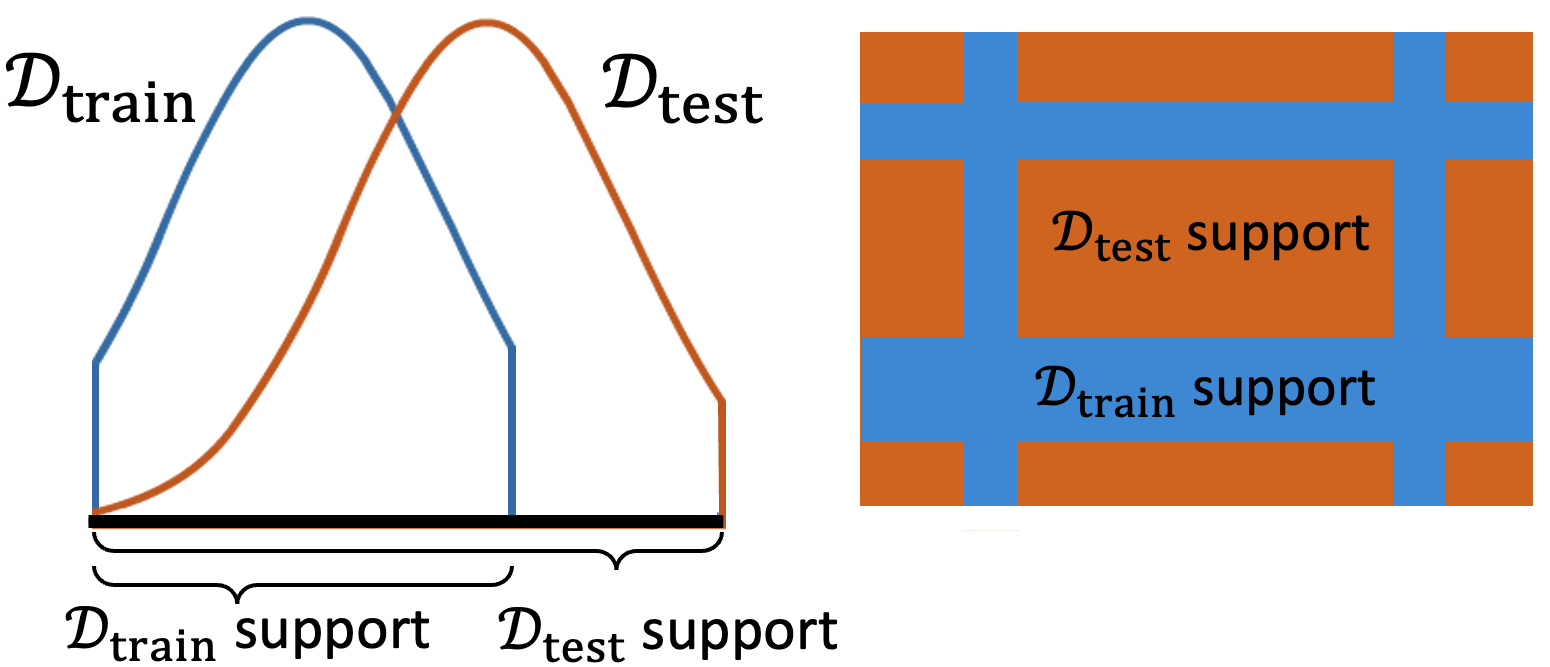}
      \caption{\footnotesize{General \oos{} }}
      \label{subfig:oos}
    \end{subfigure}  
    \hspace{-3pt}
    \begin{subfigure}{.27\textwidth}
      \centering
        \includegraphics[width=\linewidth]{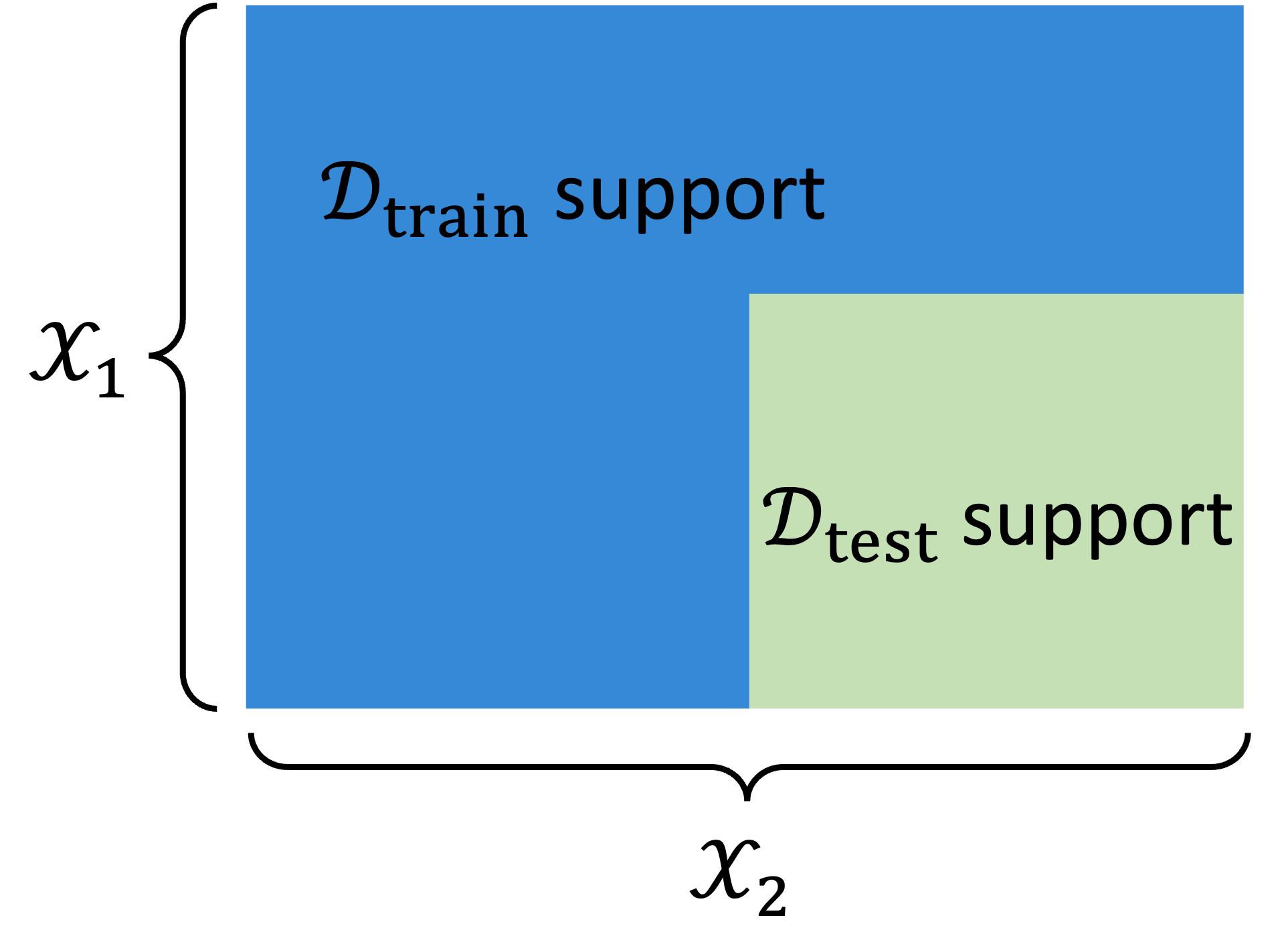}
      \caption{\footnotesize{Structured  \oos{}: \ooc{}}}
      \label{subfig:struct-oos}
    \end{subfigure} 
    \caption{Illustration of different learning settings. {\textbf{(a)}} in-support out-of-distribution (\ood) learning; {\textbf{(b)}} general out-of-support (\oos) learning in 1-D (on the left)  and 2-D (on the right); {\textbf{(c)}}  out-of-combination (\ooc) learning in 2-D. 
    }  
    \label{fig:illus_settings}
\end{figure}

\newcommand{\Dbartrain}{\bar{\cD}_{\mathrm{train}}}
\newcommand{\Dbartest}{\bar{\cD}_{\mathrm{test}}}
\newcommand{\hbarthet}{\bar{h}_{\btheta}}
\newcommand{\hbarthetk}[1][k]{\bar{h}_{\btheta,#1}}

\newcommand{\hbarstk}[1][k]{\bar{h}_{\star,#1}}

\newcommand{\DelX}{\Delta \cX}
\newcommand{\delx}{\Delta x}
\newcommand{\ftilthetk}{\tilde{f}_{\btheta,k}}
\newcommand{\gtilthetk}{\tilde{g}_{\btheta,k}}

\section{Conditions and Algorithm for Out-of-Support Generalization}
\label{section:oos-to-ooc}
While general \oos{} extrapolation can be arbitrarily challenging, we show that there exist some concrete conditions under which \ooc{} extrapolation is feasible. Under these conditions, an \oos{} prediction problem can be converted into an \ooc{} prediction problem. 


\newcommand{\Dtrans}{\cD_{\mathrm{trns}}}
\newcommand{\DXtrain}{\Delta\cX_{\mathrm{train}}}
\newcommand{\Xtrain}{\cX_{\mathrm{train}}}

 \newcommand{\Dtraindelx}{\cD_{\mathrm{train,\Delx}}}
	 \newcommand{\Dtestdelx}{\cD_{\mathrm{test,\Delx}}}

\begin{figure}[!h]
  \begin{center}
    \includegraphics[width=0.85\textwidth]{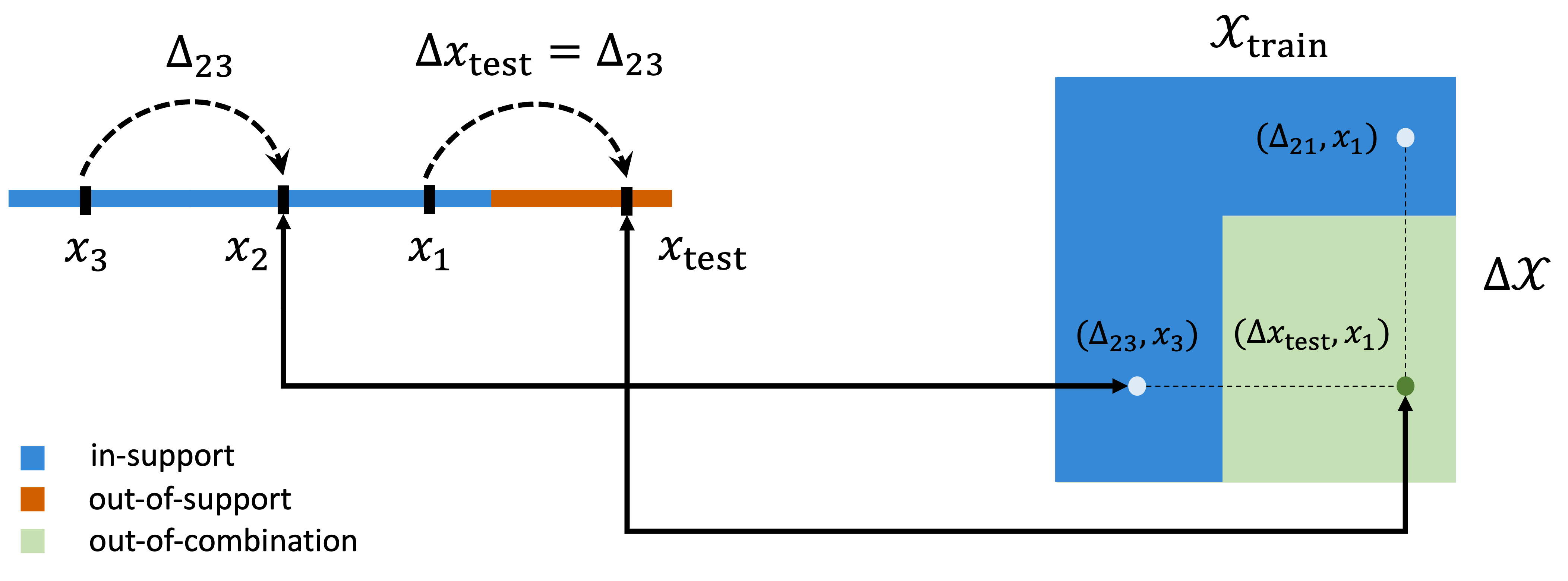} 
  \end{center}   
  \caption{\footnotesize{
  Illustration of converting \oos{} to \ooc{}.
  (\textbf{Left}) 
  Consider training points $x_1,x_2,x_3\in\mathcal{X}_{\mathrm{train}}$ and \oos{} test point $x_{\mathrm{test}}$.
  During training, we predict $h_{\btheta}(x_2)$ by transducing $x_3$ to $\hbarthet(\Delta_{23}, x_3)$, where $\Delta_{23}=x_2-x_3$. 
  Similarly, at test time, we predict $h_{\btheta}(x_{\mathrm{test}})$ by  transducing training point $x_1$, via $\hbarthet(\Delta x_{\mathrm{test}}, x_1)$, where $\Delta x_{\mathrm{test}}=x_{\mathrm{test}}-x_1$. In this example note that $\Delta_{23}=\Delta x_{\mathrm{test}}$.
  (\textbf{Right}) 
  This conversion yields an \ooc{} generalization problem in space $\Delta\mathcal{X}\times\mathcal{X}_{\mathrm{train}}$: marginal distributions $\Delta\mathcal{X}$ and $\mathcal{X}_{\mathrm{train}}$ are covered by the training distribution, but their \textit{combination} is not.
}}
\label{fig:illus_oos_ooc}
\end{figure}

\subsection{Transductive Predictors: Converting \oos{} to \ooc{}}
\label{ssec:oos-to-ooc}

We require that input space $\cX$ has group structure,  i.e., addition and subtraction operators $x+x',x-x'$ are well-defined for $x,x' \in \cX$. Let $\DelX := \{x-x': x,x' \in \cX\}$.
We propose a \textit{transductive reparameterization} $\hthet: \cX \to \simplex(\cY)$ with a \emph{deterministic} function $\hbarthet:\Delta\cX \times \cX \to \cY$ as
\begin{align}
\hthet(x) := \hbarthet(x-x',x'),  
\label{eq:param_transduct}
\end{align}
where $x'$ is referred to as an \textit{anchor} point for a \textit{query} point $x$. 

Under this reparameterization, the \textbf{training} distribution can be rewritten as a \textit{joint} distribution of $\delx = x - x'$  and $x'$ as follows:
\begin{align}
\Pr_{\Dbartrain}[(\delx, x') \in \cdot] &:= \Prob[ (\delx, x') \in \cdot ~ \mid  x\sim \Dtrain, ~x'{\sim \Dtrain}, ~\delx = x - x']. \label{equ:def_Dbartrain}
\end{align} 
This is just representing the prediction for querying   every point from the training distribution in terms of its relationship to other anchor points in the training distribution.

At \textbf{test} time, we are presented with query point $x \sim \Dtest$ that may be from an \oos{} distribution. To make a prediction on this \oos{} $x$, we observe that with a careful selection of an anchor point $x'$ from $\Dtrain$, our reparameterization may be able to convert this \oos{} problem into a more manageable \ooc{} one, since representing the test point $x$ in terms of its difference from training points can still be an ``in-support" problem. 
For a radius parameter $\rho>0$, define the distribution of chosen anchor points $\Dtrans(x)$ (referred to as a transducing distribution) as 
\vspace{-.1em}
\begin{align}
\Pr_{\Dtrans(x)}[x' \in \cdot] = \Prob[x' \in \cdot \mid x' \sim \Dtrain, ~ \inf_{\delx \in \DXtrain}\|(x - x') - \delx\| \le \rho], \label{eq:Dtrans}
\end{align}
where $\Xtrain$ denotes the set of $x$ in our training set, and we  denote $\DXtrain := \{x_1 - x_2 : x_1, x_2 \in \Xtrain\}$. Intuitively, our choice of $\Dtrans(x)$ selects anchor points $x'$ to transduce from the training distribution, subject to the resulting differences $(x - x')$ being close to a ``seen'' $\delx \in \DXtrain$. In doing so, both the anchor point $x'$ and the difference $\delx$ have been seen individually at training time, albeit not in combination. This allows us to express the prediction for a \oos{} query point in terms of an in-support anchor point $x'$ and an in-support difference $\delx$ (but not \emph{jointly} in-support). 
This choice of anchor points induces a joint \emph{test} distribution of $\delx = x - x'$  and $x'$: 
\begin{align}
\Pr_{\Dbartest}[(\delx, x') \in \cdot]  &:= \Prob[(\delx,x') \in \cdot ~ \mid x \sim \Dtest,~x' \sim \Dtrans(x), ~\delx = x -x']. \label{equ:def_Dbartest} 
\end{align} 

As seen from Fig~\ref{fig:illus_oos_ooc}, the marginals 
of $\delx$ and $x'$ under 
$\Dbartest$, are individually in the support of those under
 $\Dbartrain$. 
Still, as Fig~\ref{fig:illus_oos_ooc} reveals, since $x_{\text{test}}$ is out-of-support, the joint distribution of $\Dbartest$  is not covered by that of  $\Dbartrain$ (i.e., the combination of $x_1$ and $x_{\mathrm{test}}$ have not been seen together before); precisely the \ooc{} regime. 
Moreover, if one tried to transduce \emph{all} $x' \sim \Dtrain$ to $x \sim \Dtest$ at test time (e.g., transduce point $x_3$ to $x_{\mathrm{test}}$ in the figure) 
then we would lose coverage of the $\delx$-marginal. By applying transduction to keep both the marginal $x'$ and $\delx$ in-support, we are ensuring that we can convert difficult \oos{} problems into (potentially) more manageable \ooc{} ones. 


\vspace{-0.3em}
\subsection{Bilinear representations for \ooc{} learning}
\label{ssec:bilinear-ooc}
\vspace{-0.3em}

Without additional assumptions, \ooc{} extrapolation may be just as challenging as \oos. However, with certain low-rank structure it can be feasible~\citep{shah2020sample,agarwal2021causal,athey2021matrix}. 
Following \cite{agarwal2021persim}, we recognize that this low-rank property can be leveraged \emph{implicitly} for our reparameterized \ooc{} problem even in the continuous case (where $x, \Delta x$ do not explicitly form a finite dimensional matrix), using a bilinear representation of the \textit{transductive} predictor in \Cref{eq:param_transduct}, $\hbarthet(\delx ,x') = \langle f_{\btheta}(\delx), g_{\btheta}(x')\rangle$. Here $f_{\btheta}, g_{\btheta}$ map their respective inputs into a vector space of the same dimension, $\R^p$.\footnote{In our implementation, we take $\btheta = (\btheta_f,\btheta_g)$, with separate parameters for each embedding.} If the output space is $K$-dimensional, then we independently model the prediction for each dimension using a set of $K$ bilinear embeddings: 
\begin{align}\label{eq:pibarthet}
\hbarthet(\delx ,x') = (\hbarthetk[1](\delx,x'),\dots,\hbarthetk[K](\delx,x'));\quad 
\hbarthetk(\delx,x') = \langle \fthetk(\delx), \gthetk(x')\rangle.
\end{align} 
While $\hbarthetk$ are bilinear in embeddings $\fthetk,\gthetk$, the embeddings themselves may be parameterized by general function approximators. The effective ``rank" of the transductive predictor is controlled by the dimension of the continuous embeddings $\fthetk(\delx), \gthetk(x')$. 

We now introduce three conditions under which extrapolation is possible. As a preliminary, we write $\mu_1 \ll_{\kappa} \mu_2$ for two (unnecessarily normalized) measures $\mu_1,\mu_2$ to denote  $\mu_1(A) \le \kappa\mu_2(A)$ for all events $A$. This notion of coverage has been studied in the reinforcement learning community (termed as ``concentrability'' \citep{munos2008finite}), and implies that the support of $\mu_1$ is contained in the support of $\mu_2$. The first condition is the following notion of combinatorial coverage. 
\begin{assumption}[Bounded combinatorial density ratio]\label{defn:kap_comb_density_trans} We assume that $\Dbartest$ has {\it $\kappa$-bounded combinatorial density ratio} with respect to $\Dbartrain$, written as $\Dbartest \combll[\kappa] \Dbartrain$. This means that there exist distributions $\cD_{\DelX,i}$ and $\cD_{\cX,j}$, $i, j \in \{1,2\}$, over $\DelX$ and $\cX$, respectively, such that $\cD_{i\otimes j} :=  \cD_{\DelX,i}\otimes\cD_{\cX,j}$ satisfy  
    \begin{align*}
     \sum_{(i,j) \ne (2,2)}\cD_{i \otimes j} \ll_{\kappa} \Dbartrain, \quad \text{and } \Dbartest \ll_{\kappa} \sum_{i,j = 1,2}\cD_{i \otimes j}.
    \end{align*}
    \end{assumption}
    {\Cref{defn:kap_comb_density_trans} can be interpreted as a generalization of a certain coverage condition in matrix factorization. The following distribution is elaborated upon in \Cref{sec:app_mat_complete} and Fig~\ref{subfig:ooc-mat-comp} therein. 
    Intuitively, $\cD_{1\otimes 1}$ corresponds to the top-left block of a matrix, where all rows and columns are in-support. $\cD_{1\otimes 2}$ and $\cD_{2 \otimes 1}$ correspond to the off-diagonals, and $\cD_{2\otimes 2}$ corresponds to the bottom-right block. The condition requires that $\Dbartrain$ covers the entire top-right block and off-diagonals, but need not cover the bottom-right block $\cD_{2\otimes 2}$. $\Dbartest$, on the other hand, can contain samples from this last block. For problems with appropriate low-rank structure, samples from the top-left block, and samples from the off-diagonal blocks, uniquely determine the bottom-right block. 
     }

    Recall that $p$ denotes the dimension of the parametrized embeddings in \Cref{eq:pibarthet}. Our next  assumptions are that the ground-truth embedding is low-rank (we refer to as ``bilinearly transducible''), and that under $\cD_{1\otimes 1}$, its component factors do not lie in a strict subspace of $\R^p$, i.e., not degenerate.  
    
\begin{assumption}[Bilinearly  transducible]\label{asm:bilin_tranduc} For each component $k \in [K]$, there exists $\fstk : \Delta \cX \to \R^p$ and $\gstk: \cX \to \R^p$ such that for all $x \in \cX$, the following holds with probability $1$ over $x' \sim \Dtrans(x)$: $\hstk(x) = \hbarstk(\delx,x') := \langle \fstk(\delx),\gstk(x')\rangle$, where $ \delx = x - x'$. Further,   the ground-truth predictions are bounded: $\max_k\sup_{\delx,x'} |\hbarstk(\delx,x')| \le M$.
\end{assumption}
\begin{assumption}\label{asm:sing_val_lb} 
For all components $k \in [K]$, the following lower bound holds for some $\sigma^2 > 0$:
    \begin{align}
     \min\{\sigma_p(\Exp_{\cD_{1\otimes 1}}[\fstk\fstk^\top]),\sigma_p(\Exp_{\cD_{1\otimes 1}}[\gstk\gstk^\top])\} \ge \sigma^2 \label{eq:sig_p}.
    \end{align}
    \end{assumption}

The following is proven in \Cref{app:theory}; due to limited space, we defer further discussion of the bound to \Cref{app:further_remarks_thm_main}. 
\begin{theorem}\label{thm:main}  Suppose \Cref{defn:kap_comb_density_trans,asm:bilin_tranduc,asm:sing_val_lb} all hold, and that the loss $\ell(\cdot,\cdot)$ is the square loss. Then, if the training risk satisfies $\Risk(\hthet;\Dtrain) \le \frac{\sigma^2}{4\kappa}$, then  the test risk is bounded as
    \begin{align*}
    \Risk(\hthet;\Dtest) \le \Risk(\hthet;\Dtrain) \cdot \kappa^2\left(1+64\frac{M^4}{\sigma^4}\right) = \Risk(\hthet;\Dtrain) \cdot \mathrm{poly}(\kappa,\frac{M}{\sigma}).
    \end{align*}  
\end{theorem}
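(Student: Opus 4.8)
The plan is to combine the coverage hypothesis \Cref{defn:kap_comb_density_trans} with a bilinear ``matrix completion'' argument driven by \Cref{asm:bilin_tranduc,asm:sing_val_lb}. Since $\ell$ is the square loss and, under the transductive reparameterization, the prediction at a query $x$ is the deterministic $\hbarthet(x-x',x')$ for a (random) anchor $x'$, I would work with the per-component residual $e_k(\delx,x') := \hbarthetk(\delx,x') - \hbarstk(\delx,x')$. \Cref{asm:bilin_tranduc} identifies the regression target $\hstk(x)$ with $\hbarstk(x-x',x')$, so $\Risk(\hthet;\Dtrain) = \sum_{k}\Exp_{\Dbartrain}[e_k^2]$ and $\Risk(\hthet;\Dtest) = \sum_{k}\Exp_{\Dbartest}[e_k^2]$. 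Applying the first containment in \Cref{defn:kap_comb_density_trans} to the nonnegative functions $e_k^2$ gives $\sum_{k}\sum_{(i,j)\ne(2,2)}\Exp_{\cD_{i\otimes j}}[e_k^2]\le\kappa\,\Risk(\hthet;\Dtrain)$, and the second gives $\Risk(\hthet;\Dtest)\le\kappa\sum_{i,j}\sum_k\Exp_{\cD_{i\otimes j}}[e_k^2]$. Together these yield $\Risk(\hthet;\Dtest)\le\kappa^2\Risk(\hthet;\Dtrain) + \kappa\sum_k\Exp_{\cD_{2\otimes2}}[e_k^2]$, so it suffices to bound the missing block $\sum_k\Exp_{\cD_{2\otimes 2}}[e_k^2]$ by $\poly(M/\sigma)\cdot\kappa\,\Risk(\hthet;\Dtrain)$.

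\textbf{The bilinear completion estimate.} The heart is a per-component statement: for each $k$, if $\Exp_{\cD_{1\otimes1}}[e_k^2],\Exp_{\cD_{1\otimes2}}[e_k^2],\Exp_{\cD_{2\otimes1}}[e_k^2]$ are all small enough, then $\Exp_{\cD_{2\otimes2}}[e_k^2]\le\poly(M/\sigma)\bigl(\Exp_{\cD_{1\otimes1}}[e_k^2]+\Exp_{\cD_{1\otimes2}}[e_k^2]+\Exp_{\cD_{2\otimes1}}[e_k^2]\bigr)$. To prove it I would: (i) set $\Lambda_f := \Exp_{\cD_{1\otimes1}}[\fstk\fstk^\top]$ and $\Lambda_g := \Exp_{\cD_{1\otimes1}}[\gstk\gstk^\top]$, both $\succeq\sigma^2 I$ by \Cref{asm:sing_val_lb}, and deduce $\|\fstk(\delx)\|,\|\gstk(x')\|\le M/\sigma$ from \Cref{asm:bilin_tranduc} (e.g.\ $M^2\ge\Exp_{x'\sim\cD_{\cX,1}}[\hbarstk(\delx,x')^2] = \fstk(\delx)^\top\Lambda_g\fstk(\delx)\ge\sigma^2\|\fstk(\delx)\|^2$); (ii) introduce the least-squares alignment matrices $\Xi := \Exp_{\cD_{\DelX,1}}[\fthetk\fstk^\top]\Lambda_f^{-1}$ and $\Psi := \Exp_{\cD_{\cX,1}}[\gthetk\gstk^\top]\Lambda_g^{-1}$; (iii) using orthogonality of the least-squares residuals of $\fthetk,\gthetk$ against $\fstk,\gstk$ and the lower bounds on $\Lambda_f,\Lambda_g$, show that the block-$(2,1)$ and $(1,2)$ errors control $\Exp_{\cD_{\DelX,i}}[\|\Psi^\top\fthetk - \fstk\|^2]$ and $\Exp_{\cD_{\cX,j}}[\|\Xi^\top\gthetk - \gstk\|^2]$, while the block-$(1,1)$ error controls $\|\Xi^\top\Psi - I\|_\fro^2$ (each up to a $\poly(1/\sigma)$ factor); (iv) observe that the block-$(1,1)$ bound forces $\ker\Psi=\ker\Xi=\{0\}$ --- each kernel dimension contributes at least $\sigma^2$ to $\Exp_{\cD_{\DelX,1}}[\|\Psi^\top\fthetk-\fstk\|^2]$, and $\dim\ker\Psi$ is an integer --- so $\Xi,\Psi$ are invertible and $(\Xi^\top\Psi)^{-1}$ is close to $I$; (v) use the exact identity (valid once $\Xi,\Psi$ are invertible) $\hbarthetk(\delx_2,x_2') = \bigl(\Psi^\top\fthetk(\delx_2)\bigr)^\top(\Xi^\top\Psi)^{-1}\bigl(\Xi^\top\gthetk(x_2')\bigr)$, subtract $\hbarstk(\delx_2,x_2')=\fstk(\delx_2)^\top\gstk(x_2')$, substitute the three approximations from (iii), and bound the resulting residual terms, each of which is a product of a $\poly(M/\sigma)$-bounded factor and one of the three small block errors.

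\textbf{Conclusion.} Summing the per-component estimate over $k$, plugging into the reduction of the first paragraph, and using the hypothesis $\Risk(\hthet;\Dtrain)\le\sigma^2/(4\kappa)$ with $\kappa\ge1$ to certify the smallness conditions required in (iv)--(v), gives $\Risk(\hthet;\Dtest)\le\kappa^2\Risk(\hthet;\Dtrain)\bigl(1+\poly(M/\sigma)\bigr)$; tracking the constants through (iii)--(v) yields the explicit $\kappa^2\bigl(1+64M^4/\sigma^4\bigr)$.

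\textbf{Main obstacle.} The crux is step (iv)--(v): the bilinear factorization carries a gauge freedom, so $\fthetk,\gthetk$ need not individually resemble $\fstk,\gstk$ --- only up to the unknown, a priori possibly ill-conditioned, matrices $\Xi,\Psi$. Ruling out rank-deficiency of $\Xi,\Psi$ (exactly where the \emph{quantitative} smallness of the training risk is used) and then transporting the alignment from the ``seen'' block $1$ to the ``unseen'' block $2$ through the $(2,1)$ and $(1,2)$ errors is the delicate part; the remainder is careful bookkeeping of the $\poly(M/\sigma)$ constants.
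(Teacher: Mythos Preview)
Your reduction (first paragraph) coincides with the paper's exactly; the divergence is in the bilinear completion step. The paper does \emph{not} attempt to align $\fthetk,\gthetk$ with $\fstk,\gstk$. Instead it discretizes: for finite $\Delta\cX,\cX$ it forms probability-weighted block matrices $\bhatM,\bstM\in\R^{2d\times 2d}$ with $(\bhatM_{ij})_{ab}=\sqrt{\sfp_{i,a}\sfq_{j,b}}\,\hbarthetk(\delx_a,x'_b)$ (and likewise for $\bstM$), observes both have rank $\le p$ by the bilinear parametrizations, and invokes the Nystr\"om identity $M_{22}=M_{21}M_{11}^\dagger M_{12}$ (valid for any rank-$p$ matrix) together with a pseudoinverse perturbation bound to control $\|\bhatM_{22}-\bstM_{22}\|_\fro$. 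The continuous case then follows by approximating the four embeddings with simple functions and passing to the limit via dominated convergence.

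Your alignment-matrix route via $\Xi,\Psi$ is a legitimate alternative that works directly in the continuous setting and sidesteps the discretization/limiting machinery. Steps (i)--(iii) are correct (in particular, least-squares orthogonality of $\fthetk-\Xi\fstk$ to $\fstk$ plus $\Lambda_f\succeq\sigma^2 I$ does yield $\|\Xi^\top\Psi-I\|_\fro^2\le\epsilon_{11}^2/\sigma^4$, which I would state explicitly), and the identity in (v) is exact once $\Xi^\top\Psi$ is invertible. The trade-off: the paper's Nystr\"om argument makes the matrix-completion heritage transparent and produces the clean constant from a single pseudoinverse perturbation lemma; in your decomposition the cross term $\fstk^\top\bigl((\Xi^\top\Psi)^{-1}-I\bigr)\gstk$ accrues an extra $M^2/\sigma^2$ on top of $\|\Xi^\top\Psi-I\|$, so matching the exact $64M^4/\sigma^4$ (and matching your invertibility threshold to the stated hypothesis $\Risk\le\sigma^2/(4\kappa)$) is more delicate than ``careful bookkeeping'' suggests. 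The $\mathrm{poly}(\kappa,M/\sigma)$ conclusion, however, is clearly within reach of your sketch.
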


\subsection{Our Proposed Algorithm: Bilinear Transduction}
\label{subsec:algorithm}

Our basic proposal for \oos{} extrapolation is \textbf{bilinear transduction}, depicted in \Cref{alg:unweighted}:
at training time, a predictor $\Bar{h}_{\theta}$ is trained to make predictions for training points $x_i$ drawn from the training set $\Xtrain$ based on their similarity with other points $x_j\in\Xtrain$: $\Bar{h}_{\theta}(x_i-x_j,x_j)$.
The training pairs $x_i, x_j$ are sampled uniformly from $\Xtrain$. 
At test time, for an \oos{} point $x_{\text{test}}$, we first select an anchor point $x_i$ from the training set which has similarity with the test point $x_{\text{test}}-x_i$ that is within some radius $\rho$ of the training similarities $\DXtrain$. We then predict the value for $x_{\text{test}}$ based on the anchor point $x_i$ and the similarity of the test and anchor points: $\bar{h}_{\theta}(x_{\text{test}}-x_i,x_i)$. 

For supervised learning, i.e., the  regression setting, we compute differences directly between inputs $x_i,x_j\in\mathcal{X}$. For the goal-conditioned imitation learning setting, we compute difference between states $x_i,x_j\in\mathcal{X}$ sampled uniformly over demonstration trajectories. At test time, we select an anchor \textit{trajectory} based on the goal, and transduce each anchor state in the anchor trajectory to predict a sequence of actions for a test goal. As explained in Appendix~\ref{app:weighted}, there may be scenarios where it is beneficial to more carefully choose which points to transduce. To this end, we outline a more sophisticated proposal, \textbf{weighted transduction}, that achieves this. Pseudo code for bilinear transduction is depicted in Algorithm~\ref{alg:unweighted}, and for the weighted variant in Algorithm~\ref{alg:weighted} in the Appendix.

Note that we assume access to a representation of $\cX$ for which the subtraction operator captures differences that occur between training points and between training and test points. In Appendix~\ref{app:implementation} we describe examples of such representations as defined in standard simulators or which we extract with dimensionality reduction techniques (PCA). Alternatively, one might potentially extract such representations via self-supervision \citep{kingma2013auto} and pre-trained models \citep{upchurch2017deep}, or compare data points via another similarity function.

\begin{algorithm}[!h]
    \begin{algorithmic}[1]
    \STATE{}\textbf{Input:} \small Distance parameter $\rho$,   training set $(x_1,y_1),\dots,(x_n,y_n)$.
    \STATE{}\textbf{Train:} Train $\btheta$ on loss  $
    \cL(\btheta) = \textstyle\sum_{i=1}^n\sum_{j\ne i} \ell(\hbarthet(x_i -x_j,x_j),y_i)$
    \STATE{}\textbf{Test:} For each new $\xtest$, let $\cI(\xtest) := \{i: \inf_{\delx \in \DXtrain}\|\xtest - x_i - \delx\| \le \rho\}$, and predict
    \begin{align*}
    &y = \hbarthet(\xtest - x_{\mathbf{i}},x_{\mathbf{i}}), \text{~where}
    \,\, \mathbf{i} \sim \mathrm{Uniform}(\cI(\xtest))
    \end{align*}
    \end{algorithmic}
      \caption{Bilinear Transduction } \label{alg:unweighted}
    \end{algorithm}

\section{Experiments}
\label{sec:experiments}
We answer the following questions through an empirical evaluation: 
\textbf{(1)} Does reformulating \oos{} extrapolation as a combinatorial generalization problem allow for extrapolation in a variety of supervised and sequential decision-making problems? 
\textbf{(2)} How does the particular choice of training distribution and data-generating function affect performance of the proposed technique?
\textbf{(3)} How important is the choice of the low-rank bilinear function class for generalization?
\textbf{(4)} Does our method scale to high dimensional state and action spaces? 
We first analyze \oos extrapolation on analytical domains and then on more real world problem domains depicted in Fig~\ref{fig:eval_domains}. 
\subsection{Analyzing \oos{} extrapolation on analytical problems}
\label{sec:analysis}
We compare our method on regression problems generated via 1-D analytical functions (described in Appendix~\ref{app:analytic-functions}) against standard neural networks with multiple fully-connected layers trained and tested in ranges $[20,40]$ and $[10,20]\cup[40,50]$ (with the exception of Fig~\ref{subfig:polynomial} trained in $[-1,1]$ and tested in $[-1.6,-1]\cup[1,1.6]$).
We use these domains to gain intuition about the following questions:

\textbf{What types of problems satisfy the assumptions for extrapolation?} While we outlined a set of conditions under which extrapolation is guaranteed, it is not apparent which problems satisfy these assumptions. To understand this better, we considered learning functions with different structure: a periodic function with mixed periods (Fig~\ref{subfig:periodic}), 
a sawtooth function (Fig~\ref{subfig:sawtooth})
and a polynomial function (Fig~\ref{subfig:polynomial}).  
Standard deep networks (yellow) fit the training points well (blue), but fail to extrapolate to \oos{} inputs (orange). In comparison, our approach (pink) accurately extrapolates on periodic functions but is much less effective on polynomials. This is because the periodic functions have symmetries which induce low rank structure under the proposed reparameterization.

\begin{figure}[!h]
    \centering
    \includegraphics[width=0.91\linewidth]{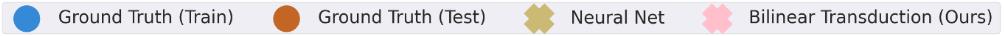}
    \begin{subfigure}{.3\textwidth}
      \centering
        \includegraphics[width=\linewidth]{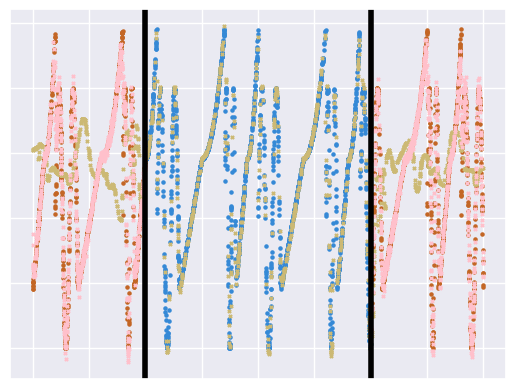}
      \caption{\footnotesize{Mixed periodic function}}
      \label{subfig:periodic}
    \end{subfigure}
    \begin{subfigure}{.3\textwidth}
      \centering
    \includegraphics[width=\linewidth]{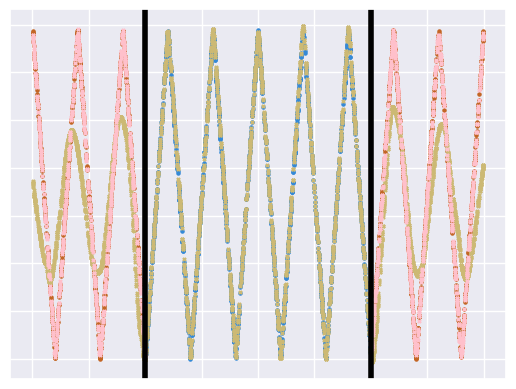}
      \caption{\footnotesize{Sawtooth function}}
    \label{subfig:sawtooth}
    \end{subfigure} 
    \begin{subfigure}{.3\textwidth}
      \centering
        \includegraphics[width=\linewidth]{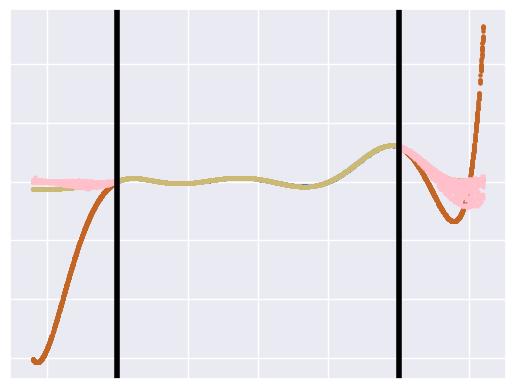}
          \caption{\footnotesize{Degree-8 polynomial}}
    \label{subfig:polynomial}
    \end{subfigure}
    \caption{\footnotesize{Bilinear transduction behavior on 1-D regression problems. Bilinear transduction performs well on functions with repeated structure, whereas they struggle on arbitrary polynomials. Standard neural nets fail to extrapolate in most settings, even when provided periodic activations~\citep{tancik20fourier}.}}
    \label{fig:diff_problems}
\end{figure}

\textbf{Going beyond known inductive biases.}  Given our method extrapolates to periodic functions in Fig~\ref{fig:diff_problems} (which displays shift invariance), one might argue that a similar extrapolation can be achieved by building in an inductive bias for periodicity / translation invariance. In Fig~\ref{fig:equivariant_function}, we show that bilinear transduction in fact is able to extrapolate even in cases that the ground truth function is not simply translation invariant, but is translation equivariant, showing that bilinear transduction can capture equivariance. Moreover, bilinear transduction can in fact go beyond equivariance or invariance. To demonstrate broader generality of our method, we consider a piecewise periodic function that also grows in magnitude (Fig~\ref{fig:beyond_equivariance}). This function is neither invariant nor equivariant to translation, as the group symmetries in this problem do not commute. The results in Fig~\ref{fig:beyond_equivariance} demonstrate that while the baselines fail to do so (including baselines that bake in equivariance (green)), bilinear transduction successfully extrapolates. The important thing for bilinear transduction to work is when comparing training instances, there is a simple (low-rank) relationship between how their labels are transformed. While it can capture invariance and equivariance, as Fig~\ref{fig:beyond_equivariance} shows, it is more general. 


\textbf{How does the relationship between the training distribution and test distribution affect extrapolation behavior?}
We try and understand how the range of test points that can be extrapolated to, depends on the training range. We show in Fig~\ref{fig:train_dependence} that for a particular ``width" of the training distribution (size of the training set), \oos{} extrapolation only extends for one ``width" beyond the training range since the conditions for $\DelX$ being in-support are no longer valid beyond this point. 

\begin{minipage}[c]{\linewidth}
    \includegraphics[width=\linewidth]{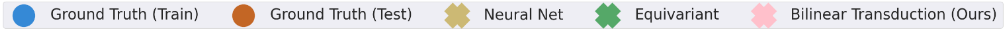}
\end{minipage}
\begin{minipage}[c]{0.3\linewidth}
    \centering
    \includegraphics[width=\linewidth]{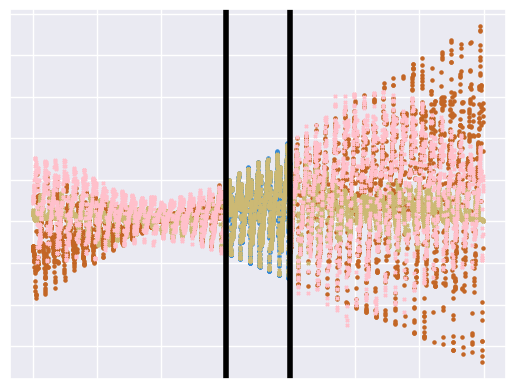}
    \captionof{figure}{\footnotesize{Performance of transductive predictors as test points go more and more \oos{}. Predictions are accurate for one ``data-width" outside training data.}}
    \label{fig:train_dependence}
\end{minipage}
\hfill
\begin{minipage}[c]{0.3\linewidth}
    \centering
    \includegraphics[width=\linewidth]{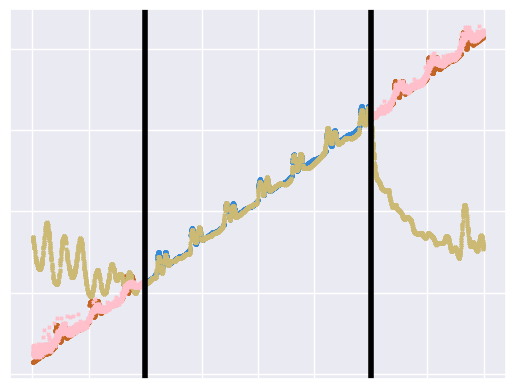}
    \captionof{figure}{\footnotesize{Prediction on function that displays affine equivariance. Bilinear trandsuction is able to capture equivariance without this being explicitly encoded.}}
    \label{fig:equivariant_function}
\end{minipage}
\hfill
\begin{minipage}[c]{0.3\linewidth}
    \centering
    \includegraphics[width=\linewidth]{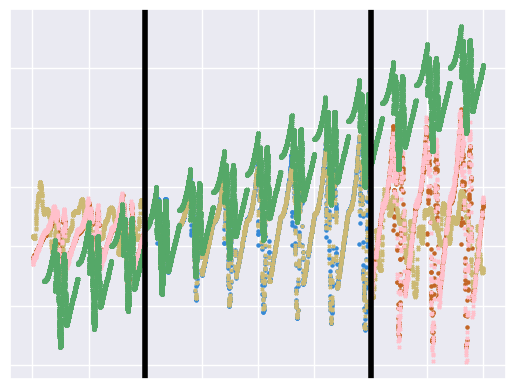}
    \captionof{figure}{\footnotesize{Prediction on function that is neither invariant nor equivariant. Bilinear transduction is able to extrapolate while an equivariant predictor fails.}}
    \label{fig:beyond_equivariance}
\end{minipage}

\subsection{Analyzing \oos{} extrapolation on larger scale decision-making problems}

To establish that our method is useful for complex and real-world problem domains, we also vary the complexity (i.e., working with high-dimensional observation and action spaces) and the learning setting (regression and sequential decision making). 

\begin{figure}[!h]
    \centering
    \includegraphics[width=0.2\linewidth]{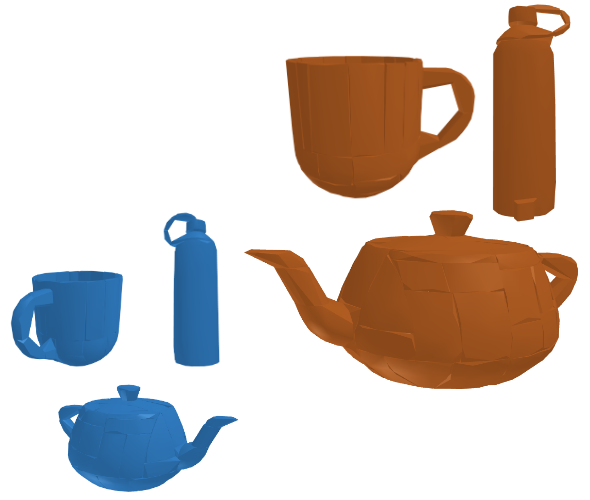}
    \hfill
    \includegraphics[width=0.25\linewidth]{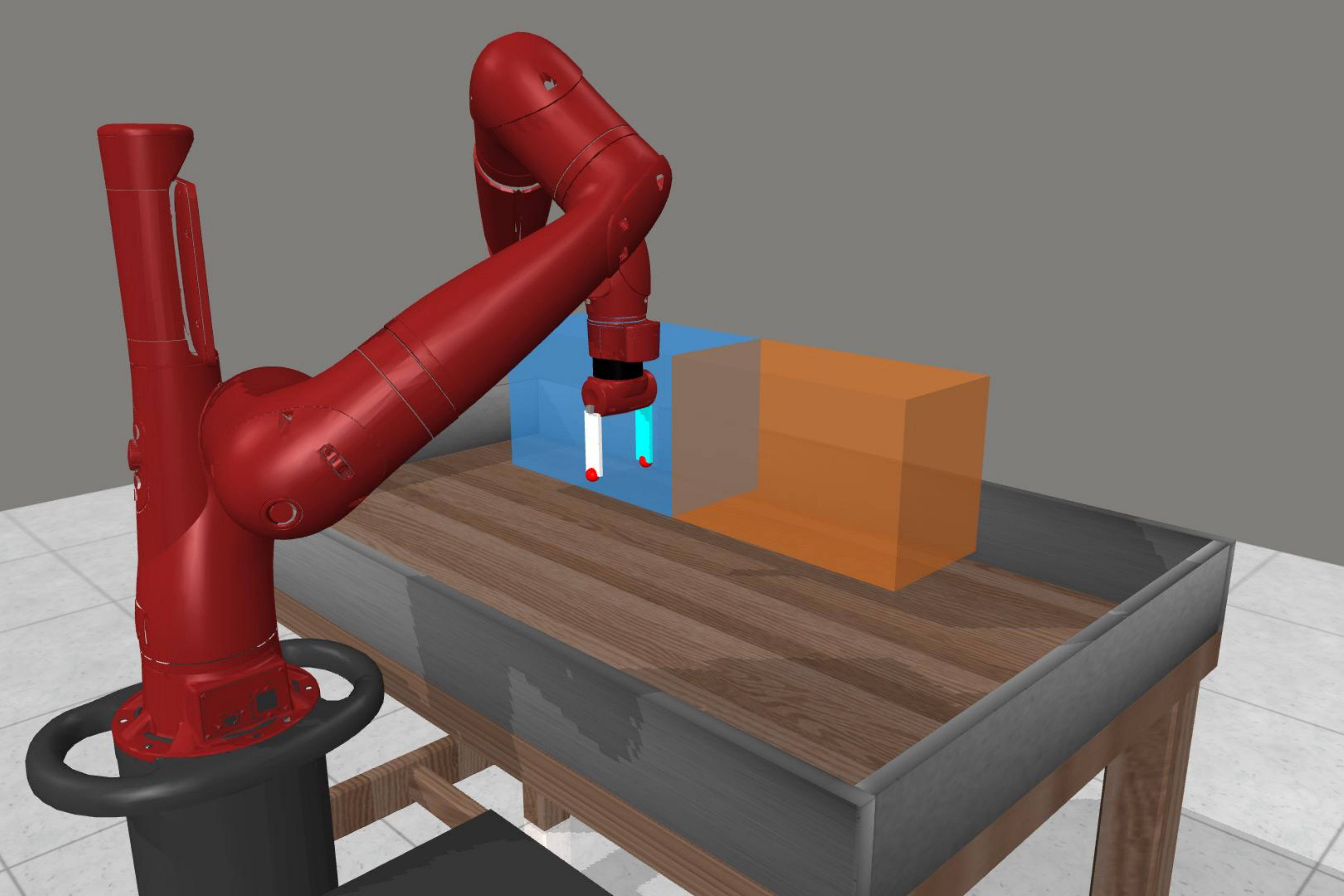}
    \hfill
    \includegraphics[width=0.25\linewidth]{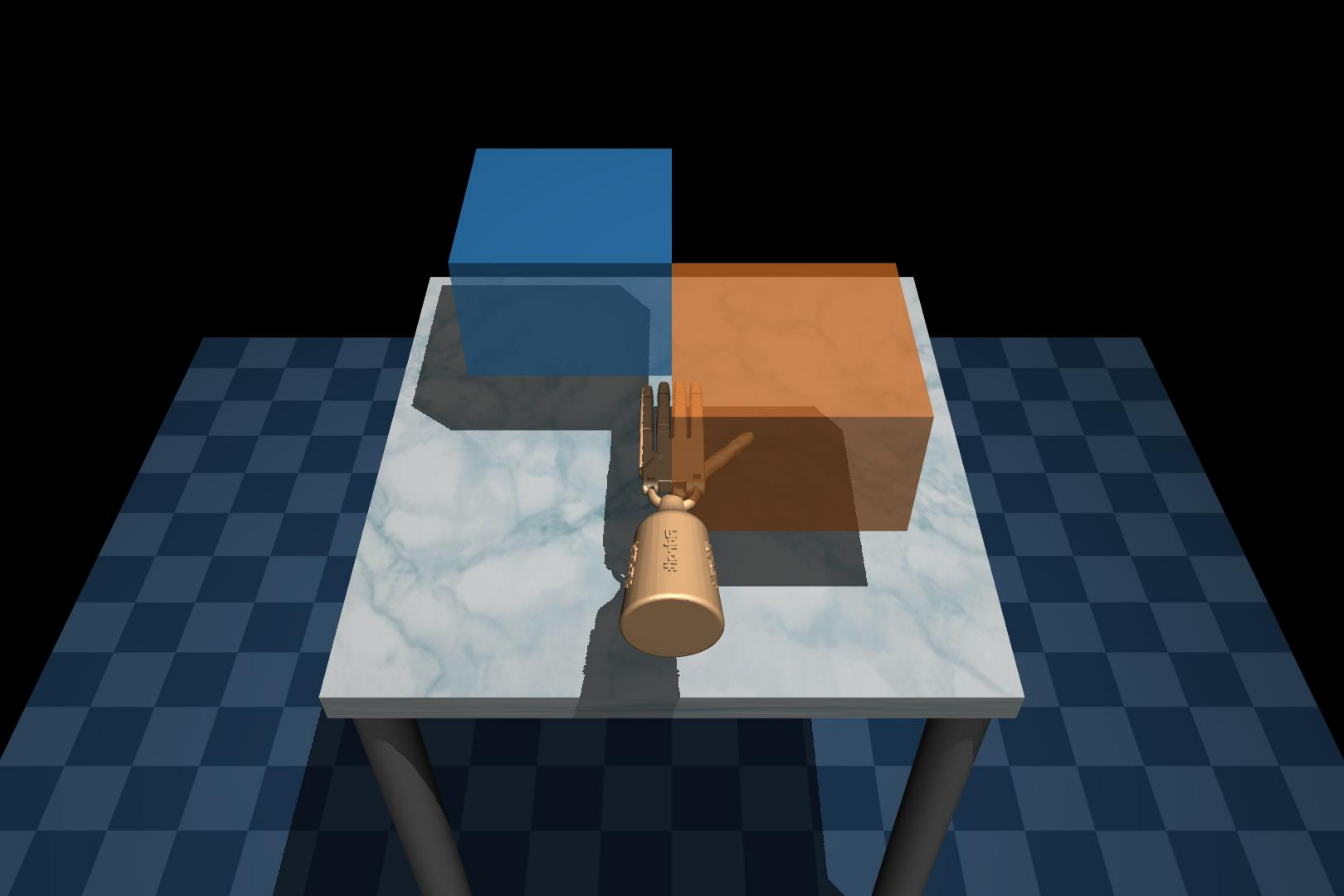}
    \hfill
    \includegraphics[width=0.24
    \linewidth]{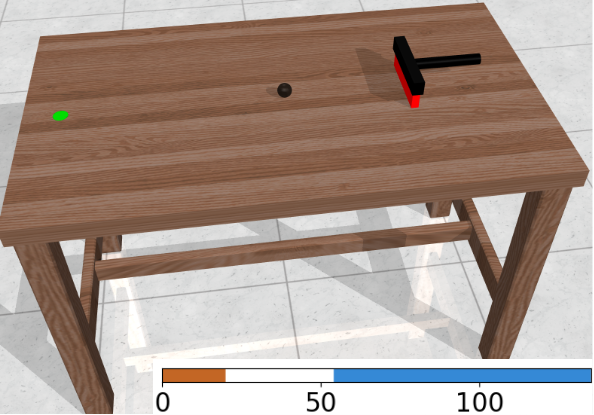}  
    \caption{\footnotesize{Evaluation domains at training (blue) and \oos{} (orange). (\textbf{Left to Right}:) grasp prediction for various object orientations and scales, table-top robotic manipulation for reaching and pushing objects to various targets, dexterous manipulation for relocating objects to various targets, slider control for striking a ball of various mass.
    }}
    \label{fig:eval_domains}
    \vspace{-0.3cm}
\end{figure}

\paragraph{Baselines.} To assess the effectiveness of our proposed scheme for extrapolation via transduction and bilinear embeddings, we compare with the following non-transductive baselines.
\emph{\underline{Linear Model}}: linear function approximator to understand whether the issue is one of overparameterization, and whether linear models would solve the problem.
\emph{\underline{Neural Networks}}: typical training of overparameterized neural network function approximator via standard empirical risk minimization. 
\emph{\underline{Alternative Techniques with Neural Networks (DeepSets)}}: an alternative architecture for combining multiple inputs (DeepSets \citep{zaheer2017deep}), that are meant to be permutation invariant and encourage a degree of generalization between different pairings of states and goals. 
Finally, we compare with a \emph{\underline{Transductive Method without a Mechanism for Structured Extrapolation (Transduction)}}: transduction with no special structure for understanding the impact of bilinear embeddings and the low-rank structure. This baseline uses reparameterization, but just parameterizes $\hbarthet$ as a standard neural network. We present additional comparison results introducing periodic activations~\citep{tancik20fourier} in Table~\ref{tab:fourier} in the Appendix. 


\textbf{\oos{} extrapolation in sequential decision making.}   
Table~\ref{tab:scaled-problems} contains results for different types of extrapolation settings. More training and evaluation details are provided in Appendices~\ref{app:domain_details} and \ref{app:training_details}.
\begin{itemize}[wide, labelindent=0pt, labelwidth=0pt, itemsep=-0.1em]
\vspace{-0.8em}
\item \textit{\underline{Extrapolation to \oos Goals:}} We considered two tasks from the Meta-World benchmark~\citep{yu2020meta} where a simulated robotic agent needs to either reach or push a target object to a goal location (column 2 in Fig~\ref{fig:eval_domains}). Given a set of expert demonstrations reaching/pushing to goals in the blue box ($[0, 0.4]\times[0.7, 0.9]\times[0.05, 0.3]$ and $[0, 0.3]\times[0.5, 0.7]\times\{0.01\}$), we tested generalization to \oos goals in the orange box ($[-0.4, 0]\times[0.7, 0.9]\times[0.05, 0.3]$ and $[-0.3, 0]\times[0.5, 0.7]\times\{0.01\}$), using a simple extension of our method described in Appendix~\ref{app:training_details} to perform transduction over trajectories rather than individual states. We quantify performance by measuring the distance between the conditioned and reached goal. 
Results in Table~\ref{tab:scaled-problems} show that on the easy task of reaching, training a linear or typical neural network-based predictor extrapolate as well as our method. However, for the more challenging task of pushing an object, our extrapolation is better by an order of magnitude than other baselines, showing the ability to generalize goals in a completely different direction. 

\item \textit{\underline{Extrapolation with Large State and Action Space:}} Next we tested our method on grasping and placing an object to \oos goal-locations in $\R^{3}$ with an anthropomorphic ``Adroit" hand with a much larger action ($\R^{30}$) and state ($\R^{39}$) space (column 3 in Fig~\ref{fig:eval_domains}). Results confirm that bilinear transduction scales up to high dimensional state-action spaces as well and is naturally able to grasp the ball and move it to new target locations ($[-0.3, 0]\times[-0.3, 0]\times[0.15, 0.35]$) after trained on target locations in ($[0, 0.3]\times[0, 0.3]\times[0.15, 0.35]$).
These results are better appreciated in the video attached with the supplementary material, but show the same trends as above, with bilinear transduction significantly outperforming standard inductive methods and non-bilinear architectures.

\item \textit{\underline{Extrapolation to \oos Dynamics:}} Lastly, we consider problems involving extrapolation not just in location of the goals for goal-reaching problems, but across problems with varying dynamics. Specifically, we consider a slider task where the goal is to move a slider on a table to strike a ball such that it rolls to a fixed target position (column 4 in Fig~\ref{fig:eval_domains}). The mass of the ball varies across episodes and is provided as input to the policy. We train and test on a range of masses ($[60, 130]$ and $[5, 15]$). We find that bilinear transduction adjusts behavior and successfully extrapolates to new masses, showing the ability to extrapolate not just to goals, but also to varying dynamics.
\end{itemize}

Importantly, bilinear transduction is significantly less prone to variance than standard inductive or permutation-invariant architectures. This can be seen from a heatmap over various training architectures and training seeds as shown in Fig~\ref{fig:heatmap} and Table~\ref{tab:3seeds} in Appendix~\ref{app:imitation}. While other methods can sometimes show success, only bilinear transduction is able to consistently accomplish extrapolation.

\textbf{\oos{} extrapolation in higher dimensional regression problems.}  
To scale up the dimension of the input space, we consider the problem of predicting valid grasping points in $\mathbb{R}^3$ from point clouds of various objects (bottles, mugs, and teapots) with different orientations, positions, and scales (column 1 in Fig~\ref{fig:eval_domains}). 
At training and test, objects undergo $z$-axis orientation ($[0,1.2\pi]$ and $ [1.2\pi, 2\pi]$), translation ($[0, 0.5]\times[0, 0.5]$ and $[0.5, 0.7]\times[0.5, 0.7]$) and scaling ($[0.7, 1.3]$ and $[1.3, 1.6]$). 
In this domain, we represent entire point clouds by a low-dimensional representation of the point cloud obtained via PCA. 
We consider situations, where we train on individual objects where the training set consists of various rotations, translations, and scales, and standard bilinear transduction is applied. We also consider situations where the objects are not individually identified but instead, a single grasp point predictor is trained on the entire set of bottles, mugs, and teapots. We assume access to category labels at training time, but do not require this at test time. For a more in-depth discussion of this assumption and a discussion of how this can be learned via \emph{weighted}-transduction, we refer readers to Appendix~\ref{app:weighted}. 
While training performance is comparable in all instances, we find that extrapolation behavior is significantly better for our method.  This is true both for single object cases as well as scenarios where all objects are mixed together (Table~\ref{tab:scaled-problems}). These experiments show that bilinear transduction can work on feature spaces of high-dimensional data such as point clouds. Note that while we only predict a single grasp point here, a single grasp point prediction can be easily generalized to predict multiple keypoints instead ~\citep{manuelli19kpam}, enabling success on more challenging control domains. For further visualizations and details, please see Appendices~\ref{app:additional-results} and \ref{app:implementation}.  






\begin{table}[t!]
  \caption{\footnotesize{
  Mean and standard deviation over prediction (regression) or final state (sequential decision making) error for \oos{} samples and over a hyperparameter search. 
  }
  }
  \label{tab:scaled-problems}
  \centering
 \scalebox{0.8}{
  \begin{tabular}{llllllll}
    Task & Expert 
     & Linear & Neural Net & DeepSets & Transduction & Ours \\
    \midrule
    Mug & & $0.068\pm0.013$ & $0.075\pm0.046$ & & $0.055\pm0.043$ & $\mathbf{0.013\pm0.007}$ \\   
    \midrule
    Bottle && $0.026\pm0.005$ & $0.05\pm0.051$ && $0.027\pm0.016$ & $\mathbf{0.008\pm0.004}$ \\  
    \midrule
    Teapot && $0.095\pm0.02$ & $0.101\pm0.078$ && $0.043\pm0.02$ & $\mathbf{0.022\pm0.014}$ \\  
    \midrule
    All && $0.143\pm0.116$ & $0.118\pm0.075$ && $0.112\pm0.08$ & $\mathbf{0.018\pm0.012}$ \\     
    \midrule
    \midrule    
    %
    Reach & $0.006\pm0.008$ & $0.007\pm0.006$ & $0.036\pm0.054$ & $0.19\pm0.209$ & $0.036\pm0.048$ & $\mathbf{0.007\pm0.006}$ \\
    %
    %
    %
    %
    \midrule
    Push & $0.012\pm0.001$ & $0.258\pm0.063$ & $0.258\pm0.167$ & $0.199\pm0.114$ & $0.159\pm0.116$ & $\mathbf{0.02\pm0.017}$\\
    %
    %
    %
    %
    \midrule
    Slider & $0.105\pm0.066$ & $0.609\pm0.07$ & $0.469\pm0.336$ & $0.274\pm0.262$ & $0.495\pm0.339$ & $\mathbf{0.149\pm0.113}$ \\  
    %
    %
    \midrule
    Adroit & $0.035\pm0.015$ & $0.337\pm0.075$ & $0.331\pm0.203$ & $0.521\pm0.457$ & $0.409\pm0.32$ & $\mathbf{0.147\pm0.117}$ \\    
    %
  \end{tabular}
  }
\vspace{-0.3cm}
\end{table}




\vspace{-0.2cm}
\section{Abridged Related Work}
\vspace{-0.2cm}


Generalization from training data to test data of the same distribution has been studied extensively, both practically   \citep{simonyan2014very} and theoretically \citep{vapnik2006estimation,bousquet2002stability,bartlett2002rademacher}. 
Our focus in on performance on distributions with examples which may not be covered by the training data, as described formally in Section~\ref{sec:prelim}. Here we provide a discussion of the most directly related approaches, and defer an extended discussion to \Cref{app:unabridged_related}. Even more so than generalization, extrapolation necessitates leveraging structure in both the data and the learning algorithm.  Along these lines, past work has focused on \emph{structured neural networks}, which hardcode such symmetries as equivariance  \citep{cohen2016group, simeonov2022neural}, Euclidean symmetry \citep{smidt2021euclidean} and periodicity \citep{abu2021periodic, parascandolo2016taming}  into the learning process. Other directions have focused on general \emph{architectures} which seem to exhibit combinatorial generalization, such as transformers \citep{vaswani2017attention}, graph neural networks \citep{cappart2021combinatorial} and bilinear models \citep{hong2021bi}.


In this work, we focus on bilinear architectures with what we term a \emph{transductive parametrization.} We demonstrate that this framework can in many cases learn the symmetries (e.g. equivariance and periodicity) that structured neural networks harcode for, and achieve extrapolation in some regimes in which these latter methods cannot. A bilinear model with low inner dimension is equivalent to enforcing a low-rank constraint on one's predictions. Low-rank models have commanded broad popularity for matrix completion \citep{mnih2007probabilistic,mackey2015distributed}. And whereas early analysis focused on the \emph{missing-at-random} setting \citep{candes2009exact,recht2011simpler,candes2010matrix} equivalent to classical in-distribution statistical learning, we adopt more recent perspectives on missing-not-at-random data, see e.g., \citep{shah2020sample,agarwal2021causal,athey2021matrix}, which tackle the combinatorial-generalization setting described in \Cref{ssec:bilinear-ooc}. 

In the classical \emph{transduction} setting \citep{joachims2003transductive,gammerman1998transduction,cortes2006transductive}, the goal is to make predictions on a known set of \emph{unlabeled} test examples for which the features are known; this is a special case of semi-supervised learning.  We instead operate in a standard supervised learning paradigm, where test labels \emph{and} features are only revealed at test time.
Still, we find it useful to adopt a ``transductive parametrization'' of our predictor, where instead of compressing our predictor into parameters alone, we express predictions for labels of one example as a function of other, labeled examples. An unabridged version of related work is in  \Cref{app:unabridged_related}.


\vspace{-0.3cm}
\section{Discussion}

Our work serves as an initial study of the circumstances under which problem structure can be both \emph{discovered} and \emph{exploited} for extrapolation, combining parametric and non-parametric approaches.
The main limitations of this work are assumptions regarding access to a representation of $\cX$ and similarity measure for obtaining $\Delta x$. Furthermore, we are only guaranteed to extrapolate to regions of $\cX$ that admit $\Delta x$ within the training  distribution.
A number of natural questions arise for further research. 
First, can we classify which set of real-world domains fits our assumptions, beyond the domains we have demonstrated? Second, can we \textit{learn} a representation of $\cX$ in which differences $\Delta x$ are meaningful for high dimensional domains? And lastly, are there more effective schemes for selecting anchor points? For instance, analogy-making for anchor point selection may reduce the complexity of $\hbarthet$, guaranteeing low-rank structure.




\newpage
\section{Ethics Statement}


\paragraph{Bias} In this work on extrapolation via bilinear transduction, we can only aim to extrapolate to data that shifts from the training set in ways that exist within the training data. Therefore the performance of this method relies on the diversity within the training data, allowing some forms of extrapolation but not to ones that do not exist within the training distribution.
\paragraph{Dataset release} We provide in detail in the Appendix the parameters and code bases and datasets we use to generate our data. In the future, we plan to release our code base and expert policy weights with which we collected expert data.
\section{Reproducibility Statement}


We describe our algorithms in Section~\ref{subsec:algorithm} and the complete proof of our theoretical results and assumptions in Appendix~\ref{app:theory}.
Extensive implementation details regarding our algorithms, data, models, and optimization are provided in Appendix~\ref{app:implementation}.
In addition, we plan in the future to release our code and data.
\section{Acknowledgments}

We thank Anurag Ajay, Tao Chen, Zhang-Wei Hong, Jacob Huh, Leslie Kaelbling, Hannah Lawrence, Richard Li, Gabe Margolis, Devavrat Shah and Anthony Simeonov for the helpful discussions and feedback on the paper. We are grateful to MIT Supercloud and the Lincoln Laboratory Supercomputing Center for providing HPC resources.
This research was also partly sponsored by the DARPA Machine Common Sense Program, MIT-IBM Watson AI Lab, the United States Air Force Research Laboratory and the United States Air Force Artificial Intelligence Accelerator and was accomplished under Cooperative Agreement Number FA8750-19- 2-1000. The views and conclusions contained in this document are those of the authors and should not be interpreted as representing the official policies, either expressed or implied, of the United States Air Force or the U.S. Government. The U.S. Government is authorized to reproduce and distribute reprints for Government purposes, notwithstanding any copyright notation herein.
\newpage  
\bibliographystyle{iclr2023_conference}
\bibliography{main_iclr}
\newpage 
\tableofcontents
\newpage

\appendix
 
\section{Unabridged Related Work}
\label{app:unabridged_related}
Here we discuss various prior approaches to extrapolation in greater depth, focusing on areas not addressed in our abridged discussion in the main text. 

\paragraph{Approaches which encode explicit structure.} One popular approach to designing networks that extrapolate to novel data has been the \emph{equivariant neural networks}, first proposed by \cite{cohen2016group}. The key idea is that, if it is known there is a group $G$ which acts on both the input and target domains of the predictor, and if it is understood that the true predictor must satisfy the \emph{equivariance property} 
\begin{align*}
h_{\star}(g \cdot x) = g \cdot h_{\star}(x)
\end{align*}
for all $g \in G$ (here $g\cdot ()$ denotes group action), then one can explicitly encode for predictors  $h_{\theta}$ satisfying the same identity. Similarly, one can encode for invariances, where $h_{\star}(g \cdot x) = h_{\star}(x)$ for all $g \in G$.

\cite{deng21vectorneuron} extended the  original equivariance framework to accommodate situations where the group $G$ corresponds to rotations ($SO(3)$), and \cite{simeonov2022neural} proposed neural descriptor fields which handle rigid-body transformations $SE(3)$).  For a broader review on how other notions of symmetry can be encoded into machine learning settings, consult \cite{smidt2021euclidean}, and  \cite{abu2021periodic, parascandolo2016taming}  for how periodic structure can be explicitly built in. We remark that in many of these approaches, the group/symmetry must be known ahead of time. While attempting to learn global \citep{benton2020learning} or local \citep{dehmamy2021automatic} equivariances/invariances, there are numerous forms of structure that can be represented as a group symmetry, and for which these methods do not apply. As we show in our experiments, there are a number of group structures which are \emph{not} captured by equivariance (such as Fig~\ref{fig:beyond_equivariance}), which bilinear transduction captures. 

\paragraph{Architectures favorable to extrapolation. } There are numerous learning architectures which are purported to be favorable to extrapolation to novel-domains. For example, graph neural networks (GNNs) have been use to facilitate reasoning behavior in combinatorial environments \citep{battaglia2018relational,cappart2021combinatorial}, and the implicit biases of GNNs have received much theoretical study in recent years (see \cite {jegelka2022theory} and the references therein). Another popular domain for extrapolations has been sequence-to-sequence modeling, especially in the context of natural language processing. Here, the now-renowned Transformer architecture due to \cite {vaswani2017attention}, as well as its variants based on the same ``attention mechanism'' (e.g. \cite{kitaev2020reformer}) have become incredibly popular. Supervised by a sufficiently diverse set of tasks and staggering amount of data, these have shown broad population in various language understanding tasks \citep{devlin2018bert}, text-to-image generation \citep{ramesh2021zero}, and even quantitative reasoning \citep{lewkowycz2022solving}. Attention-based models tend to excel best in tasks involving language and when trained on massively large corpora; their ability to extrapolate in manipulation tasks \citep{zhou2022policy} and in more modest data regimes remains an area of ongoing research. Moreover, while recent research has aimed to study their (in-distribution) generalization properties \citep{edelman2022inductive}, their capacity for ``reasoning'' more generally remains mysterious \citep{zhang2022unveiling}. 

Lastly, a line of research has studied various bilinear models (e.g. \cite{hong2021bi,shah2020sample}), motivated by the extension of literature on matrix factorization discussed in our abridged related work. However, these methods require additional fine tuning when applied to novel goals and do not display zero shot extrapolation. Moreover, \cite{shah2020sample} requries a discretization of the state-action space in order to formulate the matrix completion problem, whereas bilinear transduction shows how the results hold in continuous bilinear form, without the need for discretization. Along similar lines, the Deep-Sets architecture of \cite{zaheer2017deep} aims for combinatorial extrapolation by embedding tokens of interest in a latent vector space on which addition operators can be defined. \cite{zhou2022policy} compares the Deep Sets and Transformers approaches as policy architectures in reinforcement learning, finding that neither architecture uniformly outperforms the other.

\paragraph{Distributional Robustness.} One popular approach to out-of-distribution learning has been distributional robustness, which seeks predictors that perform on a family of ``nearby'' shifted distributions \cite{sinha18dro, rahimian19dro}. These approaches are well suited to OOD settings where the test distributions have the same support, but have differing (but boundedly different) probability densities.

\paragraph{Meta-Learning and Multi-Task Learning:} Meta-learning and multi-task learning methods aim to learn policies/predictors that can either zero-shot or very quickly generalize to new problems~\cite{thrunmetalearn, finn2017model, santoro16mann, vinyals16matching, kalashnikov21mtopt, crawshaw20surveymtl, rosenbaum18mtl, caruana97mtl}. These methods however usually make strict distributional assumptions about the training and test distribution of tasks being the same. So while the particular data distribution may be different, the meta-level assumption is still an in-distribution one \cite{fallah20mamltheory}. \cite{yin2019memorization} shows that under distribution shift, meta-learning methods can fail dramatically, and this problem will be exacerbated when supports shift as well. Multi-task learning methods often make the assumption that training on some set of training tasks can provide good pre-training before finetuning on new tasks ~\cite{julian20neverstoplearning, su22mtdialogue, meftah-etal-2020-multi}. However, it is poorly understand how these tasks actually relate to each other and the study has been largely empirical. 

\paragraph{Generalization in Reinforcement Learning and Imitation Learning:} Reinforcement learning problems are unique in that they are sequential and interactive. Distribution shift may not just be across different MDPs but within the states of a single MDP as well ~\cite{filos20shift, kirk21surveygen, ghosh21epistemic}. While we do not explicitly focus on the reinforcement learning setting, our results are largely in the imitation learning setting. Moreover, we are not looking at the finetuning setting but rather evaluating zero-shot performance on out of support goals, dynamics and arbitrary contexts. While certain works ~\cite{simeonov2022neural, wang22equivariant} do explore ideas of equivariance and invariance in RL and imitation learning, they are largely restricted to the either the $SE(3)$ or  $SO(2)$ groups. Moreover, as we discussed in our experiments, bilinear transduction is able to capture more complex notions than equivariance. 

The empirical evaluations we perform are in a large part on domains in goal conditioned reinforcement learning. Most goal conditioned RL works do not consider extrapolation beyond the range of training goals, but rather standard statistical generalization amongst training goals ~\cite{kaelbling93goals, Andrychowicz17HER, nair18rig}. While certain architectures ~\cite{hong2021bi} do seem to show some level of extrapolation, they still require significant finetuning. We show that with low-rank assumptions under reparameterization, zero-shot extrapolation is possible. 

Our work can also loosely be connected to analogy making techniques in machine learning ~\cite{reed15analogy, ichien21analogy, estruch22bisim, mitchell21analogy}. We formalize this notion in the context of out-of-support extrapolation and show that transduction allows for analogy making under low-rank structure assumptions.

\newpage

\newcommand{\sfp}{\mathsf{p}}
\newcommand{\sfq}{\mathsf{q}}
\newcommand{\hhat}{\hat{h}}

\section{Theoretical Results}\label{app:theory}

This appendix is organized into four parts.
\begin{itemize}
    \item Appendix~\ref{sec:gen_bounded_dens} introduces the bounded density condition, and reproduces a folklore guarantee for extrapolation when one distribution has bounded density with respect to the other.
    \item Appendix~\ref{sec:extrap_matrix_completion} provides a basic guarantee for extrapolation in the context of missing-not-at-random matrix completion depicted in Fig~\ref{fig:matrix_illus}, based on results in \cite{shah2020sample}.
    \item Appendix~\ref{sec:extra_comb_support} formalizes a notion of bounded combinatorial density ratio (\Cref{defn:kap_comb_density_general}), in terms of which we can establish out-of-combination extrapolation guarantees by leveraging Appendix~\ref{sec:extrap_matrix_completion}.
    \item Finally, Appendix~\ref{sec:trans_extrap_bound} applies the results of the previous section, stating and proving formal guarantees for our proposed transductive predictors. 
\end{itemize} 

\subsection{Generalization under bounded density ratio}
\label{sec:gen_bounded_dens}

The following gives a robust, quantitative notion of when one distribution is in the support of another. For generality, we state this condition in terms of general positive measures $\mu_1,\mu_2$, which need not be normalized and sum to one.

\begin{definition}[$\kappa$-bounded density ratio]\label{defn:density_ratios} Let $\mu_1,\mu_2$ be two measures over a space $\Omega$. We say $\mu_1$ has $\kappa$-bounded density with respect to $\mu_2$, which we denote $\mu_1 \ll_{\kappa} \mu_2$, if for all measurable event\footnote{For simplicity, we omit concrete discussion of measurability concerns throughout.} $A \subset \Omega$, $\mu_1[A] \le \kappa \mu_2[A]$.
\end{definition}
Stating \Cref{defn:density_ratios} for general probability affords us the flexibility to write example, $\Pr_1 \ll_{\kappa} \Pr_2 + \Pr_3$, as $\Pr_2 + \Pr_3$ is a nonnegative measure with total mass $1+1 = 2$.

\begin{remark}[Connection to concentrability] The parameter $\kappa$ is known in the off-policy reinforcement learning literature as the \emph{concentratability} coefficient (see, e.g. \cite{munos2008finite}), and appears in controlling the performance of a rollout policy $\pi_1$  trained on a behavior policy $\pi_2$ by asserting that $\Pr_{\pi_1}\ll_{\kappa} \Pr_{\pi_2}$, where $\Pr_{\pi_i}$ is, for example, a state-action visitation probability under $\pi_i$. 
\end{remark}
\begin{remark}[Density nomenclature]
The nomenclature ``density'' refers to the alternative definition in terms of Radon-Nikodym derivatives. To avoid technicalities, this can be best seen when $\mu_1$ and $\mu_2$ are continuous densitives over $\Omega = \R^d$ with densities $p_{1}(\cdot)$ and $p_2(\cdot)$; e.g. for $i \in \{1,2\}$. $\mu_i[A]= \int_{x \in A} p_i(x)\rmd x$. Then $\mu_1 \ll_{\kappa} \mu_2$ is equivalent to 
\begin{align*}
\sup_{x} \frac{p_1(x)}{p_2(x)} \le \kappa.
\end{align*} 
\end{remark}

    The following lemma motivates the use of \Cref{defn:density_ratios}. Its proof is standard but included for completeness.
    \begin{lemma}\label{lem:change_of_measure} Let $\mu_1,\mu_2$ be measures on the same measurable space $\Omega$, and suppose that $\mu_2 \ll_{\kappa} \mu_1$.  Then, for any nonnegative function $\phi$, $\mu_2[\phi] \le \kappa \mu_1[\phi]$.\footnote{Here, $\mu[\phi] := \int \phi(\omega) \rmd \mu(\omega)$ denotes  the integration with respect to $\mu$.} In particular, if $\Dtest \ll_{\kappa} \Dtrain$, then as long as our loss function $\ell(\cdot,\cdot)$ is nonnegative,
    \begin{align*}
    \Risk(\hthet;\Dtest) \le \kappa \Risk(\hthet;\Dtrain).
    \end{align*}
    Thus, up to a $\kappa$-factor, $\Risk(\hthet;\Dtest)$ inherits any in-distribution generalization guarantees for $\Risk(\hthet;\Dtrain)$.
    \end{lemma}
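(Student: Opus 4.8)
The plan is the textbook change-of-measure argument, carried out in the standard three stages. First I would verify the claim for indicator functions: if $\phi = \ind{A}$ for a measurable set $A \subset \Omega$, then $\mu_2[\phi] = \mu_2[A] \le \kappa\,\mu_1[A] = \kappa\,\mu_1[\phi]$ directly from the hypothesis $\mu_2 \ll_{\kappa}\mu_1$. By linearity of integration and nonnegativity of the coefficients, the inequality $\mu_2[\phi] \le \kappa\,\mu_1[\phi]$ then extends to every nonnegative simple function $\phi = \sum_i c_i \ind{A_i}$ with $c_i \ge 0$.

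Second, for a general nonnegative measurable $\phi$, I would take an increasing sequence of nonnegative simple functions $\phi_n \uparrow \phi$ pointwise (the usual dyadic truncations), apply the simple-function bound $\mu_2[\phi_n] \le \kappa\,\mu_1[\phi_n]$ for each $n$, and pass to the limit via the monotone convergence theorem on both sides, concluding $\mu_2[\phi] = \lim_n \mu_2[\phi_n] \le \kappa \lim_n \mu_1[\phi_n] = \kappa\,\mu_1[\phi]$. This proves the first assertion of the lemma in full generality, with the convention that the bound is vacuous when the right-hand side is $+\infty$.

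Third, for the risk statement I would apply the above with $\phi(x) := \Exp_{y\sim\hthet(x)}\ell(y,\hst(x))$, which is nonnegative since $\ell$ is, and observe that by the definition in \Cref{eq:risk} we have $\Risk(\hthet;\cD) = \cD[\phi]$ for any distribution $\cD$ over $\cX$. Taking $\mu_1 = \Dtrain$, $\mu_2 = \Dtest$, and invoking the hypothesis $\Dtest \ll_{\kappa}\Dtrain$ gives $\Risk(\hthet;\Dtest) = \Dtest[\phi] \le \kappa\,\Dtrain[\phi] = \kappa\,\Risk(\hthet;\Dtrain)$. The closing remark — that any in-distribution generalization bound on $\Risk(\hthet;\Dtrain)$ transfers up to the factor $\kappa$ — is then immediate by substitution into this inequality.

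I do not expect a genuine obstacle here: the only points needing mild care are the measurability of $\phi$ so the integrals are well-defined, and the simultaneous applicability of monotone convergence to $\mu_1$ and $\mu_2$, both of which are routine. Since the excerpt explicitly states that measurability concerns are omitted throughout, I would keep the write-up at the level of the three stages above rather than belaboring the technical hypotheses.
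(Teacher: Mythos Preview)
Your proposal is correct and follows essentially the same approach as the paper: approximate $\phi$ by an increasing sequence of nonnegative simple functions, use the hypothesis $\mu_2 \ll_{\kappa}\mu_1$ on each set to get the inequality for simple functions, pass to the limit via monotone convergence, and then specialize to $\phi(x) = \Exp_{y\sim\hthet(x)}\ell(y,\hst(x))$ with $\mu_1 = \Dtrain$, $\mu_2 = \Dtest$. The paper's write-up is terser (it jumps directly to simple functions rather than first isolating indicators), but the argument is identical.
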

    \begin{proof} As in standard measure theory (c.f. \citet[Chapter 1]{ccinlar2011probability}), we can approximate any $\phi \ge 0$ by a sequence of {\it simple functions} $\phi_n \uparrow \phi$, where $\phi_n(\omega) = \sum_{i=1}^{k_n} c_{n,i} \I\{\omega \in A_{n,i}\}$, with $A_{n,i} \subset \Omega$ and $c_{n,i} \ge 0$. For each $\phi_n$, we have
    \begin{align*}
    \mu_2[\phi_n] = \sum_{i=1}^{k_n} c_{n,i} \mu_2[A_{n,i}] \le \kappa \sum_{i=1}^{k_n} c_{n,i} \mu_1[A_{n,i}]  = \mu_1[\phi_n].
    \end{align*}
    The result now follows from the monotone convergence theorem. To derive the special case for $\Dtest$ and $\Dtrain$, apply the general result with nonnegative function $\phi(x) = \Exp_{y \sim \hthet(x)}\ell(y,\hst(x))$ (recall $\ell(\cdot,\cdot) \ge 0$ by assumption), $\mu_1 = \Dtrain$ and $\mu_2 = \Dtest$.
    \end{proof}

\subsection{Extrapolation for Matrix Completion}\label{sec:extrap_matrix_completion}

    In what follows, we derive a simple extrapolation guarantee for matrix completion. The following is in the spirit of the Nystr\"om column approximation (see e.g. \cite{gittens2013revisiting}), and our proof follows the analysis due to \cite{shah2020sample}. Throughout, consider
    \begin{align*}
    \bhatM = \begin{bmatrix}  \bhatM_{11} & \bhatM_{12}\\
    \bhatM_{21} & \bhatM_{22}
    \end{bmatrix}, \quad \bstM = \begin{bmatrix}  \bstM_{11} & \bstM_{12}\\
    \bstM_{21} & \bstM_{22}
    \end{bmatrix},
    \end{align*}
    where we decompose $\bhatM,\bstM$ into blocks $(i,j) \in \{1,2\}^2$ for dimension $n_i \times m_j$. 
    \begin{lemma}\label{lem:bhatM_diff} Suppose that $\bhatM$ is rank at most $p$, $\bstM$ is rank $p$, and
    \begin{align*}
    \forall (i,j) \ne (2,2), \quad \|\bhatM_{i,j} - \bstM_{i,j}\|_{\fro} \le \epsilon, \quad \text{and } \|\bstM_{i,j}\|_{\fro} \le M,
    \end{align*}
    where $\epsilon \le \sigma_p(\bstM_{11})/2$. Then,
    \begin{align*}
    \|\bhatM_{22} - \bstM_{22}\|_{\fro} \le8\epsilon\frac{M^2}{\sigma_p(\bstM_{11})^{2}}. 
    \end{align*}
    \end{lemma}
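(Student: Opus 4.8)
Abbreviate $\sigma := \sigma_p(\bstM_{11})$; we may assume $\sigma>0$, since otherwise the hypothesis forces $\epsilon=0$ and there is nothing quantitative left to show. The backbone of the argument is a \emph{Schur-complement identity for exactly rank-$p$ block matrices}: if $\bN$ is partitioned into blocks $\bN_{ij}$, $(i,j)\in\{1,2\}^2$, with $\rank(\bN)=p$ and $\rank(\bN_{11})=p$, then $\bN_{22}=\bN_{21}\bN_{11}\pinv\bN_{12}$. I would prove this by a column-space argument. Writing $\bN_{\cdot 1}$ for the first block column, $\rank(\bN_{11})=p$ forces $\rank(\bN_{\cdot 1})=p=\rank(\bN)$, so $\range(\bN_{\cdot 1})=\range(\bN)$ and hence the second block column satisfies $\bN_{\cdot 2}=\bN_{\cdot 1}\bW$ for some matrix $\bW$; reading off the two block rows gives $\bN_{12}=\bN_{11}\bW$ and $\bN_{22}=\bN_{21}\bW$. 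The subtle point is that $\bN_{11}$ need not have full column rank, so $\bW$ is not unique; however a rank–nullity count shows $\ker\bN_{11}=\ker\bN_{\cdot 1}\subseteq\ker\bN_{21}$, so $\bN_{21}\bW$ is independent of the choice of $\bW$, and taking the particular solution $\bW=\bN_{11}\pinv\bN_{12}$ yields the stated identity.

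Second, I would check that the identity applies to both $\bstM$ and $\bhatM$. For $\bstM$ it is immediate from $\rank(\bstM)=p$ and $\sigma>0$. For $\bhatM$: since $\|\bhatM_{11}-\bstM_{11}\|_{\op}\le\fronorm{\bhatM_{11}-\bstM_{11}}\le\epsilon\le\sigma/2$, Weyl's inequality gives $\sigma_p(\bhatM_{11})\ge\sigma-\epsilon\ge\sigma/2>0$, so $\rank(\bhatM_{11})\ge p$; combined with $\rank(\bhatM)\le p$ and the fact that $\bhatM_{11}$ is a submatrix of $\bhatM$, this forces $\rank(\bhatM)=\rank(\bhatM_{11})=p$. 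The same estimate gives $\opnorm{\bhatM_{11}\pinv}=1/\sigma_p(\bhatM_{11})\le 2/\sigma$, while $\opnorm{\bstM_{11}\pinv}=1/\sigma$. I would also record the two ``free'' inequalities $\epsilon\le\sigma/2\le\fronorm{\bstM_{11}}/2\le M/2$ and $\sigma\le\fronorm{\bstM_{11}}\le M$, which are exactly what collapse all intermediate bounds into the single final term.

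Third, the perturbation estimate. Using the two identities,
\[
\bhatM_{22}-\bstM_{22}=\bhatM_{21}\bhatM_{11}\pinv\bhatM_{12}-\bstM_{21}\bstM_{11}\pinv\bstM_{12},
\]
which I would split as (a) $(\bhatM_{21}-\bstM_{21})\bhatM_{11}\pinv\bhatM_{12}+\bstM_{21}\bhatM_{11}\pinv(\bhatM_{12}-\bstM_{12})$ plus (b) $\bstM_{21}\bigl(\bhatM_{11}\pinv-\bstM_{11}\pinv\bigr)\bstM_{12}$. Bounding (a) with $\fronorm{XYZ}\le\fronorm{X}\opnorm{Y}\opnorm{Z}$ and its mirror image, together with $\opnorm{\bhatM_{11}\pinv}\le 2/\sigma$, $\fronorm{\bhatM_{12}}\le M+\epsilon$, $\fronorm{\bstM_{21}}\le M$, and $\epsilon\le M/2$, shows term (a) has Frobenius norm at most $\frac{2\epsilon(M+\epsilon)}{\sigma}+\frac{2\epsilon M}{\sigma}\le\frac{5\epsilon M}{\sigma}\le\frac{5\epsilon M^2}{\sigma^2}$ (the last step using $\sigma\le M$). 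For (b) I would invoke the classical perturbation bound for pseudoinverses of matrices of \emph{equal rank}, $\|\bhatM_{11}\pinv-\bstM_{11}\pinv\|_{\op}\le\sqrt2\,\opnorm{\bhatM_{11}\pinv}\opnorm{\bstM_{11}\pinv}\opnorm{\bhatM_{11}-\bstM_{11}}\le 2\sqrt2\,\epsilon/\sigma^2$, so term (b) has norm at most $\opnorm{\bstM_{21}}\cdot(2\sqrt2\,\epsilon/\sigma^2)\cdot\fronorm{\bstM_{12}}\le 2\sqrt2\,\epsilon M^2/\sigma^2$. Adding, $\fronorm{\bhatM_{22}-\bstM_{22}}\le(5+2\sqrt2)\,\epsilon M^2/\sigma^2\le 8\,\epsilon M^2/\sigma^2$.

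I expect the main obstacle to be the first step — in particular the well-definedness of $\bN_{21}\bW$ despite the non-uniqueness of $\bW$ when $\bN_{11}$ lacks full column rank; this is the one genuinely non-mechanical point, and also where the exact rank hypotheses get used. Everything after is bookkeeping, the sole external ingredient being the equal-rank pseudoinverse perturbation inequality (Wedin-type), and it is precisely the equal-rank constant $\sqrt2$ — rather than the $\tfrac{1+\sqrt5}{2}$ valid without a rank condition — that makes the final constant land below $8$.
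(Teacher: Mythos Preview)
Your proof follows essentially the same route as the paper's: derive the Nystr\"om identity $\bN_{22}=\bN_{21}\bN_{11}^{\dagger}\bN_{12}$ (you give a self-contained column-space argument, the paper cites it from Shah et al.), show $\rank(\bhatM_{11})=p$ via Weyl, and bound the same three-term telescoping with a pseudoinverse perturbation inequality. One caveat on the last step: the Wedin constant $\sqrt{2}$ you quote is the \emph{Frobenius}-norm constant in the equal-rank product bound; the spectral-norm constant is $(1+\sqrt{5})/2$ (and without equal rank no such bound holds at all), which would push your total to $6+\sqrt{5}\approx 8.24>8$. The paper sidesteps this by using the Meng--Zheng Frobenius bound $\|\bhatM_{11}^{\dagger}-\bstM_{11}^{\dagger}\|_{\fro}\le \max\{\|\bhatM_{11}^{\dagger}\|_{\op}^2,\|\bstM_{11}^{\dagger}\|_{\op}^2\}\,\|\bhatM_{11}-\bstM_{11}\|_{\fro}$; you can equally well rescue your arithmetic by applying the Frobenius Wedin bound inside $\|ABC\|_{\fro}\le\|A\|_{\op}\|B\|_{\fro}\|C\|_{\op}$, which recovers exactly your $(5+2\sqrt{2})\epsilon M^2/\sigma^2<8\epsilon M^2/\sigma^2$.
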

    \begin{proof} The proof mirrors that of \citet[Proposition 13]{shah2020sample}. We shall show below that $\bhatM$ is of rank exactly $p$. Hence, \citet[Lemma 12]{shah2020sample} gives the following exact expression for the bottom-right blocks, 
    \begin{align*}
    \bhatM_{22} = \bhatM_{21}\bhatM_{11}^{\dagger} \bhatM_{12}, \quad \bstM_{22} = \bstM_{21}(\bstM_{11})^{\dagger} \bstM_{12},
    \end{align*}
    where above $(\cdot)^{\dagger}$ denotes the Moore-Penrose pseudoinverse.
    Since $\|\bhatM_{11} - \bstM_{11}\|_{\op} \le\|\bhatM_{11} - \bstM_{11}\|_{\fro} \le \epsilon \le \sigma_{p}(\bstM_{11})/2$, Weyls inequality implies that $\bhatM_{11}$ is rank $p$ (as promised), and $\|\bhatM_{11}^{\dagger}\|_{\op} \le 2\sigma_p(\bstM_{11})^{-1}$. Similarly, as $\|\bhatM_{12} - \bstM_{12}\|_{\op} \le \sigma_{p}(\bstM_{11})/2 \le M/2$, so  $\|\bhatM_{12}\|_{\op} \le \frac{3}{2}M$. Thus,
    \begin{align*}
     \|\bhatM_{22} - \bstM_{22}\|_{\fro} &\le \|\bhatM_{21} - \bstM_{21}\|_{\fro}\|\bhatM_{11}^{\dagger}\|_{\op} \|\bhatM_{12}\|_{\op} + \|\bstM_{21}\|_{\op}\|\bhatM_{11}^{\dagger}\|_{\op} \| \bstM_{12} - \bhatM_{12}\|_{\fro} 
     \\
     &\qquad \|\bstM_{12}\|_{\op}\|\bstM_{21}\|_{\op}\|\bhatM_{11}^\dagger - (\bstM_{11})^\dagger\|_{\fro}\\
     &\le  \frac{5\epsilon M}{2\sigma_p(\bstM)}   + M^2\|\bhatM_{11}^\dagger - (\bstM_{11})^\dagger\|_{\fro}. 
    \end{align*}
    Next, using a perturbation bound on the pseudoinverse\footnote{Unlike \cite{shah2020sample}, we are interested in the Frobenius norm error, so we elect for the slightly sharper bound of \cite{meng2010optimal} above than the classical operator norm bound of \cite{stewart1977perturbation}.} 
    due to  \citet[Theorem 2.1]{meng2010optimal},
    \begin{align*}
    \|\bhatM_{11}^\dagger - (\bstM_{11})^\dagger\|_{\fro} &\le \|\bhatM_{11} - \bstM_{11}\|_{\fro} \max\{\|\bhatM_{11}^\dagger\|_{\op}^2, \|(\bstM_{11})^\dagger\|_{\op}^2\} \\
    &\le \epsilon \cdot 4 \sigma_{p}(\bstM_{11})^{-2}.
    \end{align*}
    Therefore, we conclude 
    \begin{align*}
    \|\bhatM_{22} - \bstM_{22}\|_{\fro} \le \frac{5\epsilon M}{2\sigma_p(\bstM)}   + \epsilon \frac{4M^2}{\sigma_p(\bstM_{11})^{2}} \le 8\epsilon\frac{M^2}{\sigma_p(\bstM_{11})^{2}}. 
    \end{align*}
    \end{proof}

\subsection{General Analysis for Combinatioral Extrapolation} \label{sec:extra_comb_support}

    We now provide our general analysis for combinatorial extrapolation. To avoid excessive subscripts, we write $\cX = \cW \times \cV$ rather than $\cX = \Xone \times \Xtwo$ as in the main body. We consider extrapolation under the following definition of combinatorial support. 
    \begin{restatable}[Bounded combinatorial density ratio]{definition}{defncombdens}\label{defn:kap_comb_density_general} Let $\cD,\cD'$ be two distributions over a product space $\cW\times \cV$.  We say $\cD'$ has \emph{ $\kappa$-bounded combinatorial density ratio} with respect to $\cD$, written as $\cD'  \combll[\kappa] \cD$, if there exist distributions $\cD_{\cW,i}$ and $\cD_{\cV,j}$, $i, j \in \{1,2\}$, over $\cW$ and $\cV$, respectively, such that $\cD_{i\otimes j} :=  \cD_{\cW,i}\otimes\cD_{\cV,j}$ satisfy  
    \begin{align*}
    \textstyle \sum_{(i,j) \ne (2,2)}\cD_{i \otimes j} \ll_{\kappa} \cD, \quad \text{and~~~} \cD' \ll_{\kappa} \sum_{i,j = 1,2}\cD_{i \otimes j}.
    \end{align*}
    \end{restatable}

    For simplicity, we consider scalar predictors, as the general result for vector valued estimators can be obtained by stacking the components. Specifically, we consider a ground-truth predictor $\hst$ and estimator $\hhat$ of the form
    \begin{align}
    \hst =\langle \fst,\gst \rangle, \quad \hhat = \langle \fhat, \ghat \rangle, \quad\fst,\fhat:\cW \to \R^p, ~\gst,\ghat:\cV \to \R^p. \label{eq:inner_prod}
    \end{align}
    Lastly, we choose the (scalar) square-loss, yielding the following risk
    \begin{align*}
    \Risk(\hhat;\cD) := \Exp_{(w,v) \sim \cD}[(\hst(w,v) - \hhat(w,v))^2].
    \end{align*}
    Throughout, we assume that all expectations that arise are finite.  Our main guarantee is as follows.
    \begin{theorem}\label{prop:extrap} Let $\cD,\cD'$ be two distributions on $\cW \times \cV$ satisfying $\cD' \combll[\kappa] \cD$, with corresponding factor distributions $\cD_{\cW,i}$ and $\cD_{\cV,j}$, $i,j \in \{1,2\}$. Define the effective singular value
    \begin{align}\label{eq:min_sing_val}
    \sigma_{\star}^2 := {\sigma_p(\Exp_{\cD_{\cW,1}}[\fst(w)\fst(w)^\top])\sigma_p\left(\Exp_{\cD_{\cV,1}}[\gst(v)\gst(v)^\top]\right)},
    \end{align}
    and suppose that $\max_{1 \le i,j \le 2} \Exp_{\cD_{i \otimes j}}|\hst(w,v)|^2 \le M_{\star}^2$. $\Risk(\hhat;\cD) \le \frac{\sigma_{\star}^2}{4\kappa}$,
    \begin{align*}
    \Risk(\hhat;\cD') \le\Risk(\hhat;\cD) \cdot \kappa^2\left(1+64\frac{M_{\star}^4}{\sigma_{\star}^4}\right) = \Risk(\hhat;\cD) \cdot \mathrm{poly}\Bigg(\kappa, \frac{M_{\star}}{\sigma_{\star}}\Bigg).
    \end{align*}
    \end{theorem}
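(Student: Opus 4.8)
The plan is to reduce the claim to the matrix-completion estimate \Cref{lem:bhatM_diff} after two applications of the change-of-measure lemma \Cref{lem:change_of_measure}. Write $\phi(w,v) := (\hst(w,v)-\hhat(w,v))^2 \ge 0$, so that $\Risk(\hhat;\mu) = \mu[\phi]$ for any (possibly unnormalized) measure $\mu$. Since $\cD' \ll_{\kappa} \sum_{i,j}\cD_{i\otimes j} = \big(\sum_{(i,j)\ne(2,2)}\cD_{i\otimes j}\big) + \cD_{2\otimes2}$, \Cref{lem:change_of_measure} gives
\begin{align*}
\Risk(\hhat;\cD') \le \kappa\Big(\Risk\big(\hhat;\textstyle\sum_{(i,j)\ne(2,2)}\cD_{i\otimes j}\big) + \Risk(\hhat;\cD_{2\otimes2})\Big),
\end{align*}
and a second application, using $\sum_{(i,j)\ne(2,2)}\cD_{i\otimes j}\ll_{\kappa}\cD$, bounds the first term by $\kappa\,\Risk(\hhat;\cD)$; the same reasoning applied to each summand shows $\Risk(\hhat;\cD_{i\otimes j}) \le \kappa\,\Risk(\hhat;\cD)$ for every $(i,j)\ne(2,2)$. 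So everything reduces to bounding the single ``out-of-combination'' term $\Risk(\hhat;\cD_{2\otimes2})$ by $\mathrm{poly}(\kappa,M_\star/\sigma_\star)\cdot\Risk(\hhat;\cD)$.

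To obtain that bound I would cast the problem as finite-rank matrix completion \emph{without discretizing} $\cW$ or $\cV$: for $i,j\in\{1,2\}$ let $\bstM_{ij}$ (resp.\ $\bhatM_{ij}$) be the Hilbert--Schmidt integral operator $L^2(\cD_{\cV,j}) \to L^2(\cD_{\cW,i})$ with kernel $\hst$ (resp.\ $\hhat$), and assemble these into $2\times2$ block operators $\bstM,\bhatM$. Because $\hst = \langle\fst,\gst\rangle$ and $\hhat=\langle\fhat,\ghat\rangle$ factor through $\R^p$, both $\bstM$ and $\bhatM$ have rank at most $p$, so all relevant operators have range and corange inside subspaces of dimension $\le 2p$ on each side and the finite-dimensional lemma applies verbatim. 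By definition of the Hilbert--Schmidt norm, $\|\bhatM_{ij}-\bstM_{ij}\|_{\fro}^2 = \Risk(\hhat;\cD_{i\otimes j})$ and $\|\bstM_{ij}\|_{\fro}^2 = \Exp_{\cD_{i\otimes j}}[\hst^2]\le M_\star^2$. The one remaining ingredient is a lower bound on $\sigma_p(\bstM_{11})$: factoring $\bstM_{11}$ through $\R^p$ shows $\sigma_p(\bstM_{11})^2 = \lambda_p\big(\Exp_{\cD_{\cW,1}}[\fst\fst^\top]\,\Exp_{\cD_{\cV,1}}[\gst\gst^\top]\big)$, and the elementary PSD inequality $\lambda_{\min}(A^{1/2}BA^{1/2}) \ge \lambda_{\min}(A)\lambda_{\min}(B)$ then gives $\sigma_p(\bstM_{11})^2 \ge \sigma_\star^2$ via \eqref{eq:min_sing_val}; in particular $\bstM_{11}$, hence $\bstM$, has rank exactly $p$.

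Then I would invoke \Cref{lem:bhatM_diff} with $\epsilon := \sqrt{\kappa\,\Risk(\hhat;\cD)}$ and $M := M_\star$. By the previous paragraph $\|\bhatM_{ij}-\bstM_{ij}\|_\fro \le \epsilon$ for all $(i,j)\ne(2,2)$, and the standing hypothesis $\Risk(\hhat;\cD)\le\sigma_\star^2/(4\kappa)$ is exactly $\epsilon \le \sigma_\star/2 \le \sigma_p(\bstM_{11})/2$. The lemma yields $\Risk(\hhat;\cD_{2\otimes2}) = \|\bhatM_{22}-\bstM_{22}\|_\fro^2 \le 64\,\epsilon^2 M_\star^4/\sigma_p(\bstM_{11})^4 \le 64\,\kappa\,\Risk(\hhat;\cD)\,M_\star^4/\sigma_\star^4$. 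Plugging this into the decomposition from the first paragraph gives $\Risk(\hhat;\cD') \le \kappa\big(\kappa\,\Risk(\hhat;\cD) + 64\,\kappa\,\Risk(\hhat;\cD)\,M_\star^4/\sigma_\star^4\big) = \Risk(\hhat;\cD)\cdot\kappa^2(1 + 64 M_\star^4/\sigma_\star^4)$, the claimed bound.

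I expect the main obstacle to be the reduction in the middle step: setting up the operator/matrix-completion correspondence so that \Cref{lem:bhatM_diff} can be applied cleanly. The subtle points are (a) recognizing that no discretization of the continuous input spaces is needed, because $\hst$ and $\hhat$ both factor through $\R^p$ and hence the associated Hilbert--Schmidt operators are finite rank, so all the finite-dimensional perturbation arguments (Weyl, pseudoinverse perturbation) transfer unchanged; and (b) translating the non-degeneracy hypothesis \eqref{eq:min_sing_val} into the lower bound $\sigma_p(\bstM_{11})\ge\sigma_\star$ required to invoke the lemma, which is where the PSD eigenvalue inequality and the exact-rank (rather than rank-at-most-$p$) property of $\bstM$ enter. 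The rest is bookkeeping on top of \Cref{lem:change_of_measure}.
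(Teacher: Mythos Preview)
Your proposal is correct and arrives at exactly the paper's bound, but the middle step takes a genuinely different route. The paper does \emph{not} work with Hilbert--Schmidt operators; instead it first proves the $(2,2)$-block bound in the special case where $\cW$ and $\cV$ are finite sets (\Cref{lem:finite_out_of_dist}), building explicit matrices $(\bstM_{ij})_{ab} = \sqrt{\sfp_{i,a}\sfq_{j,b}}\,\hst(w_a,v_b)$ (and similarly for $\bhatM$) whose Frobenius norms recover the relevant risks, and then extends to general $\cW,\cV$ by approximating $\fst,\gst,\fhat,\ghat$ with simple functions and passing to the limit via dominated convergence (\Cref{lem:non_finite_dist}). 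The bookkeeping at the start and end (the two change-of-measure steps via \Cref{lem:change_of_measure}, setting $\epsilon^2=\kappa\,\Risk(\hhat;\cD)$, and the singular-value lower bound on the $(1,1)$ block) is the same in both arguments.

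Your operator-theoretic route is cleaner: it exploits directly that both kernels factor through $\R^p$, so the assembled block operator on $\bigoplus_i L^2(\cD_{\cW,i})\times\bigoplus_j L^2(\cD_{\cV,j})$ has finite rank and the entire argument of \Cref{lem:bhatM_diff} (Weyl, pseudoinverse perturbation, the identity $\bM_{22}=\bM_{21}\bM_{11}^\dagger\bM_{12}$) goes through after restricting to the finite-dimensional span of ranges and coranges. This avoids the discretize-then-limit machinery entirely. The paper's approach, by contrast, is more elementary in that it never leaves finite matrices and standard measure-theoretic approximation, at the price of the extra lemma and the DCT bookkeeping. Both buy the same quantitative conclusion; yours makes more transparent \emph{why} no discretization is needed.
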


    \subsubsection{Proof of \Cref{prop:extrap}}
        First, let us assume the following two conditions hold; we shall derive these conditions from the conditions of \Cref{prop:extrap} at the end of the proof:\footnote{Notice that here we take $M_\star^2$ as an upper bound of $\Exp_{\cD_{i \otimes j}}[\hst(w,v)^2]$, rather than a pointwise upper bound in \Cref{prop:extrap}. This is  for convenience in a limiting argument below.}
        \begin{align}
        \forall (i,j) \ne (2,2), \quad \Risk(\hhat;\cD_{i\otimes j}) \le \epsilon^2, \quad \Exp_{\cD_{i \otimes j}}[\hst(w,v)^2]\le M_\star^2, \quad \epsilon < \sigma_{\star}/2. \label{eq:cond_intermediate}
        \end{align}
        Our strategy is first to prove a version of \Cref{prop:extrap} for when $\cW$ and $\cV$ have finite cardinality by reduction to the analysis of matrix completion in \Cref{lem:bhatM_diff}, and then extend to arbitrary domains via a limiting argument.
        \begin{lemma}\label{lem:finite_out_of_dist} Suppose that \Cref{eq:cond_intermediate} hold, and in addition, that $\cW$ and $\cV$ have finite cardinality. Then, 
        \begin{align*}
        \Risk(\hhat;\cD_{2 \otimes 2}) = \|\bhatM_{22} - \bstM_{22}\|_{\fro}^2 \le  64\epsilon^2\frac{M_\star^4}{\sigma_{\star}^4}.
        \end{align*}
        \end{lemma}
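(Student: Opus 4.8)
The plan is to reduce this finite case directly to the matrix-completion estimate \Cref{lem:bhatM_diff}. The one wrinkle is that the factor distributions $\cD_{\cW,1},\cD_{\cW,2}$ (and likewise $\cD_{\cV,1},\cD_{\cV,2}$) may have overlapping supports, so the four products $\cD_{i\otimes j}$ are \emph{not} literally the four blocks of a single partitioned matrix. I would circumvent this by assembling a $2\times 2$ block matrix out of reweighted copies of one underlying prediction matrix, arranged so that the Frobenius error of each block equals the corresponding risk $\Risk(\hhat;\cD_{i\otimes j})$ while the rank bound is preserved.

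Concretely, since $\cW$ and $\cV$ are finite, set $\bstM_0 := [\hst(w,v)]_{w\in\cW,\,v\in\cV}$ and $\bhatM_0 := [\hhat(w,v)]_{w,v}$; by \Cref{eq:inner_prod} these factor as $\bstM_0 = \bstF(\bstG)^{\top}$ and $\bhatM_0 = \bhatF(\bhatG)^{\top}$, where the rows of $\bstF,\bhatF$ (resp.\ $\bstG,\bhatG$) are $\fst(w),\fhat(w)$ (resp.\ $\gst(v),\ghat(v)$), so both matrices have rank at most $p$. Introduce the diagonal reweighting matrices $\bP_i := \Diag\big(\sqrt{\cD_{\cW,i}(w)}\big)_{w}$ and $\bQ_j := \Diag\big(\sqrt{\cD_{\cV,j}(v)}\big)_{v}$, and define blocks $\bstM_{ij} := \bP_i\bstM_0\bQ_j$ and $\bhatM_{ij} := \bP_i\bhatM_0\bQ_j$, stacking them into $\bstM,\bhatM$ as in \Cref{lem:bhatM_diff}. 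Since $\bhatM_{ij} = (\bP_i\bhatF)\big((\bhatG)^{\top}\bQ_j\big)$, the full matrix $\bhatM$ is a product of a matrix with $p$ columns and a matrix with $p$ rows, hence has rank at most $p$ (and likewise $\bstM$). A direct computation then gives $\|\bhatM_{ij}-\bstM_{ij}\|_{\fro}^2 = \sum_{w,v}\cD_{\cW,i}(w)\cD_{\cV,j}(v)\big(\hhat(w,v)-\hst(w,v)\big)^2 = \Risk(\hhat;\cD_{i\otimes j})$, and similarly $\|\bstM_{ij}\|_{\fro}^2 = \Exp_{\cD_{i\otimes j}}[\hst(w,v)^2]$.

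It then remains to verify the hypotheses of \Cref{lem:bhatM_diff}. The bounds $\|\bhatM_{ij}-\bstM_{ij}\|_{\fro}\le\epsilon$ and $\|\bstM_{ij}\|_{\fro}\le M_{\star}$ for $(i,j)\ne(2,2)$ are immediate from the two identities above together with \Cref{eq:cond_intermediate}. The substantive point is the non-degeneracy condition $\epsilon\le\sigma_p(\bstM_{11})/2$: writing $\bstM_{11}=(\bP_1\bstF)\big((\bstG)^{\top}\bQ_1\big)$ and using the elementary inequality $\sigma_p(AB)\ge\sigma_p(A)\,\sigma_p(B)$ for $A\in\R^{m\times p}$, $B\in\R^{p\times n}$ (which follows from $BB^{\top}\succeq\sigma_p(B)^2 I_p$ and Weyl monotonicity applied to $A(BB^{\top})A^{\top}\succeq\sigma_p(B)^2 AA^{\top}$), combined with $\sigma_p(\bP_1\bstF)^2=\sigma_p\big(\bstF^{\top}\bP_1^2\bstF\big)=\sigma_p\big(\Exp_{\cD_{\cW,1}}[\fst\fst^{\top}]\big)$ and the analogue for $\bQ_1$, I obtain $\sigma_p(\bstM_{11})^2\ge\sigma_p\big(\Exp_{\cD_{\cW,1}}[\fst\fst^{\top}]\big)\,\sigma_p\big(\Exp_{\cD_{\cV,1}}[\gst\gst^{\top}]\big)=\sigma_{\star}^2$ as in \Cref{eq:min_sing_val}. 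In particular $\bstM_{11}$, and therefore $\bstM$, has rank exactly $p$, and $\epsilon<\sigma_{\star}/2\le\sigma_p(\bstM_{11})/2$. Applying \Cref{lem:bhatM_diff} with $M=M_{\star}$ then yields $\|\bhatM_{22}-\bstM_{22}\|_{\fro}\le 8\epsilon\,M_{\star}^2/\sigma_p(\bstM_{11})^2\le 8\epsilon\,M_{\star}^2/\sigma_{\star}^2$; squaring and recalling $\Risk(\hhat;\cD_{2\otimes2})=\|\bhatM_{22}-\bstM_{22}\|_{\fro}^2$ gives the claimed bound $64\epsilon^2 M_{\star}^4/\sigma_{\star}^4$.

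The only genuinely nontrivial step is the first one: recognizing that overlapping factor supports block a literal partition, and that passing to reweighted copies of a single matrix simultaneously realizes the four risks as block errors, keeps the rank at most $p$, and converts the spectral condition \Cref{eq:min_sing_val} into the hypothesis $\epsilon\le\sigma_p(\bstM_{11})/2$ demanded by \Cref{lem:bhatM_diff}. Everything after that is bookkeeping plus the standard singular-value product inequality.
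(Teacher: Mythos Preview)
Your proposal is correct and follows essentially the same argument as the paper: the paper also builds a $2|\cW|\times 2|\cV|$ block matrix whose $(i,j)$ block is the prediction matrix reweighted by $\sqrt{\cD_{\cW,i}}$ and $\sqrt{\cD_{\cV,j}}$, verifies the rank, Frobenius, and singular-value hypotheses of \Cref{lem:bhatM_diff} exactly as you do, and then invokes that lemma. Your diagonal-matrix notation $\bP_i,\bQ_j$ and your explicit remark about overlapping supports make the construction slightly cleaner, but the substance is identical.
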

        \begin{proof}[Proof \Cref{lem:finite_out_of_dist}] By adding additional null elements to either $\cW$ or $\cV$, we may assume without loss of generality that $|\cW| = |\cV| = d$, and enumerate their elements $\{w_1,\dots,w_d\}$ and $\{v_1,\dots,v_d\}$. Let $\sfp_{i,a} = \Prob_{w \sim \cD_{\cW,i}}[w = w_a]$ and $\sfq_{j,b} = \Prob_{v \sim \cD_{\cV,j}}[v = v_b]$.  Consider matrices $\bhatM,\bstM \in \R^{2d \times 2d}$, with $d \times d$ blocks
        \begin{align*}
        (\bhatM_{ij})_{ab} = \sqrt{\sfp_{i,a}\sfq_{j,b}}\cdot\hhat(w_a,v_b), \quad (\bstM_{ij})_{ab} = \sqrt{\sfp_{i,a}\sfq_{j,b}}\cdot\hst(w_a,v_b).
        \end{align*}
        We then verify that
        \begin{equation}
        \begin{aligned}
        \|\bhatM_{ij} - \bstM_{ij}\|_{\fro}^2 &= \sum_{a,b =1}^d \sfp_{i,a}\sfq_{j,b} (\hhat(w_a,v_b) - \hst(w_a,v_b))^2 \\
        &= \Exp_{\cD_{i \otimes j}}[(\hhat(w,v) - \hst(w,v))^2] = \Risk(\hhat;\cD_{i\otimes j}), 
        \end{aligned}\label{eq:bhatM_to_risk}
        \end{equation}
        and thus $\|\bhatM_{ij} - \bstM_{ij}\|_{\fro}^2 \le \epsilon^2$ for $(i,j) \ne (2,2)$.
        Furthermore, define the matrices $\bhatA_i,\bhatB_j$ via
        \begin{align*}
        (\bhatA_{i})_a := \sqrt{\sfp_{i,a}}\fhat(w_a)^\top, \quad (\bhatB_{j})_b := \sqrt{\sfq_{j,b}}\ghat(v_b)^\top,
        \end{align*}
        and define $\bstA_i,\bstB_j$ similarly. Then,
        \begin{align*}
        \bhatM = \begin{bmatrix} \bhatA_1\\
        \bhatA_2 \end{bmatrix}\begin{bmatrix} \bhatB_1\\
        \bhatB_2 \end{bmatrix}^\top, \quad 
        \bstM = \begin{bmatrix} \bstA_1\\
        \bstA_2 \end{bmatrix}\begin{bmatrix} \bstB_1\\
        \bstB_2 \end{bmatrix}^\top, 
        \end{align*}
        showing that $\rank(\bhatM_1), \rank(\bhatM_2) \le p$. Finally, by \Cref{eq:min_sing_val}, 
        \begin{align*}
        \sigma_p(\bstM_{11})^2 &= \sigma_p(\bstA_1(\bstB_1)^\top)^2 \ge \sigma_p^2(\bstA_1)\sigma_p^2(\bstB_1)\\
        &={\sigma_p\left( (\bstA_1)^\top\bstA_1\right)\sigma_p\left((\bstB_1)^\top\bstB_1\right)}\\
        &={\sigma_p\left( \sum_{a=1}^d \sfp_{1,a} \fhat(w_a)\fhat(w_a)^\top \right)\sigma_p\left(\sum_{b=1}^d \sfq_{1,b} \ghat(v_b)\ghat(v_b)^\top\right)}\\
        &={\sigma_p(\Exp_{\cD_{\cW,1}}[\fhat(w)\fhat(w)^\top])\sigma_p\left(\Exp_{\cD_{\cV,1}}[\ghat(v)\ghat(v)^\top]\right)} = \sigma_\star^2.
        \end{align*}
        Lastly, we have
        \begin{align*}
        \|\bstM_{i,j}\|_{\fro}^2 = \sum_{a,b} \sfp_{i,a}\sfq_{j,b} \hst(w_a,v_b)^2  = \Exp_{\cD_{i\otimes j}} \hst(w,v)^2 \le M_\star^2. 
        \end{align*}
        Thus, \Cref{eq:bhatM_to_risk,lem:bhatM_diff} imply that
        \begin{align*}
        \Risk(\hhat;\cD_{2 \otimes 2}) = \|\bhatM_{22} - \bstM_{22}\|_{\fro}^2 \le  64\epsilon^2\frac{M_\star^4}{\sigma_\star^4}.
        \end{align*}
        \end{proof}
        \begin{lemma}\label{lem:non_finite_dist}
        Suppose that \Cref{eq:cond_intermediate} hold, but unlike \Cref{lem:non_finite_dist}, $\cW$ and $\cV$ need not be finite spaces. Then, still, it holds that
        \begin{align*}
        \Risk(\hhat;\cD_{2 \otimes 2}) = \|\bhatM_{22} - \bstM_{22}\|_{\fro}^2 \le  64\epsilon^2\frac{M_\star^4}{\sigma_{\star}^4}.
        \end{align*}
        \end{lemma}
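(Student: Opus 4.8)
The plan is to reduce \Cref{lem:non_finite_dist} to the finite‑cardinality case already established in \Cref{lem:finite_out_of_dist} by a discretization‑and‑limiting argument. The key observation is that \Cref{lem:finite_out_of_dist} makes no reference to $\cD$, $\cD'$, or $\kappa$: its hypotheses and conclusion involve only the four product measures $\cD_{i\otimes j}$, the functions $\fst,\gst,\fhat,\ghat$, and the constants $\epsilon,M_\star,\sigma_\star$ of \eqref{eq:cond_intermediate} and \eqref{eq:min_sing_val}. So it suffices to replace each marginal $\cD_{\cW,i},\cD_{\cV,j}$ by a finitely supported measure in such a way that every relevant quantity converges to its original value.

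Concretely, fix a large integer $N$ and draw independent samples $w^{(i)}_1,\dots,w^{(i)}_N \iidsim \cD_{\cW,i}$ ($i\in\{1,2\}$) and $v^{(j)}_1,\dots,v^{(j)}_N \iidsim \cD_{\cV,j}$ ($j\in\{1,2\}$). Let $\cW_N$ collect all sampled $w$'s and $\cV_N$ all sampled $v$'s (finite sets), put $\cD^{(N)}_{\cW,i} := \frac{1}{N}\sum_{a=1}^N \delta_{w^{(i)}_a}$ and $\cD^{(N)}_{\cV,j} := \frac{1}{N}\sum_{b=1}^N \delta_{v^{(j)}_b}$, and set $\cD^{(N)}_{i\otimes j} := \cD^{(N)}_{\cW,i}\otimes\cD^{(N)}_{\cV,j}$. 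Restricting $\fst,\gst,\fhat,\ghat$ to $\cW_N,\cV_N$, the matrix $[\hhat(w_a,v_b)]$ over $\cW_N\times\cV_N$ still has rank at most $p$, since it equals $FG^\top$ with $F$ having rows $\fhat(w_a)^\top$ and $G$ having rows $\ghat(v_b)^\top$; in particular we never need $\fhat$ or $\ghat$ to be square‑integrable, only $\hhat$, which is (as $\Risk(\hhat;\cD_{i\otimes j})<\infty$ and $\Exp_{\cD_{i\otimes j}}[\hst^2]<\infty$). Defining $\epsilon_N^2 := \max_{(i,j)\ne(2,2)}\Risk(\hhat;\cD^{(N)}_{i\otimes j})$, $M_{\star,N}^2 := \max_{(i,j)\ne(2,2)}\Exp_{\cD^{(N)}_{i\otimes j}}[\hst^2]$, and letting $\sigma_{\star,N}^2$ be the effective singular value \eqref{eq:min_sing_val} of the discretized instance, conditions \eqref{eq:cond_intermediate} hold for the discretized problem as long as $\epsilon_N < \sigma_{\star,N}/2$, and then \Cref{lem:finite_out_of_dist} (on the finite spaces $\cW_N,\cV_N$) yields
\[
\Risk(\hhat;\cD^{(N)}_{2\otimes2}) \;\le\; 64\,\epsilon_N^2\,\frac{M_{\star,N}^4}{\sigma_{\star,N}^4}.
\]

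It remains to pass to the limit. By the strong law of large numbers for the decoupled double averages $\frac{1}{N^2}\sum_{a,b}\phi(w_a,v_b)$ (a two‑sample $V$-statistic, whose strong law holds for every $\cD_{i\otimes j}$-integrable $\phi$ — which covers $(\hhat-\hst)^2$, $\hst^2$, and the entries of $\fst\fst^\top,\gst\gst^\top$, all integrable under the standing assumption that the relevant expectations are finite), one has almost surely, as $N\to\infty$: $\Risk(\hhat;\cD^{(N)}_{i\otimes j}) \to \Risk(\hhat;\cD_{i\otimes j})$ for each $i,j$; $\Exp_{\cD^{(N)}_{i\otimes j}}[\hst^2] \to \Exp_{\cD_{i\otimes j}}[\hst^2]$; and $\Exp_{\cD^{(N)}_{\cW,1}}[\fst\fst^\top] \to \Exp_{\cD_{\cW,1}}[\fst\fst^\top]$, $\Exp_{\cD^{(N)}_{\cV,1}}[\gst\gst^\top] \to \Exp_{\cD_{\cV,1}}[\gst\gst^\top]$ entrywise, whence $\sigma_{\star,N}^2 \to \sigma_\star^2$ by continuity of the $p$-th singular value. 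Therefore $\limsup_N \epsilon_N^2 \le \epsilon^2$, $\limsup_N M_{\star,N}^2 \le M_\star^2$, and $\sigma_{\star,N} \to \sigma_\star$, which is positive because \eqref{eq:cond_intermediate} requires $\epsilon < \sigma_\star/2$. In particular $\epsilon_N < \sigma_{\star,N}/2$ for all large $N$, so the displayed bound holds for such $N$; letting $N\to\infty$ in it and using $\Risk(\hhat;\cD^{(N)}_{2\otimes2}) \to \Risk(\hhat;\cD_{2\otimes2})$ gives $\Risk(\hhat;\cD_{2\otimes2}) \le 64\,\epsilon^2 M_\star^4/\sigma_\star^4$. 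Since this is a deterministic inequality holding on an event of probability one, it holds unconditionally. (The matrix identity $\Risk(\hhat;\cD_{2\otimes2}) = \|\bhatM_{22}-\bstM_{22}\|_\fro^2$ in the statement is read, in the non‑finite setting, as the definition of $\bhatM_{22},\bstM_{22}$ as the Hilbert–Schmidt kernels of $\hhat,\hst$ on $L^2(\cD_{\cW,2})\otimes L^2(\cD_{\cV,2})$.)

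I expect the main obstacle to be purely the convergence bookkeeping: one must simultaneously track three block risks, three block second moments, and the effective singular value through the discretization, each of which needs its own mild integrability input (all supplied by the assumption that the relevant expectations are finite), and one must keep the \emph{product} structure of the empirical measures so that the law of large numbers being invoked is the one for product empirical measures / two‑sample $V$-statistics rather than the plain SLLN. A variant that trades this SLLN step for martingale convergence — discretizing by replacing $\fst,\gst,\fhat,\ghat$ with their conditional expectations onto an increasing sequence of finite sub‑$\sigma$-algebras generating the Borel $\sigma$-algebras of $\cW$ and $\cV$ — also works, but demands slightly more care with the integrability of $\fhat,\ghat$.
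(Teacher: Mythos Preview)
Your proposal is correct and reaches the same conclusion, but the route is genuinely different from the paper's. The paper does \emph{not} discretize the measures; instead it approximates the four embedding maps $\fst,\gst,\fhat,\ghat$ by sequences of \emph{simple functions} $f_{\star,n},g_{\star,n},\hat f_n,\hat g_n$, so that for each fixed $n$ one can partition $\cW$ and $\cV$ into finitely many level sets and apply \Cref{lem:finite_out_of_dist} directly, then pass to the limit via the dominated convergence theorem. Your approach instead keeps the embeddings fixed and discretizes the four marginals $\cD_{\cW,i},\cD_{\cV,j}$ by independent empirical measures, invoking a two-sample $V$-statistic SLLN to push the block risks, second moments, and covariance matrices (hence $\sigma_{\star,N}$) to their population values. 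Both are valid: the paper's route is deterministic but leans on a slightly hand-wavy construction of dominating functions for the simple-function approximants; your route sidesteps that entirely and never needs square-integrability of $\fhat,\ghat$ individually (as you note), but trades it for a probabilistic argument and the product-SLLN, plus the small extra step of observing that a deterministic inequality that holds almost surely holds outright. Your closing remark about reading $\|\bhatM_{22}-\bstM_{22}\|_{\fro}^2$ as a Hilbert--Schmidt norm is also a useful clarification the paper glosses over.
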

        \begin{proof}[Proof of \Cref{lem:non_finite_dist}] For $n \in \N$, define $h_{\star,n} = \langle f_{\star,n} , g_{\star,n} \rangle$ and $\hhat_{n} = \langle \hat{f}_n, \hat{g}_n \rangle$, where $f_{\star,n},\hat{f}_n,\hat{g}_n,g_{\star,n}$ are simple functions (i.e. finite range, see the proof of \Cref{lem:change_of_measure}) converging to $\fst,\fhat,\gst,\ghat$.  Define
        \begin{align*}
        \sigma_{\star,n}^2 &= {\sigma_p(\Exp_{\cD_{\cW,1}}[f_{\star,n}(w)f_{\fst,n}(w)^\top])\sigma_p\left(\Exp_{\cD_{\cV,1}}g_{\star,n}(v)g_{\star,n}(v)^\top]\right)},\\
        M_{\star,n}^2 &= \max_{i,j \ne (2,2)} \Exp_{\cD_{i\otimes j}}[\hst(w,v)^2]\\
        \epsilon_{n}^2 &= \max_{i,j \ne (2,2)} \Risk(\hhat_n;\cD_{i\otimes j}).
        \end{align*}
        By the dominated convergence theorem\footnote{Via standard arguments, one can construct the limiting embeddings $f_{\star,n},\hat{f}_n,\hat{g}_n,g_{\star,n}$ in such a way that their norms are dominated by integrable functions.}, 
        \begin{align*}
        \liminf_{n \ge 1}\sigma_{\star,n}^2 \ge \sigma_{\star}^2, \quad \limsup_{n \ge 1} M_{\star,n}^2 \le M_\star^2, \quad  \limsup_{n \ge 1}\epsilon_n^2 \le \epsilon^2.
        \end{align*}
        In particular,  as $\epsilon^2 \le \sigma^2/4$, then applying \Cref{lem:finite_out_of_dist} for $n$ sufficiently large, 
        \begin{align*}
        \Risk(\hhat_n;\cD_{2 \otimes 2}) \le 64\frac{M_{\star,n}^4}{\sigma_{\star,n}^4}\epsilon_n^2.
        \end{align*}
        Indeed, for any fixed $n$, all of $\fhat_n,\ghat_n,f_{\star,n},g_{\star,n}$ are simple functions, so we can partition $\cW$ and $\cV$ into sets on which these embeddings are constant, and thus treat $\cW$ and $\cV$ as finite domains; this enables the application of   \Cref{lem:finite_out_of_dist} applies.   Finally, using the dominated covergence theorem one last time,
        \begin{align*}
        \Risk(\hhat;\cD_{2 \otimes 2}) = \lim_{n \to \infty}\Risk(\hhat_n;\cD_{2 \otimes 2}) \le \limsup_{n\ge 1} 64\frac{M_{\star,n}^4}{\sigma_{\star,n}^4}\epsilon_n^2 \le 64\frac{M_\star^4}{\sigma_{\star}^4}\epsilon^2.
        \end{align*}
        \end{proof}
        We can now conclude the proof of our proposition. 
        \begin{proof}[Proof of \Cref{prop:extrap}]
        As $\cD' \ll_{\kappa} \sum_{i,j} \cD_{i \otimes j}$ and $\sum_{i,j \ne (2,2)} \cD_{i \otimes j} \ll_{\kappa} \cD$, \Cref{lem:change_of_measure} and additivity of the integral implies
        \begin{align*}
        \Risk(\hhat;\cD') &\le \kappa \Risk(\hhat;\cD_{2\otimes 2}) + \kappa \sum_{(i,j) \ne (2,2)}\Risk(\hhat;\cD_{i\otimes j})\\
        &\le \kappa \Risk(\hhat;\cD_{2\otimes 2}) + \kappa^2 \Risk(\hhat;\cD). \numberthis \label{eq:extra_intermediate_bound}
        \end{align*}
        Moreover,  setting $\epsilon^2 := \kappa \Risk(\hhat;\cD)$, we have
        \begin{align*}
        \max_{(i,j) \ne (2,2)}\Risk(\hhat;\cD_{i\otimes j}) \le \sum_{(i,j) \ne (2,2)}\Risk(\hhat;\cD_{i\otimes j}) \le \kappa \Risk(\hhat;\cD) := \epsilon^2.
        \end{align*}
        Thus, for $\Risk(\hhat;\cD) < \frac{\sigma_{\star}^2}{4\kappa}$, \Cref{eq:cond_intermediate} holds and thus \Cref{lem:non_finite_dist} entails
        \begin{align*}
        \Risk(\hhat;\cD_{2\otimes 2}) \le 64 \epsilon^2 \frac{M_{\star}^4}{\sigma_{\star}^4} = 64\kappa\Risk(\hhat;\cD) \frac{M_{\star}^4}{\sigma_{\star}^4} .
        \end{align*}
        Thus, combining with  \Cref{eq:extra_intermediate_bound},
        \begin{align*}
        \Risk(\hhat;\cD') \le\kappa^2\Risk(\hhat;\cD) \cdot\left(1+64\frac{M_{\star}^4}{\sigma_{\star}^4}\right),
        \end{align*} 
        completing the proof.
        \end{proof}

\subsection{Extrapolation for Transduction}\label{sec:trans_extrap_bound}

    Leveraging \Cref{prop:extrap}, this section proves \Cref{thm:main}, thereby providing a formal  theoretical justification for predictors of the form \Cref{eq:pibarthet}.

\begin{proof}[Proof of \Cref{thm:main}] We argue by reducing to \Cref{prop:extrap}. The reparameterization of the stochastic predictor $\hthet$ in \Cref{eq:param_transduct}, followed by \Cref{asm:bilin_tranduc} allows us to write
\begin{align*}
\Risk(\hthet;\Dtrain) &= \Exp_{x \sim \Dtrain}\Exp_{y \sim \hthet(x)} \ell(y,\hst(x)) \\
&= \Exp_{x \sim \Dtrain}\Exp_{x' \sim \Dtrans(x)} \ell(\hbarthet(x- x',x'),\hst(x))\\
&= \Exp_{x \sim \Dtrain}\Exp_{x' \sim \Dtrans(x)} \ell(\hbarthet(x - x',x'),\hbarst(x - x', x')).
\end{align*} 
In the above display, the joint distribution of $(x-x',x')$ is precisely given by $\Dbartrain$ (see \Cref{equ:def_Dbartrain}). Hence, 
\begin{align*}
\Risk(\hthet;\Dtrain) = \Exp_{\Dbartrain}\ell(\hbarthet(\delx,x'),\hbarst(\delx, x')).
\end{align*}
Further, as $\ell(y,y') = \|y - y'\|^2$ is the square loss and decomposes across coordinates, 
\begin{align}
\Risk(\hthet;\Dtrain) = \sum_{k=1}^K\Exp_{\Dbartrain}(\hbarthetk(\delx,x') - \hbarstk(\delx, x'))^2. \label{eq:Dtrain_decomp}
\end{align}
By the same token, 
\begin{align*}
\Risk(\hthet;\Dtest) = \sum_{k=1}^K\Exp_{\Dbartest}(\hbarthetk(\delx,x') - \hbarstk(\delx, x'))^2.
\end{align*}
To conclude the proof, we remain to show that for all $k \in [K]$, we have
\begin{equation}
\begin{aligned}
&\Exp_{\Dbartest}(\hbarthetk(\delx,x') - \hbarstk(\delx, x'))^2 \le C_{\mathrm{prob}} \cdot \Exp_{\Dbartrain}(\hbarthetk(\delx,x') - \hbarstk(\delx, x'))^2, \\
&\quad \text{where } C_{\mathrm{prob}} = \kappa^2\left(1+64\frac{M^4}{\sigma^4}\right).
\end{aligned}\label{eq:Cprob_wts}
\end{equation}
Indeed, for each $k \in [K]$, we have 
\begin{align*}
\Exp_{\Dbartrain}(\hbarthetk(\delx,x') - \hbarstk(\delx, x'))^2 \overset{\text{(\Cref{eq:Dtrain_decomp})}}{\le} \Risk(\hthet;\Dtrain) \overset{(\text{by assumption})}{\le} \frac{\sigma^2}{4\kappa}.
\end{align*}
Hence \Cref{eq:Cprob_wts} holds by invoking \Cref{prop:extrap} with the correspondences $\cW \gets \Delta \cX$, $\cV \gets \cX$, $\sigma_{\star} \gets \sigma$, $M_{\star} \gets M$ and $\kappa \gets \kappa$. This concludes the proof.
\end{proof}

\subsubsection{Further Remarks on \Cref{thm:main}}\label{app:further_remarks_thm_main}
\begin{itemize}
\item The singular value condition, $\min\{\sigma_p(\Exp_{\cD_{\DelX,1}}[\fstk\fstk^\top]),\sigma_p(\Exp_{\cD_{\cX,1}}[\gstk\gstk^\top])\} \ge 
 \sigma^2 > 0$, mirrors the non-degeneracy conditions given in the past work in matrix completion  (c.f. \citep{shah2020sample}).
\item  The support condition $ \sup_{\delx,x'} |\hbarstk(\delx,x')| \le M$ is a mild boundedness condition, which (in light of \Cref{prop:extrap}) can be weakened further  to 
\begin{align*}
\max_{1 \le i,j \le 2} \Exp_{\cD_{i\otimes j}}[\hbarstk(\delx,x')^2] \le M^2,
\end{align*}
where $\cD_{i \otimes j}$ are the contituent distributions witnessing $\Dbartest  \combll[\kappa]\Dbartrain$. 
\item The final condition, $\Risk(\hthet;\Dtrain) \le \frac{\sigma^2}{4\kappa}$, is mostly for convenience. Indeed, as $M \ge \sigma$ and $\kappa \ge 1$, then as soon as $\Risk(\hthet;\Dtrain) > \frac{\sigma^2}{4\kappa}$, our upper-bound on $\Risk(\hthet;\Dtest)$ is no better than 
\begin{align*}
\kappa M^2 \cdot \frac{64}{4} \cdot \frac{M^2}{\sigma^2} \ge 6M^2,
\end{align*}
which is essentially vacuous. Indeed, if we also inflate $M$ and stipulate that $\sup_{\delx,x'}|\hbarthet(\delx,x')| \le \sqrt{6}M$, we can remove the condition $\Risk(\hthet;\Dtrain) \le \frac{\sigma^2}{4\kappa}$ altogether. 
\end{itemize}

\subsection{Connections to matrix completion}\label{sec:app_mat_complete}

\begin{figure}[!h]
    \centering
    \hspace{-5pt} 
    \begin{subfigure}{.23\textwidth}
      \centering
        \includegraphics[width=\linewidth]{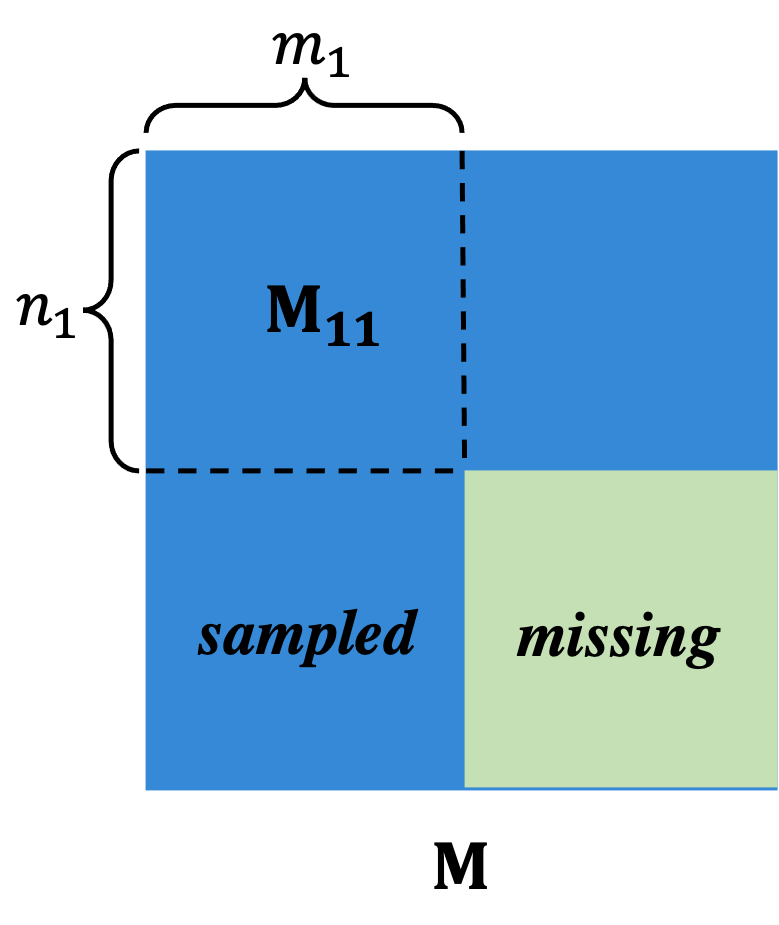}
      \caption{\footnotesize{Matrix completion}}
      \label{subfig:mat-comp}
    \end{subfigure}
    \hspace{35pt} 
    \begin{subfigure}{.62\textwidth}
      \centering
        \includegraphics[width=\linewidth]{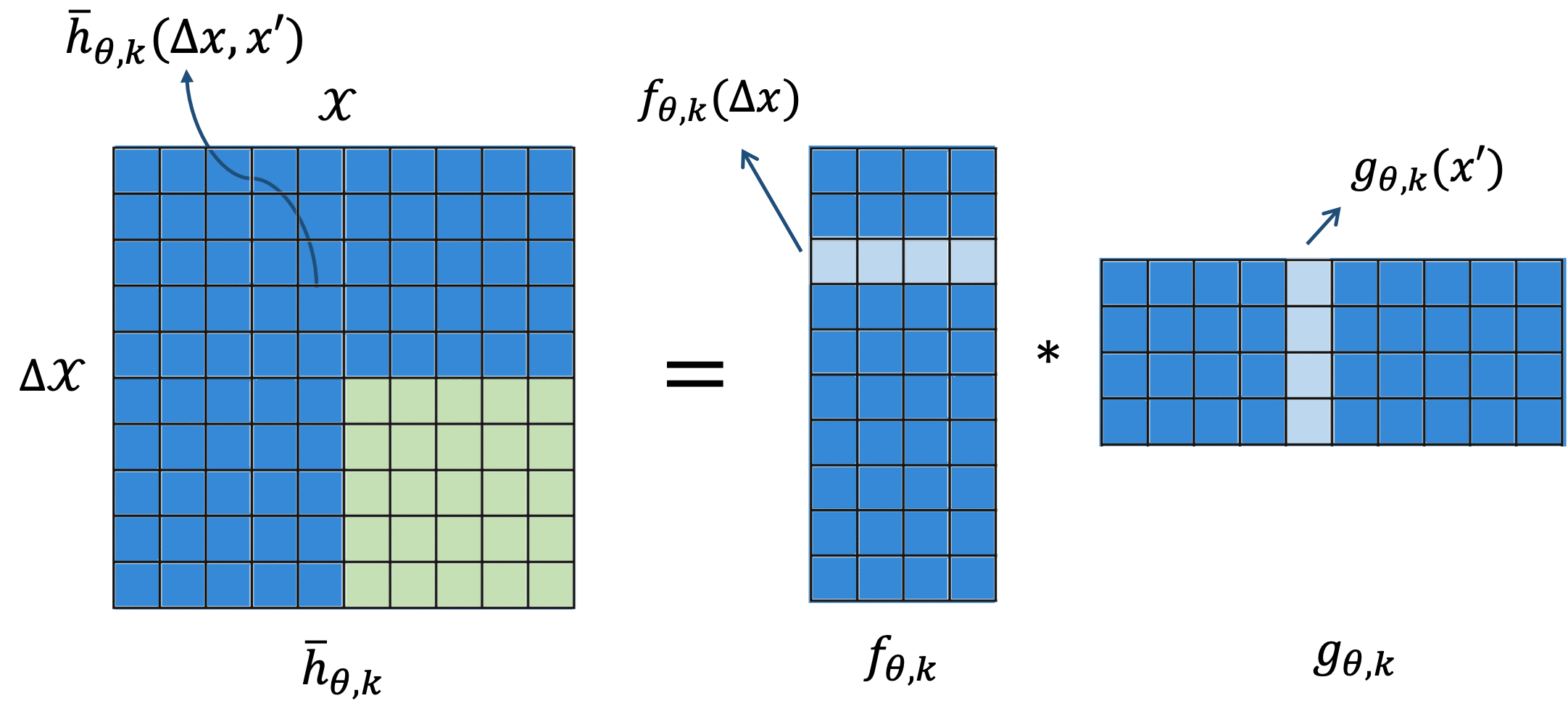}
      \caption{\footnotesize{Connecting \ooc{} to matrix completion}}
      \label{subfig:ooc-mat-comp}
    \end{subfigure}
   \caption{\footnotesize{Illustration of bilinear representations for \ooc{} learning, and connection to matrix completion. {\textbf{(a)}} An example of low-rank matrix completion, where both $\bM$ and $\bM_{11}$ have rank-$p$. Blue: support where entries can be accessed, green: entries are missing. {\textbf{(b)}} An example  that low-rank structure facilitates certain forms of \ooc{}, i.e.  for each $k\in[K]$, the predictor can be represented by bilinear  embeddings as $\hbarthetk(\delx,x') = \langle \fthetk(\delx), \gthetk(x')\rangle$.}}
    \label{fig:matrix_illus}
    \vspace{-0.4cm}
\end{figure}

Building upon \Cref{defn:density_ratios}, \Cref{defn:kap_comb_density_trans} introduces a notion of bounded density ratio between $\Dbartrain$ and $\Dbartest$ in the \ooc{} setting. Take the discrete case of matrix completion as an example, as illustrated  in Fig~\ref{fig:matrix_illus}, the training distribution of $(\delx,x')$ covers the support of the $(1,1),~(1,2),~(2,1)$ blocks of the matrix, while the test distribution of $(\delx,x')$ might be covered by any product of the marginals of the $2\times 2$ blocks. With this connection in mind, it is possible to establish the \ooc{} guarantees on  $\Dbartest$ as in matrix completion tasks, if the bilinear embedding admits some low-rank structure. In other words, samples from the top-left and off-diagonal blocks uniquely determine the bottom-right block.  

This is a standard setting studied extensively in the matrix completion literature, see \citep{shah2020sample,agarwal2021causal,athey2021matrix} for example. Our results in \Cref{thm:main} can be viewed as a generalization of these results to the continuous space embeddings. Finally, note that \Cref{asm:bilin_tranduc,asm:sing_val_lb} are also adopted correspondingly from the matrix completion literature, where \Cref{asm:bilin_tranduc} concerns  the ground-truth bilinear structure of the embeddings, which mirrors the {\it low-rank} structure of the underlying matrix in the matrix completion case;  \Cref{asm:sing_val_lb} corresponds to the non-degeneracy assumption on the the top-left block distribution, as referred to as ``anchor'' columns/rows  in the past work when dealing with matrices \citep{shah2020sample}.

\newpage
\section{Additional Results}
\label{app:additional-results}

\subsection{Learning which points to transduce: Weighted Transduction}
\label{app:weighted}

There may be scenarios where it is beneficial to more carefully choose which points to transduce rather than comparing every training point to every other training point. To this end, we outline a more sophisticated proposal, \textbf{weighted transduction}, that achieves this.

Sampling training pairs uniformly may violate the low rank structure needed for extrapolation discussed in Section~\ref{ssec:bilinear-ooc}. Consider a dataset of point clouds of various 3D objects in various orientations where the task is to predict their 3D grasping point. A predictor required to make predictions from similarities between point clouds of any two different objects and orientations may need to be far more complex than simpler predictors trained on only similar objects. Therefore, it may be useful to identify which training points to transduce, rather than transduce between all pairs of points during training.

Hence, we introduce \textbf{weighted transduction} described in Algorithm~\ref{alg:weighted}, which identifies which training points to transduce, rather than transduce between all pairs of points during training.
During training, a transductive predictor $\bar{h}_{\theta}$ is trained together with a scalar bilinear \emph{weighting function} $\omega_{\theta}$. $\bar{h}_{\theta}$ is trained to predict values as in the bilinear transduction algorithm, yet weighted by $\omega_{\theta}$. $\omega_{\theta}$ predicts weights for data points and their potential anchors based on anchor point similarity and the anchor point, thereby deciding which points to transduce. We can either pre-train $\omega_{\theta}$, or jointly optimize $\omega_{\theta}$ and $\bar{h}_{\theta}$. At test time, we select an anchor point based on the learned weighting function, and make predictions via $\bar{h}_{\theta}$. We show results on learning with weighted transduction on analytic functions in Appendix~\ref{app:2D-analytic} and on learning how to grasp from point
clouds in Appendix~\ref{app:grasp_points}.

\begin{figure}[!h]
    \begin{algorithm}[H]
    \begin{algorithmic}[1]
    \label{app:alg_weighted}
    \STATE{}\textbf{Input:} \small distance parameter $\rho$,  training set $(x_1,y_1),\dots,(x_n,y_n)$, regularizer $\mathrm{Reg}(\cdot)$
    \STATE{}\textbf{Train:} Train $\btheta$ on loss
    \begin{align*} 
    \cL(\btheta) &= \sum_{i=1}^n\sum_{j\ne i} \omega_{\btheta}(x_i-x_j,x_j)
    \cdot\ell(\hbarthet(x_i -x_j,x_j),y_i) + \mathrm{Reg}(\btheta)
    \end{align*} 
    \
    \STATE{}\textbf{Test:} for each new $\xtest$,  predict
    \begin{align*}
    & y = \hbarthet(\xtest - x_\mathbf{i}, x_\mathbf{i}), \text{~where}
    \,\,\mathbf{i} \sim \Pr[\mathbf{i} = i] \propto  \omega_{\btheta}(\xtest-x_i,x_i)
    \end{align*}  
    \end{algorithmic}
      \caption{Weighted Transduction}
      \label{alg:weighted}
    \end{algorithm}
\end{figure}

\subsection{Additional Results on a Simple 2-D Example}
\label{app:2D-analytic}
\paragraph{Weighted transduction with a 2-D example}
We also considered a 2-D analytic function as shown in Fig~\ref{fig:2d_bilinear}. This function has a 2-D input $(x_1,x_2)$, and 2-D output $(y_1,y_2)$. The function is constructed by sampling a tiling of random values in the central block (between $(1, 1)$ and $(5, 5)$). The remaining tiles are constructed by shifting the central tile by $6$ units along the $x_1$ or $x_2$ direction. For tiles shifted in the $x_1$-direction ($(6, 0)$ or $(0, 6)$) the label has the same first dimension and a negated second dimension. For tiles shifted in the $x_2$-direction ($(6, 0)$ or $(0, 6)$) the label has the same second dimension and a negated first dimension. This has a particular symmetry in its construction where the problem has a very simple relationship between instances, but not every instance is related to every other instance (only those off by a constant offset). We show in Fig~\ref{fig:2d_bilinear} that weighted transduction is particularly important in this domain as not every point corresponds to every other point. We perform weighted transduction (seeded by a few demonstration pairs), and this allows for extrapolation to \oos{} pairs. While this may seem like an \ooc{} problem, we show through the standard bilinear (without reparameterization) comparison that there is no low rank structure between the dimensions but instead there is low rank structure on reparameterization. This shows the importance of using weighted transduction and the ability to find symmetries in the data with our proposed method. 

\begin{figure}[!h]
    \centering
    \centering
    \includegraphics[width=\linewidth]{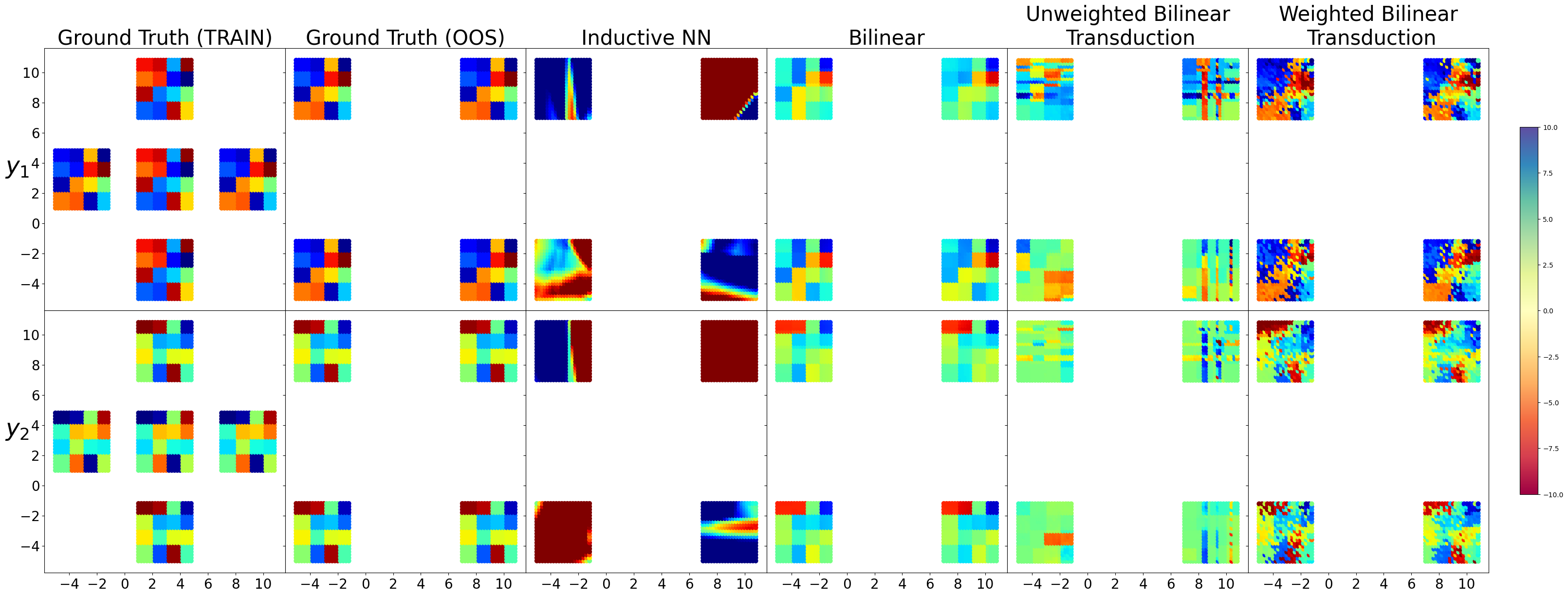}
    \caption{\footnotesize{Predictions for the 2-D analytic function from $(x_1,x_2)$ to $(y_1,y_2)$. (\textbf{Top}) $y_1$ output values for inputs $(x_1,x_2)$, (\textbf{Bottom}) $y_2$ output values.  (\textbf{Left to Right}:) In-distribution ground truth values, \oos{} ground truth values, neural network predictions, bilinear predictions, bilinear transduction predictions and weighted transduction predictions.
    Transduction weighting is important in this domain and it is able to discover problem symmetry. While this may seem like an \ooc{} problem, since bilinear prediction directly doesn't work, the \ooc{} view on this problem does not have low rank structure, while the \oos{} view does.}}
    \label{fig:2d_bilinear}
\end{figure}

\subsection{Additional Results on Supervised Grasp prediction}
\label{app:grasp_points}

\paragraph{Qualitative grasping prediction results} For qualitative results of all models on the grasping point prediction task, please see Fig~\ref{fig:mug results}.

\paragraph{Bilinear transduction is comparable to an SE(3) equivariant architecture on \oos{} orientations and exceeds its performance on \oos{} scales}
We compare bilinear transduction to a method that builds in SE(3) equivariance domain knowledge into its architecture and operates on point clouds. 
Specifically, we compare to tensor field networks \cite{thomas2018tensor}, and report the results in Table~\ref{tab:se3}.
To highlight that SE(3) equivariance architectures can extrapolate to new rotations but lack guarantees on scale, we train on $100$ noisy samples of rotated mugs, bottles or teapots and on a separate set of scaled mugs, bottles or teapots by in-distribution values described in Appendix~\ref{app:domain_details}.
We test extrapolation to $50$ mugs, bottles or teapots with out-of-sample orientations and scales sampled from the  distribution described in Appendix~\ref{app:domain_details}.
We adapt the Tetris object classification tensor field network to learn grasp point prediction by training the mean of the last equivariant layer to predict a grasping point and applying the mean squared error (MSE) loss. Following the implementation in \cite{thomas2018tensor}, we provide six manually labeled key points of the point cloud that uniquely identify the grasp point: two points on the handle and mug intersection and four points on the mug bottom.
We compare to the linear, neural network and transduction baselines, as well as bilinear transduction.
We fix these architectures to one layer and $32$ units.
Under various rotations, the tensor field network is able to extrapolate. For scaling however, the network error increases significantly, which is in accordance with the fact that scale transformation guarantees do not exist in SE(3) architectures.
Bilinear transduction can extrapolate in both cases, is comparable to tensor field networks on \oos{} orientations and exceeds its performance on \oos{} scales.

\paragraph{Weighted grasp point prediction}
The grasp point prediction results for three objects described in Table~\ref{tab:scaled-problems} are trained with priviledged knowledge of the object types at training time. I.e., the predictor at training time was provided only with pairs of same object categories but with different positions, orientations and scale. At test time an anchor was selected based on in-distribution similarities without knowledge of the test sample's object type.
While this provides extra information, it is still not building in information about the distribution shift we would like to extrapolate to, i.e. the types of transformations the objects undergo: translation, rotation and scale. 
We apply the \textbf{weighted} transduction algorithm (Algorithm~\ref{alg:weighted}) to this domain and show that with more relaxed assumptions on the required object labels at training time we can learn weights and a predictor that perform as well as with priviledged training (Table~\ref{tab:weighted-mugs}). 

For grasp point prediction, during training, we train a weighting function $\omega_{\theta}(x_j-x_i,x_i)$ to predict whether $x_i$ should be transduced to $x_j$ based on object-level labels of the data (transduce all mugs to other mugs and bottles to other bottles). Then we learn a predictor $h_{\theta}$ weighted by fixed $\omega_{\theta}$ as described in Algorithm~\ref{alg:weighted}.
At test time, given point $x_j$ we select the training point $x_i$ with highest value $\omega_{\theta}(x_j-x_i,x_i)$ and predict $y_j$ via $h_{\theta}(x_j-x_i,x_i)$, without requiring privileged information about the object category at test time.

\begin{figure}[!h]
    \centering
    \includegraphics[width=0.23\linewidth]{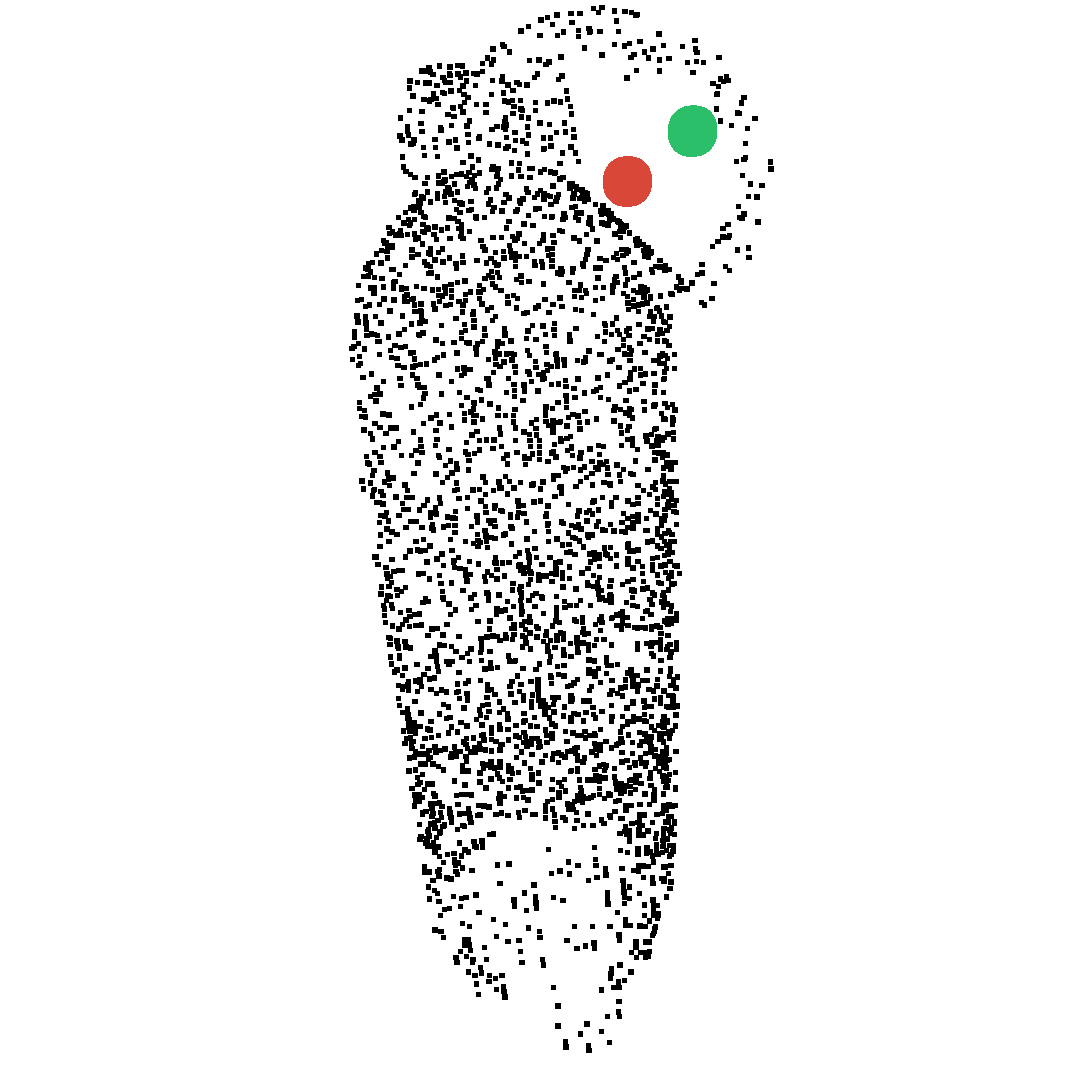}
    \includegraphics[width=0.23\linewidth]{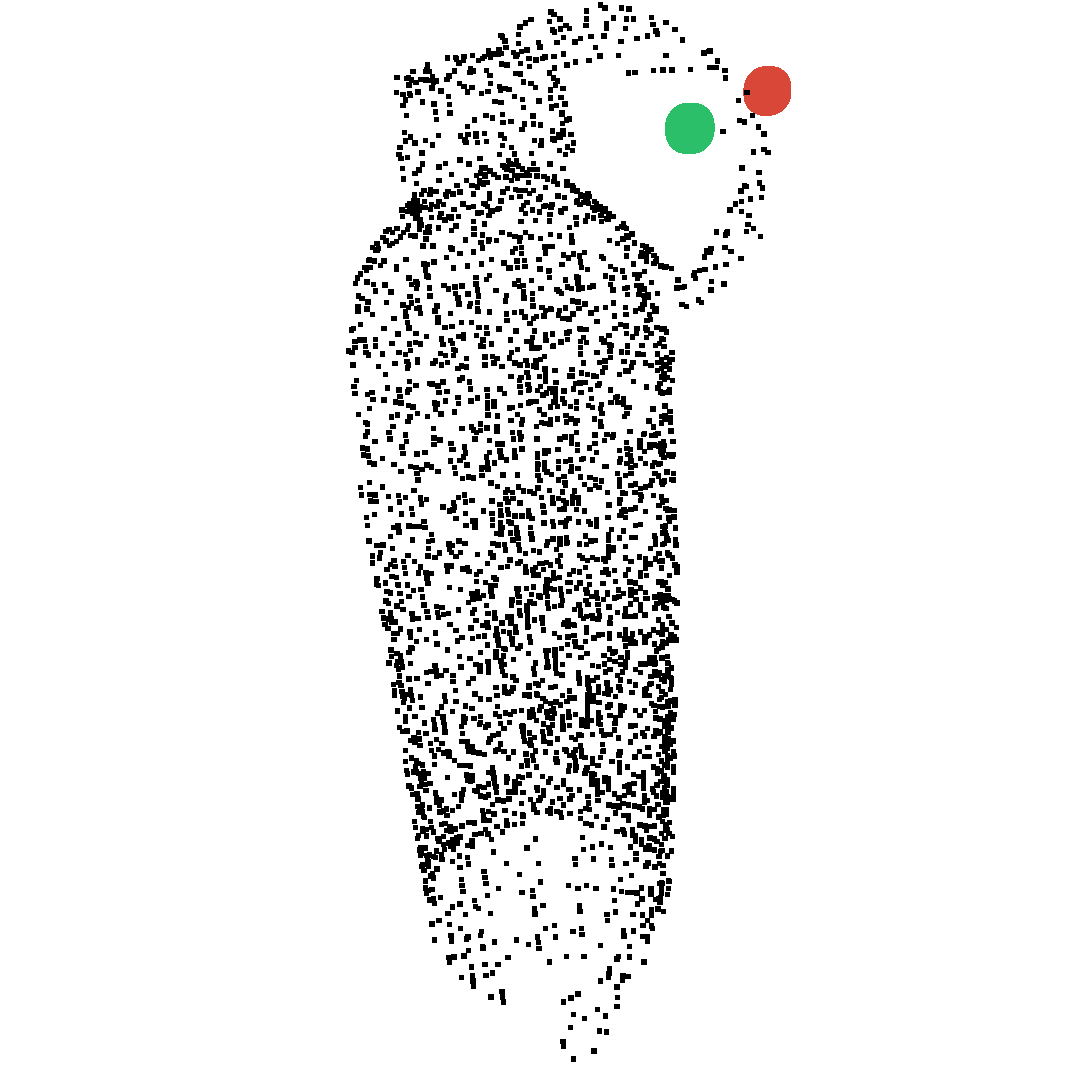}
    \includegraphics[width=0.23\linewidth]{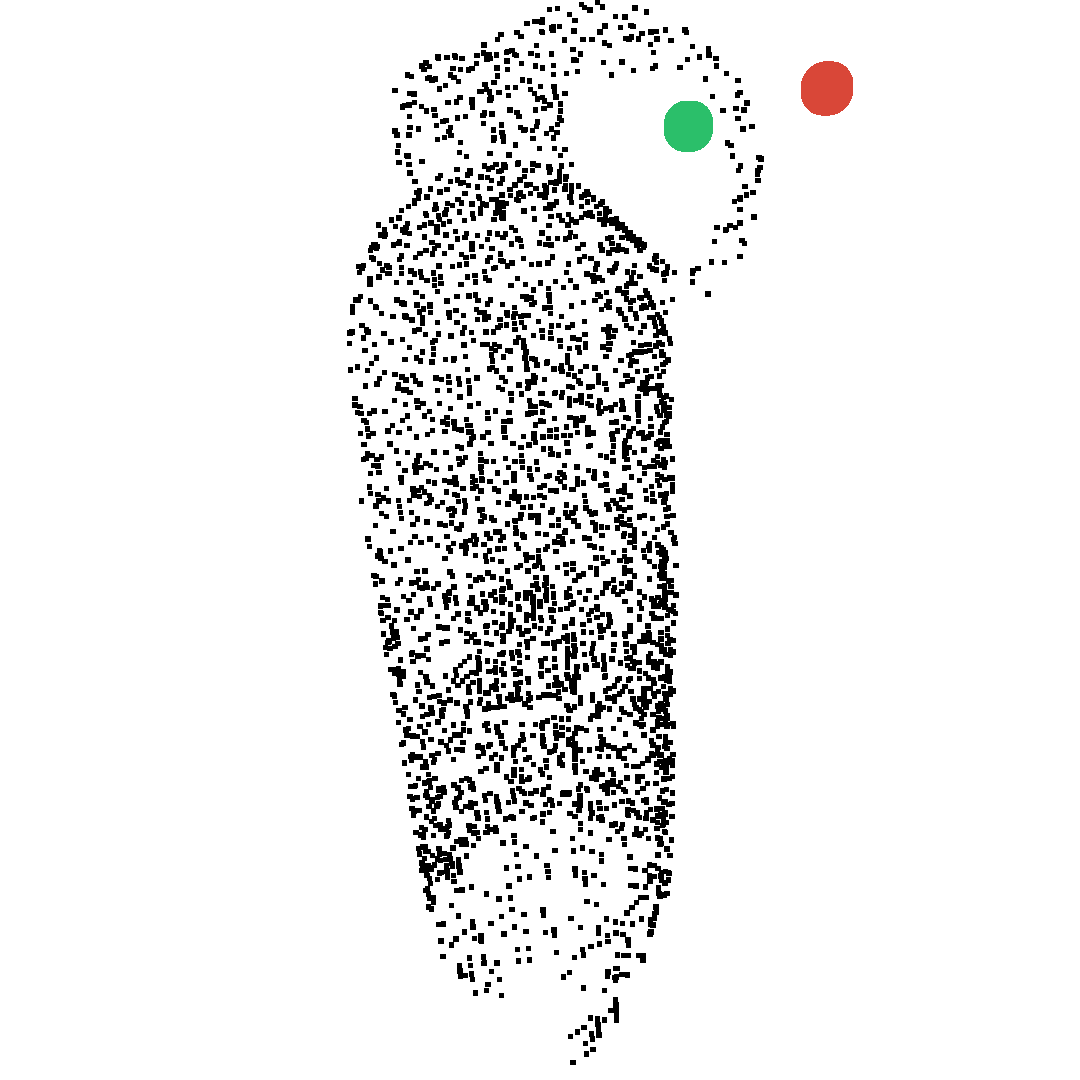}
    \includegraphics[width=0.23\linewidth]{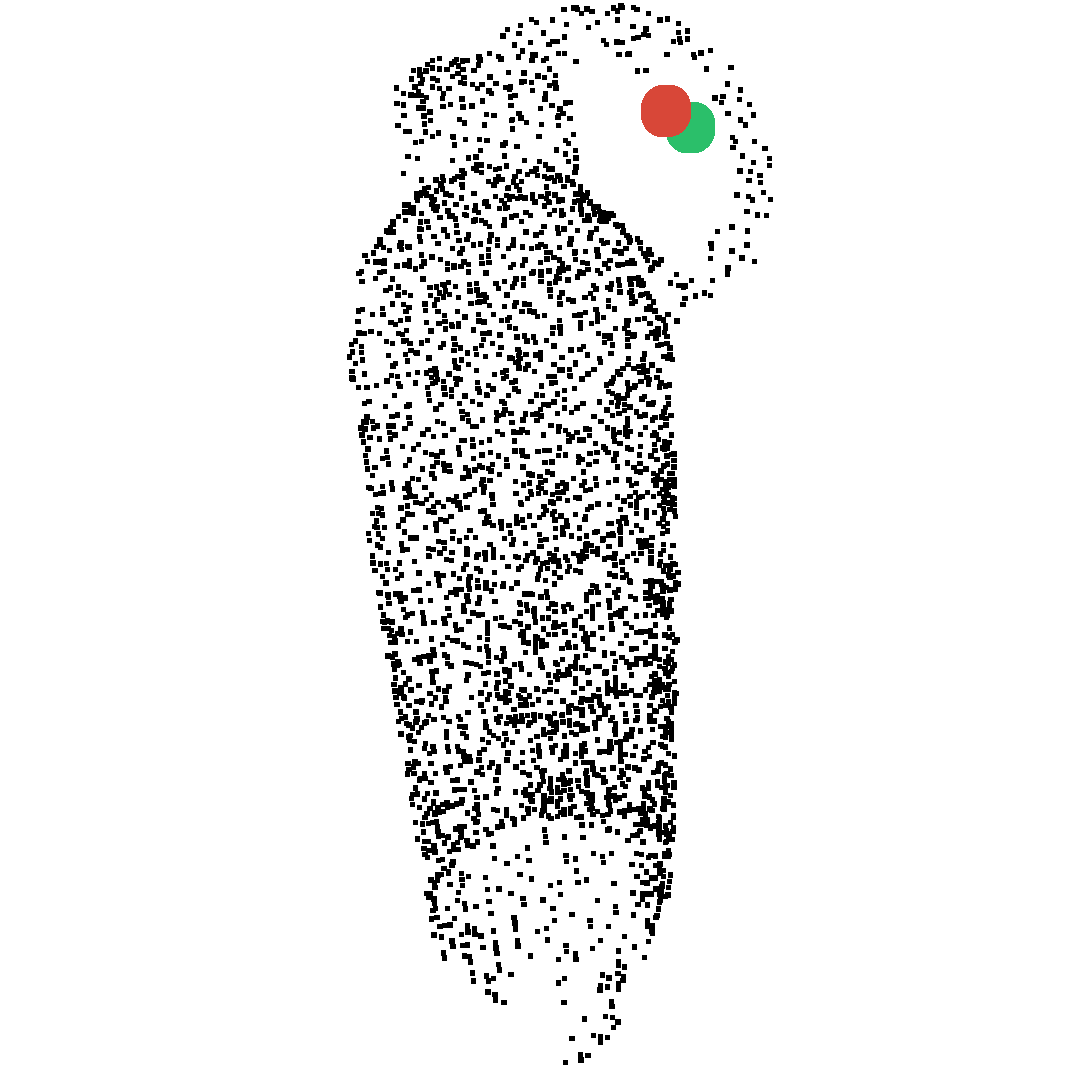}\\
    \includegraphics[width=0.23\linewidth]{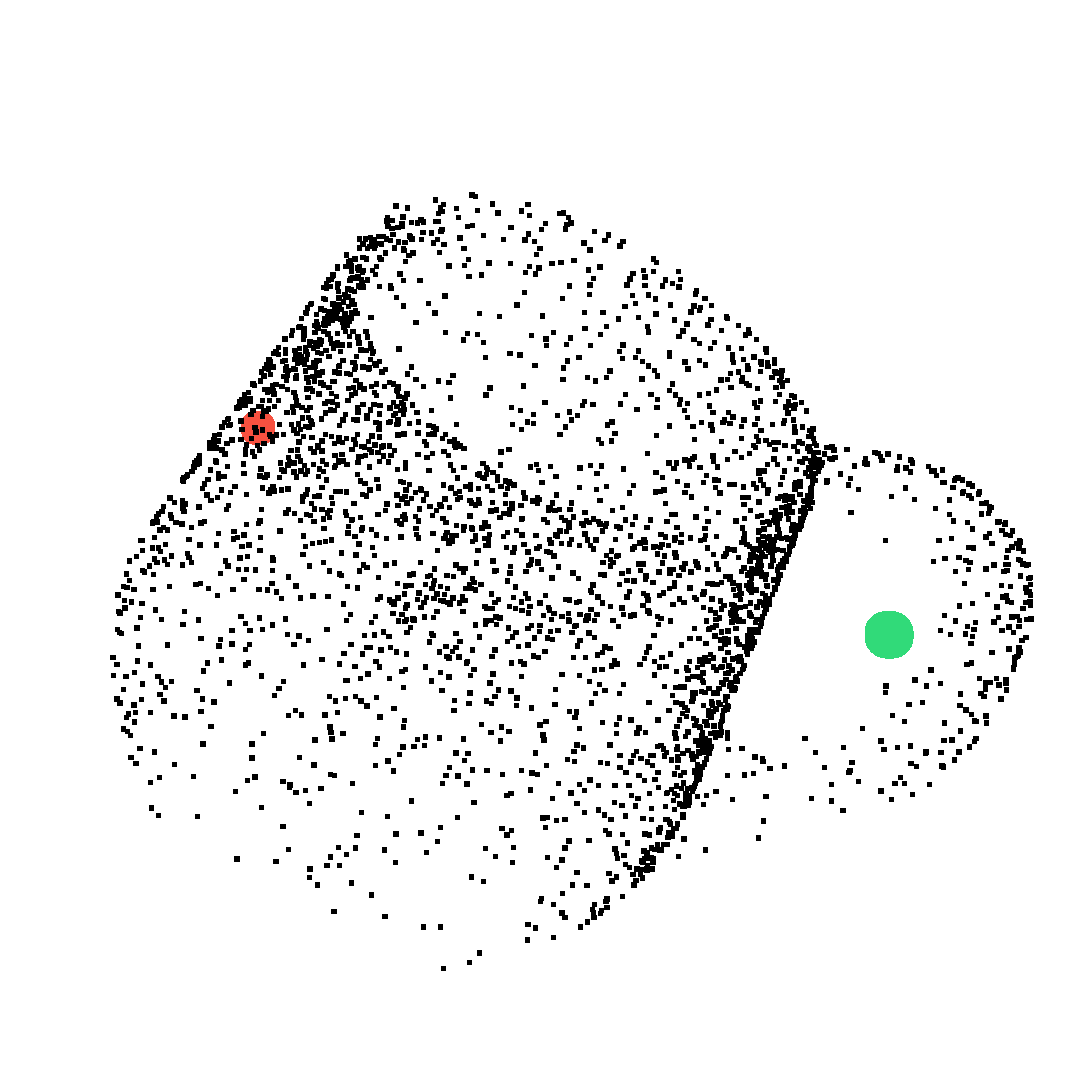}
    \includegraphics[width=0.23\linewidth]{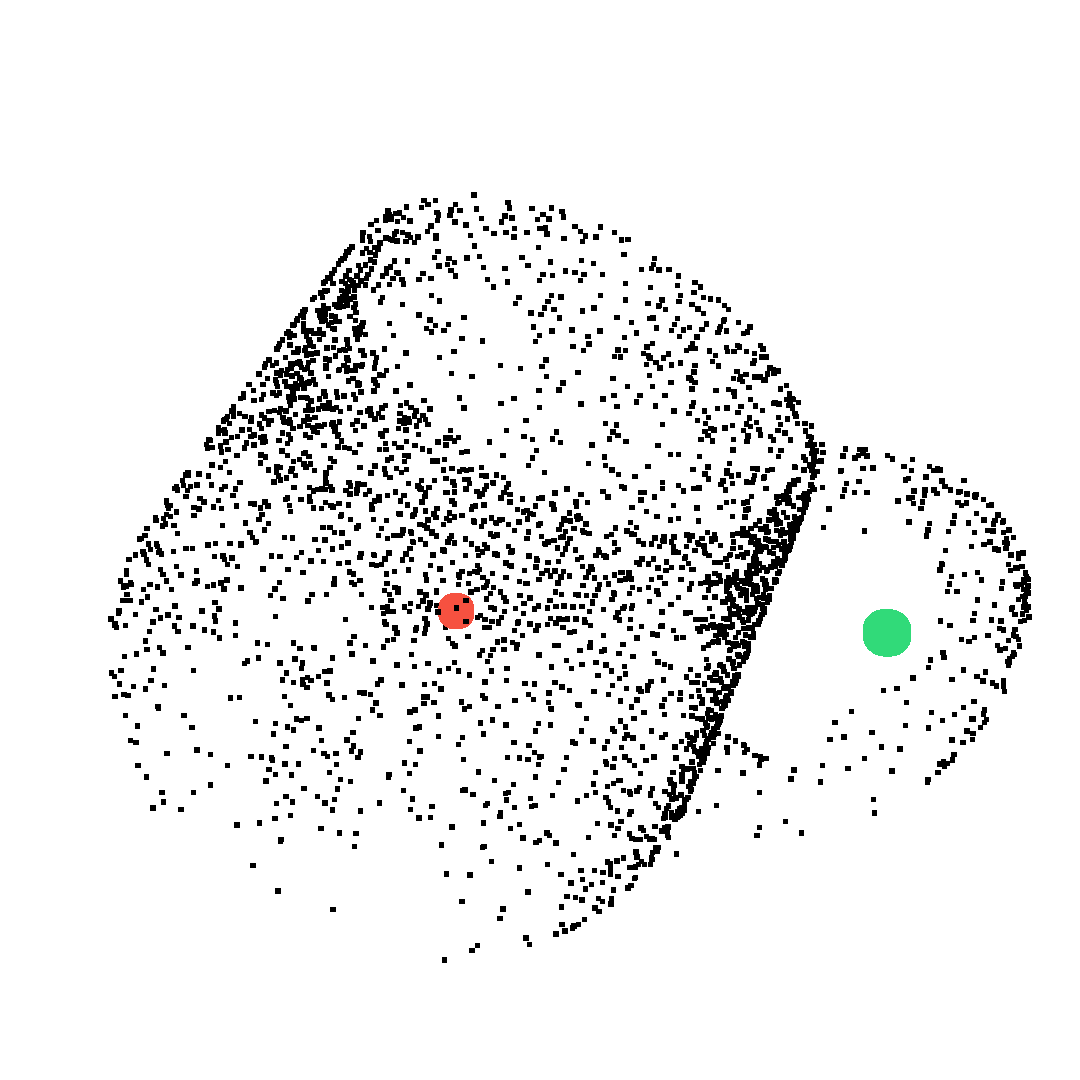}
    \includegraphics[width=0.23\linewidth]{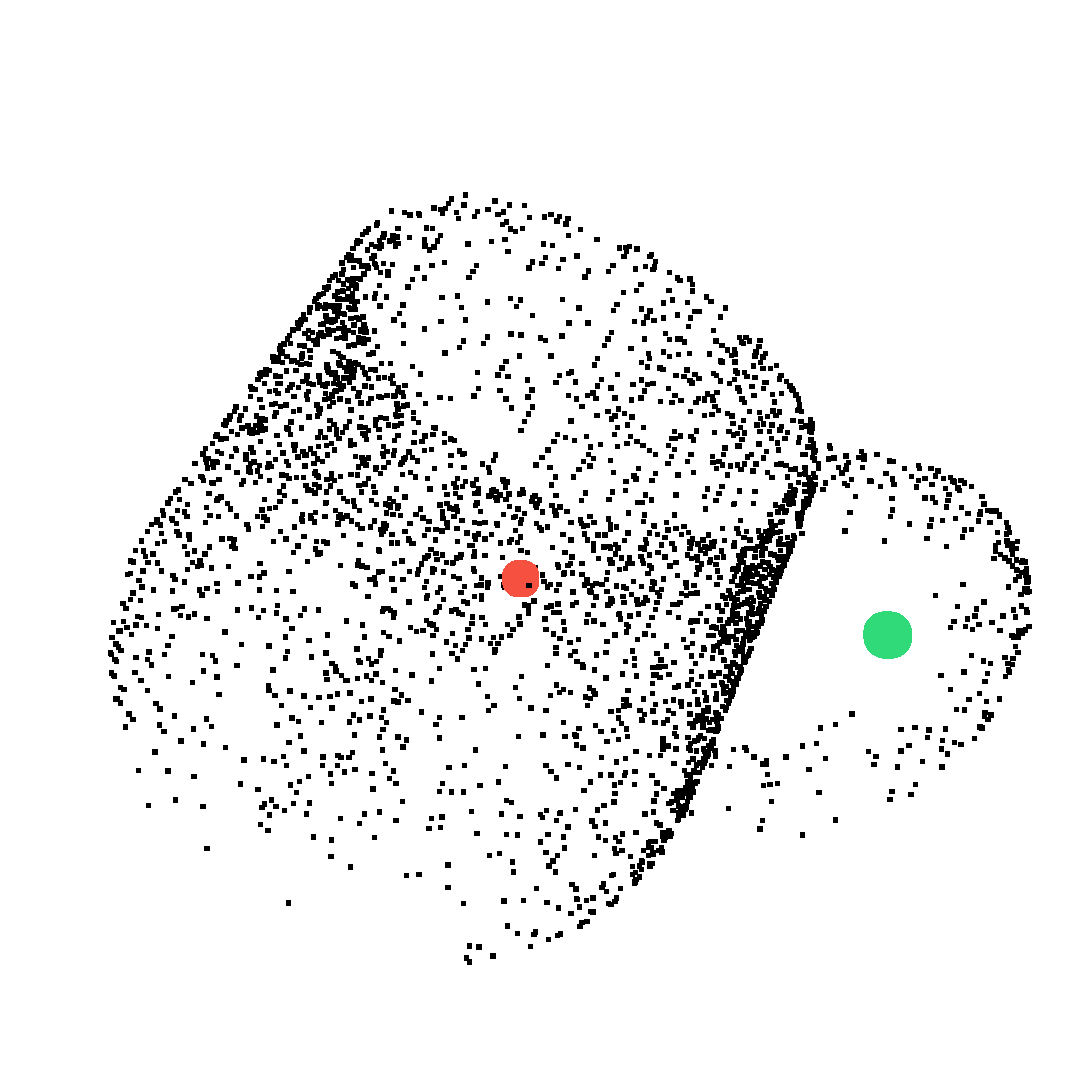}
    \includegraphics[width=0.23\linewidth]{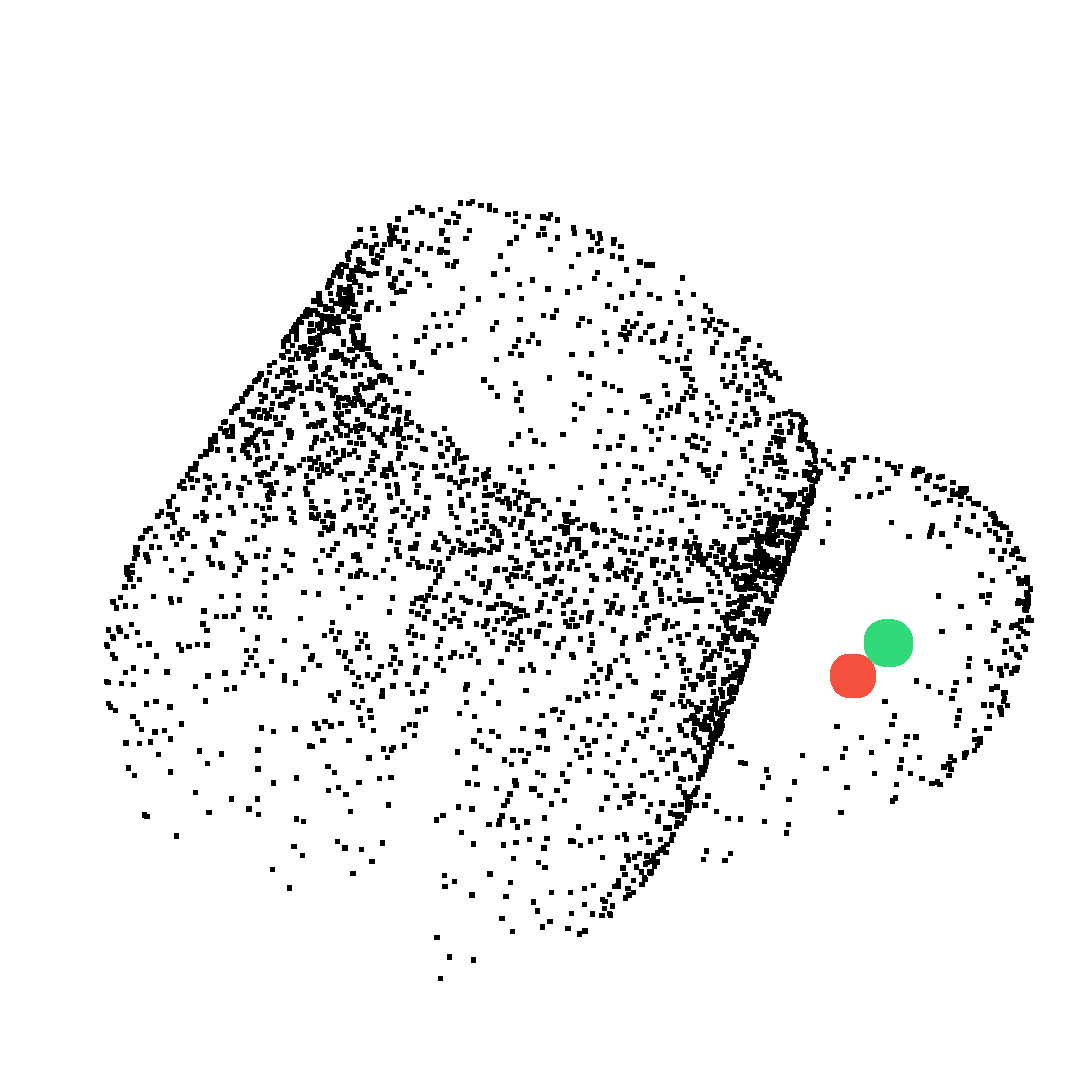}\\
     \includegraphics[width=0.23\linewidth]{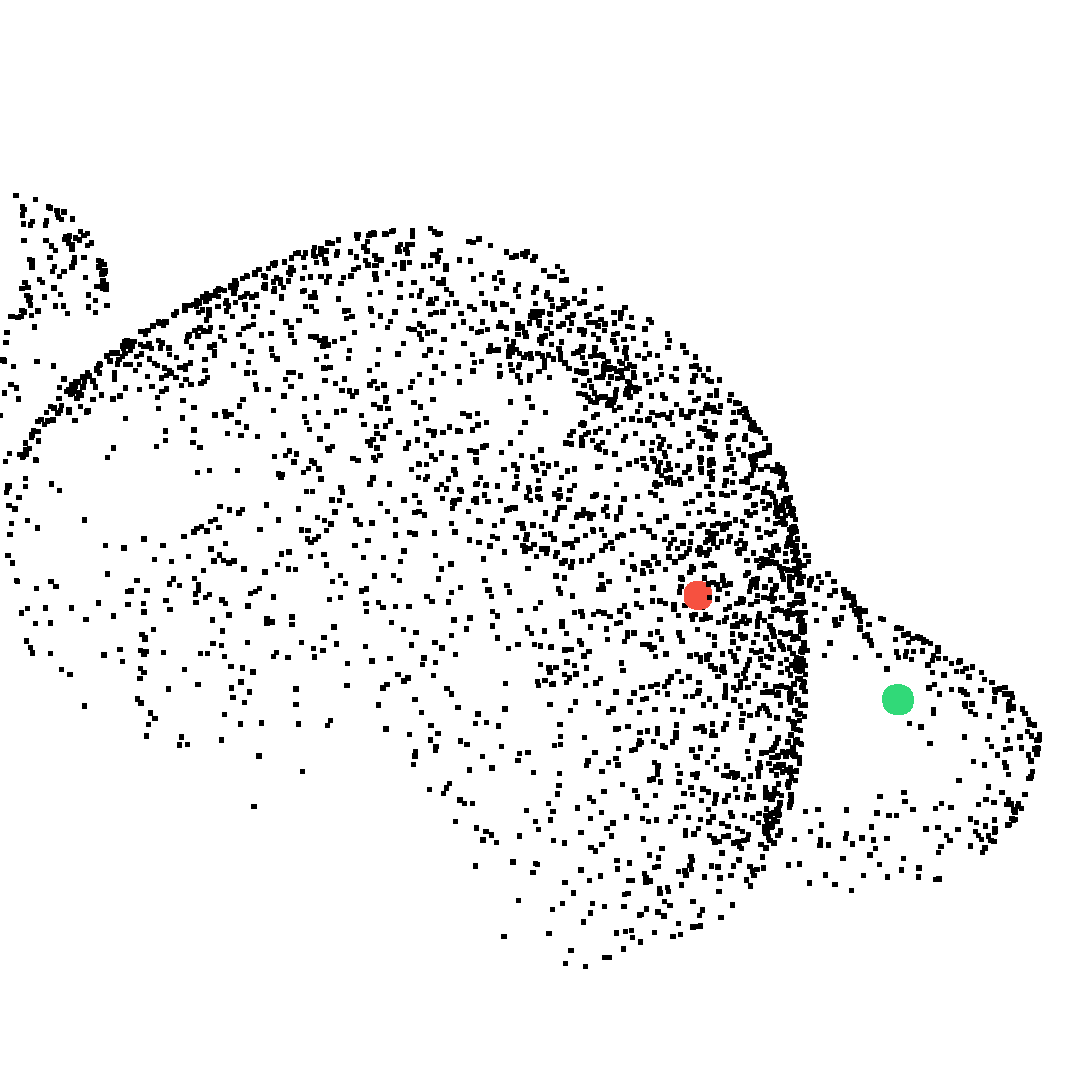}
    \includegraphics[width=0.23\linewidth]{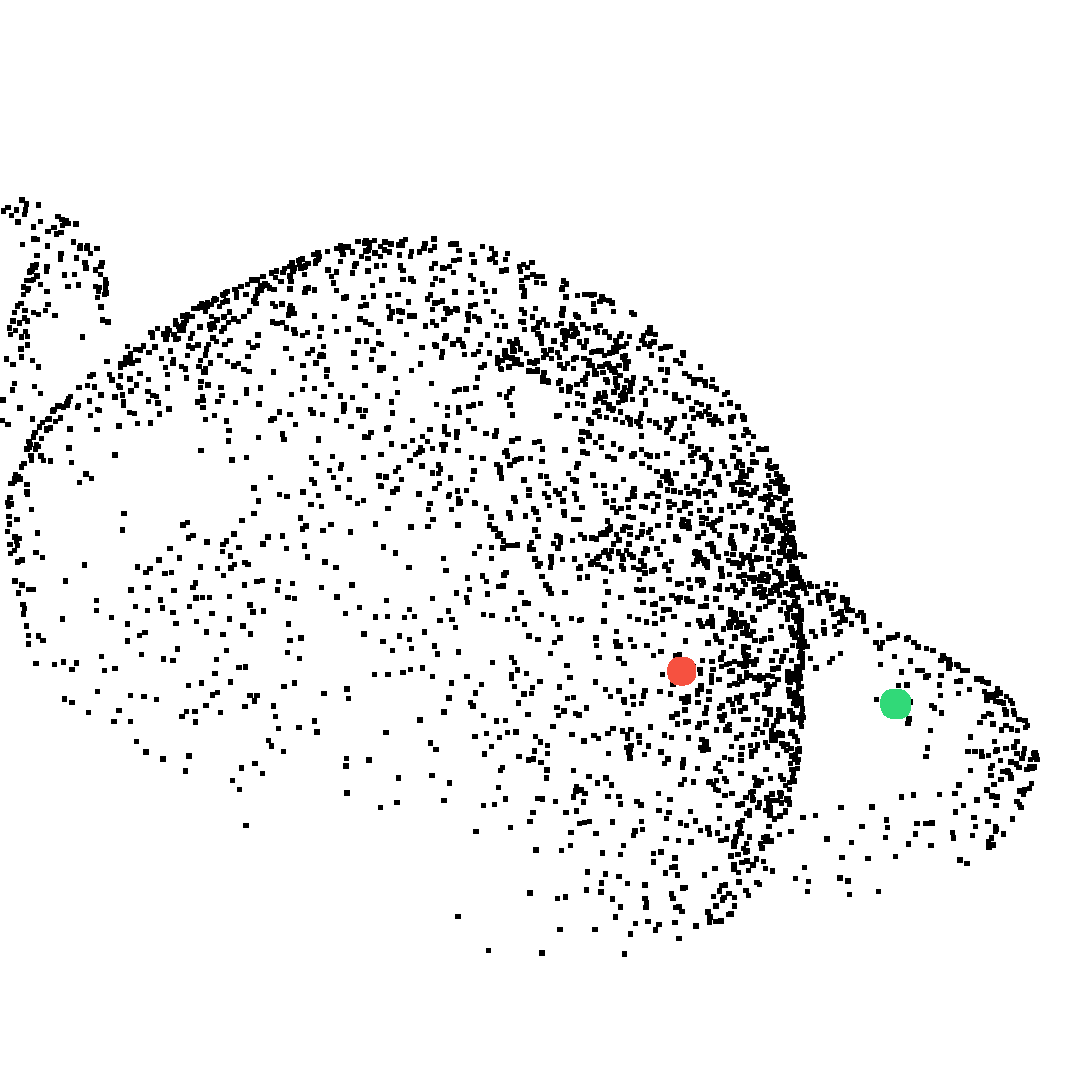}
    \includegraphics[width=0.23\linewidth]{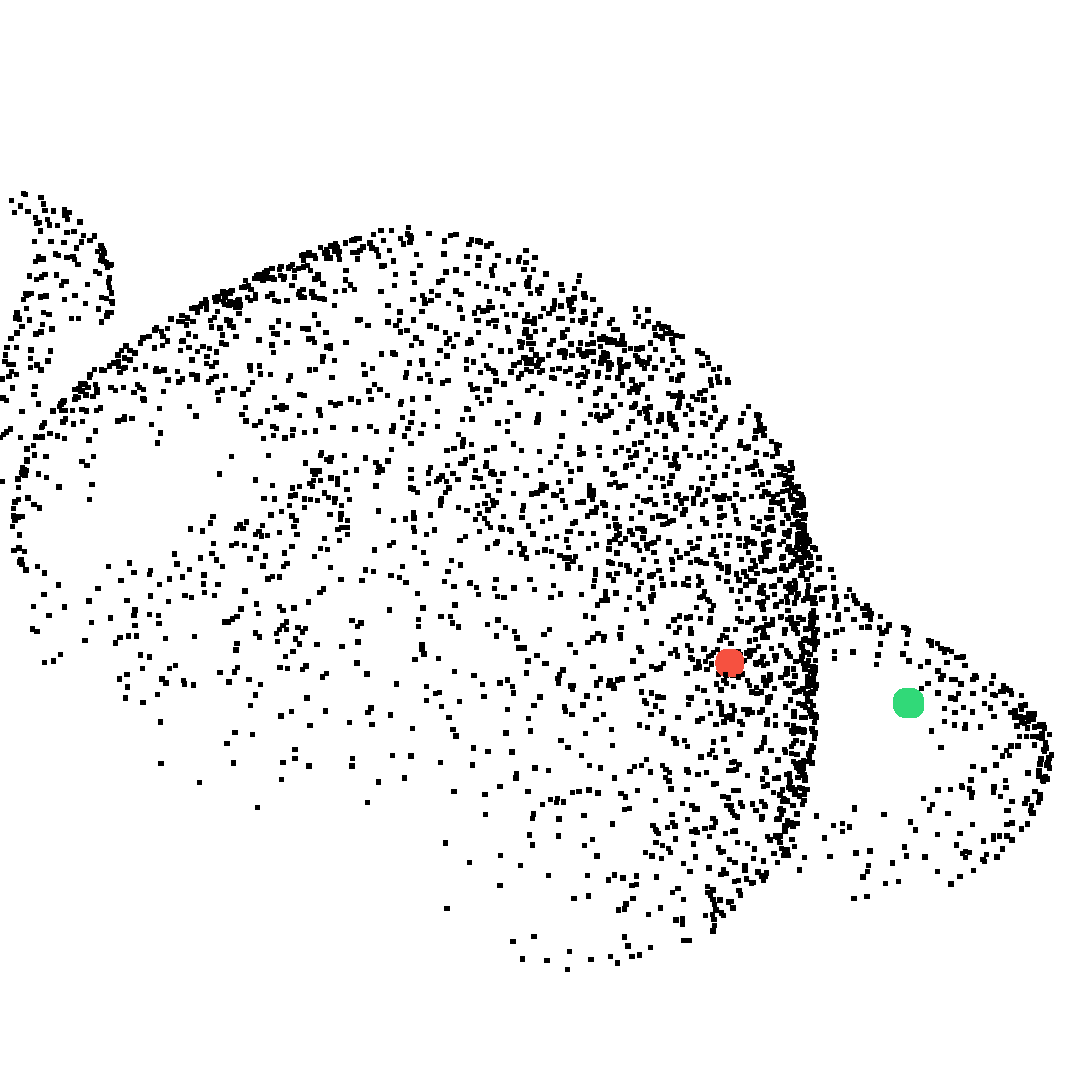}
    \includegraphics[width=0.23\linewidth]{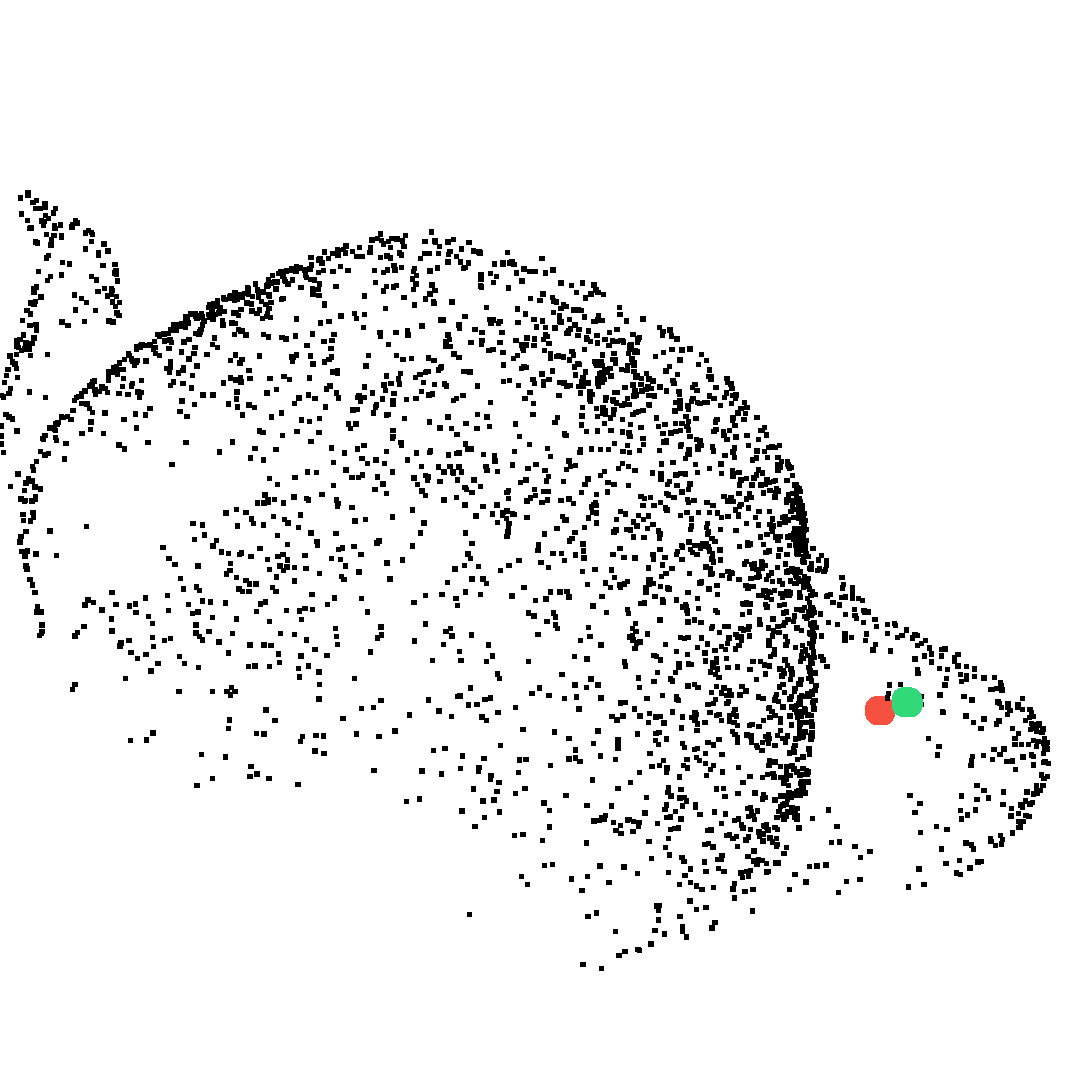} 
    
    \caption{\oos{} grasping point prediction for bottles, mugs and teapots. Each \textbf{row} represents one test datapoint. Columns (\textbf{Left to Right}): linear, neural network, transductive and bilinear model predictions (red) and ground truth (green) grasp points. 
    We show that baslines may predict infeasible grasping points (e.g. in the middle of an object) whereas bilinear transduction makes near accurate predictions. 
    }
    \label{fig:mug results}
\end{figure}

\begin{table}[t!]
  \caption{\footnotesize{
  Comparing bilinear transduction on grasp point prediction with tensor field networks \cite{thomas2018tensor}, an \textbf{SE(3) equivaraint} network for point clouds.
  We report mean and standard deviation for for $50$ \oos{} samples.
  }
  }
  \label{tab:se3}
 \centering
 \scalebox{0.8}{
  \begin{tabular}{llllll}
    Task
     & Tensor Field & Linear & Neural Net & Transduction & Ours \\
    \midrule
    Mug rotation & $0.009 \pm 0.003$ & $0.005\pm 0.001$  & $0.066\pm 0.035$ & $0.005\pm 0.001$ & $0.01\pm 0.003$ \\   
    \midrule
    Mug scale & $0.027 \pm 0.012$ & $0.006\pm 0.001$ & $0.004\pm 0.001$ & $0.004\pm 0.001$ & $0.004\pm 0.001$ \\
    %
    \midrule\midrule
    Bottle rotation & $0.008 \pm 0.004$ & $0.004\pm 0.004$  & $0.085\pm 0.046$ & $0.011\pm 0.026$ & $0.004\pm 0.002$ \\   
    \midrule
    Bottle scale & $0.019 \pm 0.008$ & $0.003\pm 0.001$ & $0.003\pm 0.001$ & $0.01\pm 0.006$ & $0.003\pm 0.001$ \\
    \midrule\midrule
    Teapot rotation & $0.011 \pm 0.006$ & $0.003\pm 0.001$ & $0.055\pm 0.03$ & $0.017\pm 0.012$ & $0.006\pm 0.002$ \\   
    \midrule
    Teapot scale & $0.093\pm 0.039$ & $0.003\pm 0.001$ & $0.003\pm 0.001$ & $0.003\pm 0.001$ & $0.003\pm 0.001$
  \end{tabular}
  }
\end{table}

\begin{table}[!h]
  \caption{\footnotesize{Grasping point prediction on three objects with various rotations, translations and scales with ground truth weighted training, standard training and \textbf{learned weighted} training for bilinear transduction with a fixed architecture.
  By training a weighting function with fewer labels, learned weighted bilinear transduction achieves similar performance to ground truth weighted training.}}
  \label{tab:weighted-mugs}
  \centering
  \scalebox{0.8}{
  \begin{tabular}{llll}
  &Ground Truth & Bilinear Transduction & Learned Weighted\\
  &Weighted&&\\
  \midrule
  (Train)&$0.007\pm0.004$&$0.01\pm0.005$&$0.006\pm0.003$\\
  (\oos{})&$0.027\pm0.014$&$0.068\pm0.041$&$0.024\pm0.014$
  \end{tabular}
  }
\end{table}

\subsection{Additional Results on Imitation Learning}
\label{app:imitation}

\paragraph{Qualitative decision making results} For qualitative results of all models on imitation learning domains, please view videos in the supplementary zip file.  

\subsection{Choice of Architecture}

\paragraph{Periodic activation functions} We provide further results on the grasp point prediction and imitation learning domains in Table~\ref{tab:fourier}. This experiment is similar to the one reported in Table~\ref{tab:scaled-problems}, but with fourier pre-processing as the initial layer in each model.
For the various domains, adding this pre-processing step is beneficial for some architecture-domain combinations, but does not surpass bilinear transduction on all domains for any method. Moreover, in most domains bilinear transduction achieves the best results, or close results to the best model. For implementation details see Section~\ref{app:training_details}.

\paragraph{Robustness across seeds and architectures:} We show that our results are stable across the choice of architecture and seed. 
In Fig~\ref{fig:heatmap} we show the performance of our method and baselines over Meta-World tasks and architectures.
In Table~\ref{tab:3seeds} we show the average performance of our method and baselines over the complex domains for fixed architecture parameters over three seeds.

\begin{table}[!h]
  \caption{\footnotesize{
  Mean and standard deviation over prediction (regression) or final state (sequential decision making) error for \oos{} samples and over a hyperparameter search with \textbf{fourier pre-processing}. While fourier activations are useful for some combinations of models and domains, it is not beneficial across all.
  }
  }
  \label{tab:fourier}
  \centering
 \scalebox{0.8}{
  \begin{tabular}{llllllll}
    Task & Expert 
     & Linear & Neural Net & DeepSets & Transduction & Ours \\
    \midrule
    Mugs & & $0.119\pm0.05$ & $0.219\pm0.114$ && $0.3\pm0.148$ & $\mathbf{0.086\pm0.081}$ \\  
    \midrule
    Bottle && $0.099\pm0.041$ &
    $0.139\pm0.075$ && $0.158\pm0.096$ & $\mathbf{0.019\pm0.011}$\\   
    \midrule
    Teapot && $0.112\pm0.045$ & $0.216\pm0.107$ && $0.301\pm0.14$ & $\mathbf{0.041\pm0.026}$ \\    
    \midrule
    All && $0.142\pm0.064$ & $0.252\pm0.119$ && $0.33\pm0.157$ & $\mathbf{0.047\pm0.033}$ \\
    \midrule
    \midrule    
    %
    %
    Reach & $0.006\pm0.008$ & $\mathbf{0.007\pm0.006}$ & $0.075\pm0.082$ & $0.087\pm0.092$ & $0.117\pm0.176$ & $0.008\pm0.008$ \\    
    %
    %
    %
    \midrule
    %
    Push & $0.012\pm0.001$ & $0.182\pm0.13$ & $0.113\pm0.124$ & $0.121\pm0.091$ & $0.17\pm0.12$ & $\mathbf{0.014\pm0.006}$ \\    
    %
    %
    %
    \midrule
    %
    Slider & $0.105\pm0.066$ & $0.38\pm0.032$ & $0.151\pm0.081$ & $0.241\pm0.077$ & $\mathbf{0.134\pm0.114}$ & $0.256\pm0.214$ \\
    %
    \midrule
    Adroit & $0.035\pm0.015$ & $0.551\pm0.508$ & $0.398\pm0.435$ & $0.373\pm0.195$ & $\mathbf{0.341\pm0.218}$ & $0.365\pm0.31$ 
  \end{tabular}
  }
\end{table}

\begin{figure}[!h]
    \centering
    \includegraphics[width=0.55\linewidth]{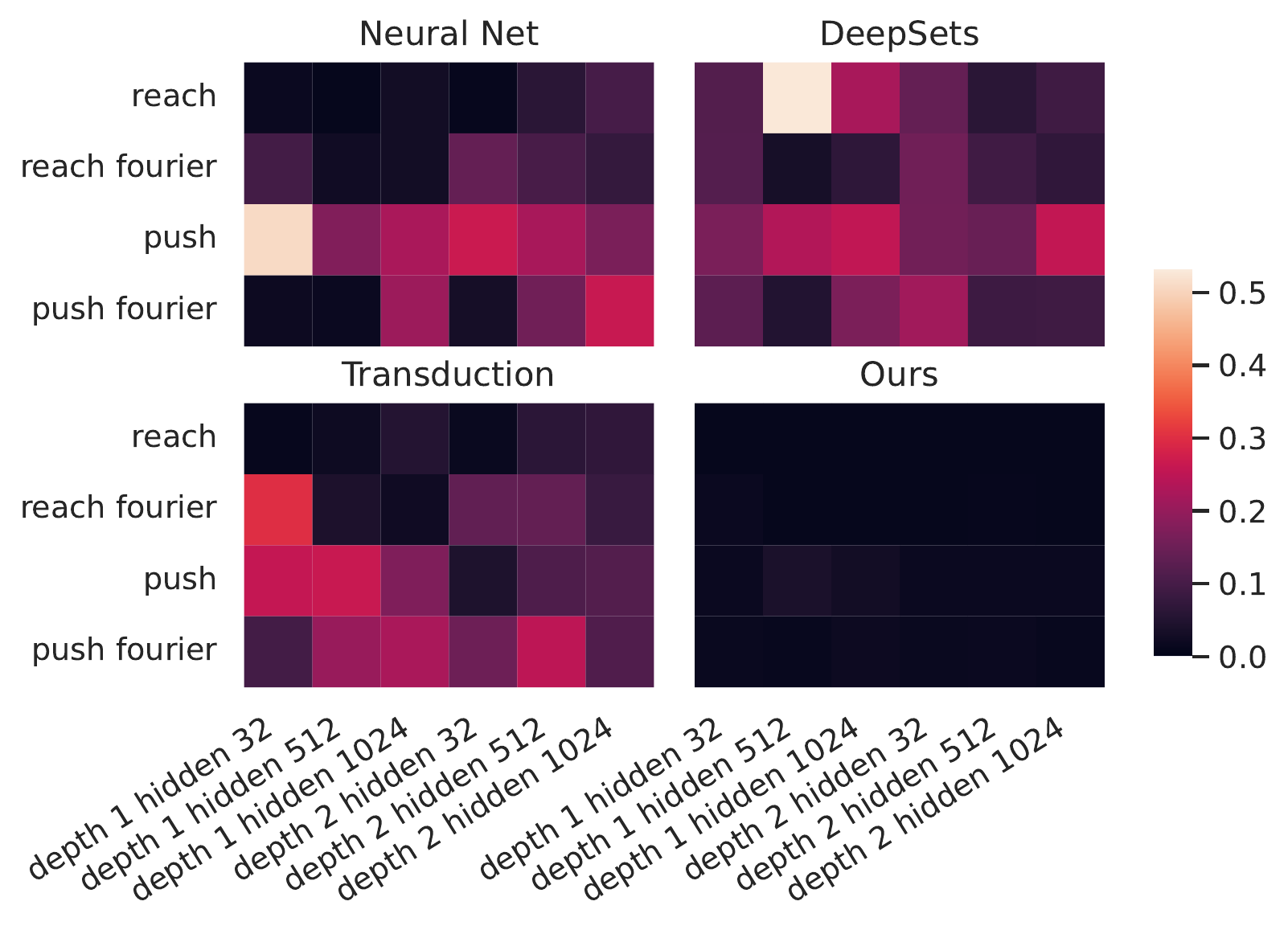}
    \caption{\footnotesize{Heatmap complementing Table~\ref{tab:scaled-problems} showing mean \oos{} errors on Meta-World. While some architectures can be suitable for some domains, our method is more robust to hyperparameter search and almost always achieves low error.
    }}
    \label{fig:heatmap}
\end{figure}

\begin{table}[!h]
  \caption{We report mean and standard deviation error over evaluation samples and three \textbf{seeds}. For each domain, the first row is evaluated on in-distribution samples, the second row is on \oos{} samples.
  We show (1) bilinear transduction performs well in-distribution as well as \oos{} (2) our algorithm is stable across several seeds.}
  \label{tab:3seeds}
  \centering
 \scalebox{0.8}{
  \begin{tabular}{llllllll}
    Task & Expert 
    & Linear & Neural Net & DeepSets & Transduction & Ours \\
    \midrule
    Grasping (Train) & & $0.06\pm0.059$&$0.006\pm0.004$& &$0.006\pm0.003$&$0.007\pm0.003$ \\ 
    Grasping \oos{} && $0.141\pm0.114$&$0.124\pm0.074$& &$0.099\pm0.07$&$0.024\pm0.014$\\
    \midrule
    \midrule
    Reach (Train) & $0.006\pm0.008$ & $0.004\pm0.005$&$0.005\pm0.005$&$0.005\pm0.005$&$0.004\pm0.005$&$0.005\pm0.005$ \\ 
    Reach (\oos{}) & & $0.007\pm0.006$&$0.013\pm0.015$&$0.088\pm0.075$&$0.007\pm0.007$&$0.007\pm0.006$ \\    
    \midrule
    Push (Train) & $0.012\pm0.001$ & $0.22\pm0.103$&$0.046\pm0.049$&$0.01\pm0.001$&$0.022\pm0.011$&$0.016\pm0.007$ \\ 
    Push \oos{} & & $0.26\pm0.063$&$0.355\pm0.138$&$0.188\pm0.124$&$0.337\pm0.151$&$0.022\pm0.016$ \\
    \midrule
    Slider (Train) & $0.105\pm0.066$ & $0.425\pm0.077$&$0.344\pm0.197$&$0.079\pm0.057$&$0.32\pm0.072$&$0.17\pm0.092$ \\ 
    Slider \oos{} & & $0.512\pm0.06$&$0.438\pm0.191$&$0.385\pm0.288$&$0.55\pm0.209$&$0.163\pm0.126$ \\ 
    \midrule
    Adroit (Train) & $0.035\pm0.015$ & $0.66\pm0.596$&$0.139\pm0.231$&$0.103\pm0.18$&$0.129\pm0.215$&$0.047\pm0.024$ \\  
    Adroit \oos{} && $0.337\pm0.075$&$0.703\pm0.561$&$0.422\pm0.241$&$0.773\pm0.796$&$0.198\pm0.255$ 
  \end{tabular}
  }
\end{table}

\newpage
\newpage
\section{Implementation Details}
\label{app:implementation}

\subsection{Train/Test distributions and Analytic Function Details}
\label{app:domain_details}

\paragraph{Analytic functions:}
\label{app:analytic-functions}
We used the following analytic functions for evaluation in Section~\ref{sec:analysis}. 

    Figure~\ref{subfig:periodic}. Periodic mixture of functions with different periods: We consider the period growing mixture of functions, and do a similar function but with two functions with different periods repeating.
    Let us define $x_v = x \mod 9$, then the 
        \begin{equation}
            f(x) = \begin{cases}
                  x_m*\sin(10*x_v) & \text{if } x_v < 1 \\
                  x_m*(\sin(10) + (x_v - 1)) & \text{if } 1 < x_v < 2 \\
                  x_m*(\sin(10) + 1 + (x_v - 2)^2) & \text{if } 2 < x_v < 3 \\
                  x_m*\sin(10*\frac{(x_v - 3)}{2}) & \text{if } 3 < x_v < 5 \\
                  x_m*(\sin(10) + (\frac{x_v - 3}{2} - 1)) & \text{if } 5 < x_v < 7 \\
                  x_m*(\sin(10) + 1 + (\frac{x_v - 3}{2} - 2)^2) & \text{if } 7 < x_v < 9 \\
                  \end{cases}
        \end{equation}
      
    We can see that this function mixes 2 different periods, to show that our framework can deal with varying functions as well. 
    
    Figure~\ref{subfig:sawtooth}. Sawtooth function: we use a classic sawtooth function \begin{equation}
        f(x) = \begin{cases}
          (x \mod \mathrm{Period})*\frac{\mathrm{Amplitude}}{\mathrm{Period}} & \text{if } (\lfloor \frac{x}{\mathrm{Period}} \rfloor) \mod 2 == 0\\
          \mathrm{Amplitude} - (x \mod \mathrm{Period})*\frac{\mathrm{Amplitude}}{\mathrm{Period}} & \text{if } (\lfloor \frac{x}{\mathrm{Period}} \rfloor) \mod 2 == 1
          \end{cases}
    \end{equation}
    
    Figure~\ref{subfig:polynomial}. Randomly chosen polynomial: We sampled a random degree $8$ polynomial: $f(x) = (x - 0.1)(x + 0.4)(x + 0.7)(x - 0.5)(x - 1.5)(x + 1.75)(x + 1)(x - 1.2)$
    
    Figures~\ref{fig:train_dependence} $\&$ ~\ref{fig:beyond_equivariance}. Periodic growing mixture of functions: Let us define $x_v = x \mod 3$ and $x_m = \lfloor \frac{x}{3} \rfloor$ 
    \begin{equation}
        f(x) = \begin{cases}
              x_m*\sin(10*x_v) & \text{if } x_v < 1 \\
              x_m*(\sin(10) + (x_v - 1)) & \text{if } 1 < x_v < 2 \\
              x_m*(\sin(10) + 1 + (x_v - 2)^2) & \text{if } 2 < x_v < 3 \\
              \end{cases}
    \end{equation}
    Figure~\ref{fig:equivariant_function}. Shift equivariant mixture functions: Let us define $x_v = x \mod 3$ and $x_m = \lfloor \frac{x}{3} \rfloor$ 
    \begin{equation}
        f(x) = \begin{cases}
              x_m*3 + \sin(10*x_v) & \text{if } x_v < 1 \\
              x_m*3 + (\sin(10) + (x_v - 1)) & \text{if } 1 < x_v < 2 \\
              x_m*3 + (\sin(10) + 1 + (x_v - 2)^2) & \text{if } 2 < x_v < 3 \\
              \end{cases}
    \end{equation}
      
     This represents a mixture of different functions - a sinusoid function, a linear function and a quadratic function that repeat and are equivariant on shifting. These are meant to show that bilinear transduction can capture equivariance.  
    
    

\paragraph{Grasp point prediction:} A training dataset is generated from rotating, translating and scaling a bottle, mug and teapot from the ShapeNet dataset \cite{chang2015shapenet} with various parameters to predict a grasping point in $\mathbb{R}^3$ from their point clouds. A test dataset is constructed by applying the same transformations to the objects with \oos{} parameters.
The point cloud in $\mathbb{R}^{N\times3}$ is represented by a 12-tuple containing the point cloud's mean point and its three PCA components in $\mathbb{R}^3$. In Fig~\ref{fig:mug pca} we plot the reduced point cloud state space representing point cloud mean and PCA components for in-distribution and \oos{} object orientations, positions and scales.
We add Gaussian noise to the point cloud sampled from $\mathcal{N}(0,0.005)$ and to the grasp point label sampled from $\mathcal{N}(0,0.002)$.
During training, objects are rotated around the $z$ axis $\theta\in[0,1.2\pi]$, translated by $(x,y)\in [0,0.5]\times [0,0.5]$ and scaled by a factor of $\alpha\in[0.7,1.3]$. 
At test time these transformation parameters are sampled from the following ranges: $\theta\in[1.2\pi,2\pi]$, $(x,y)\in[0.5,0.7]\times[0.5,0.7]$ and $\alpha\in[1.3,1.6]$.

\paragraph{Meta-World:} We evaluate on the reach-v2 and push-v2 tasks. For \textbf{reach-v2}, we reduce the observation space to a 6-tuple of the end effector 3D position and target 3D position. The action space is a 4-tuple of the end effector 3D position and the gripper's degree of openness.  
During training, the 3D target position for reach-v2 is sampled from range $[0,0.4]\times[0.7,0.9]\times[0.05,0.3]$ and at test time from range $[-0.4,0]\times[0.7,0.9]\times[0.05,0.3]$. The gripper initial position is fixed to $[0,0.6,0.2]$. 
For \textbf{push-v2} the observation space is a 10-tuple of the end effector 3D position, gripper openness, object 3D position and target 3D position. The action space remains the same as in reach-v2.
The training 3D target position is sampled from $[0,0.3]\times[0.5,0.7]\times\{0.01\}$ and at test time from $[-0.3,0]\times[0.5,0.7]\times\{0.01\}$. The gripper initial position is fixed to $[0,0.4,0.08]$ and the object initial position to $[0,0.4,0.02]$.
For both environments, expert and evaluation trajectories are collected for $100$ steps.

\paragraph{Adroit hand:}
We evaluate on the object relocation task in the ADROIT hand manipulation benchmark.
The observation space is a 39-tuple of 30D hand joint positions, 3D object position, 3D palm position and 3D target position.
The action space is a 30-tuple of 3D position and 3D orientation of the wrist, and 24D for finger positions commanding a PID controller.
During training, the target location is sampled from $[0,0.3]\times[0,0.3]\times[0.15,0.35]$ and at test time from $[-0.3,0]\times[-0.3,0]\times[0.15,0.35]$. Expert and evaluation trajectories are collected for $200$ steps.

\paragraph{Slider:} 
We introduce an environment for controlling a brush to slide an object to a target.
The observation space is an 18-tuple composed of a 9-tuple representing the brush 2D position, 3D object position and 4D quaternion, an 8-tuple representing the 2D brush velocity, 3D object velocity and 3D rotational object velocity, and the object mass.
The action space is torque applied to the brush.
During training, the object mass is sampled from $[60,130]$ and at test time from $[5,15]$.
Expert and evaluation trajectories are collected for $200$ steps.
In Figure \ref{fig:slider act} we plot the actions as a function of time steps for demonstrations, baselines described in Section~\ref{sec:experiments} and Bilinear Transduction for a set of fixed architecture parameters.

\begin{figure}[!h]
    \centering
    \includegraphics[width=0.23\linewidth]{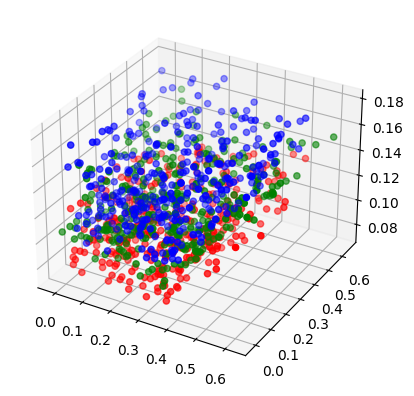}
    \includegraphics[width=0.23\linewidth]{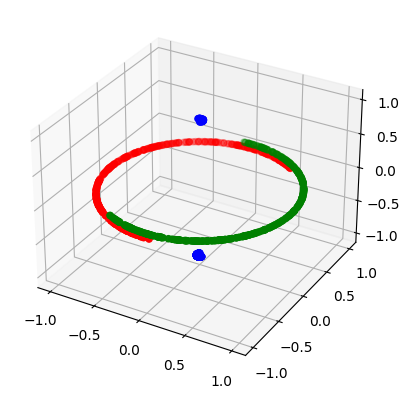}
    \includegraphics[width=0.23\linewidth]{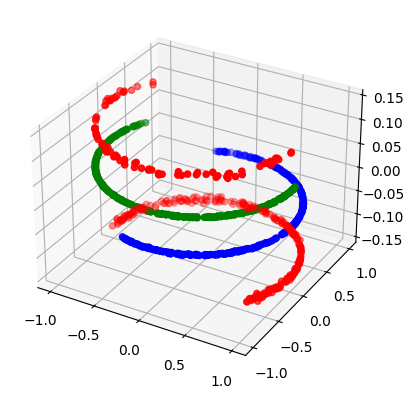}
    \includegraphics[width=0.23\linewidth]{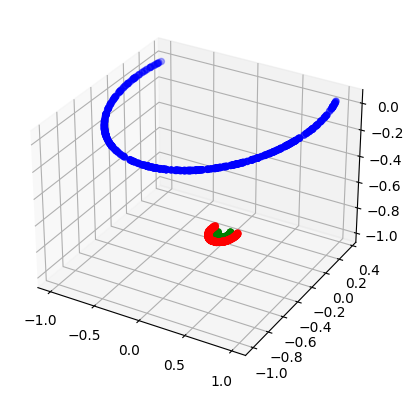}\\
    \includegraphics[width=0.23\linewidth]{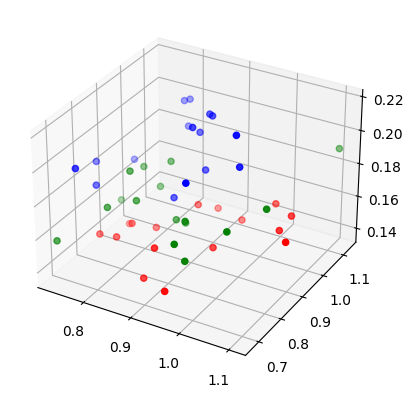}
    \includegraphics[width=0.23\linewidth]{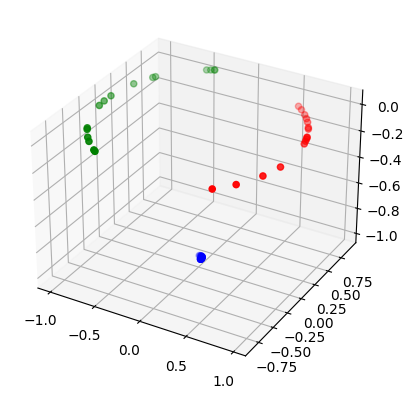}
    \includegraphics[width=0.23\linewidth]{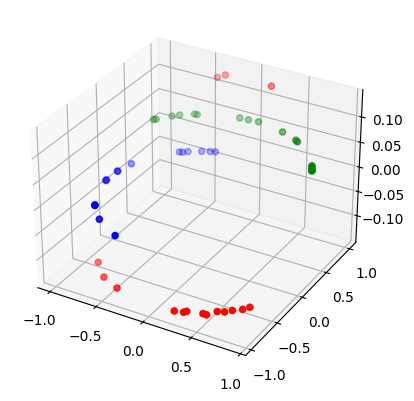}
    \includegraphics[width=0.23\linewidth]{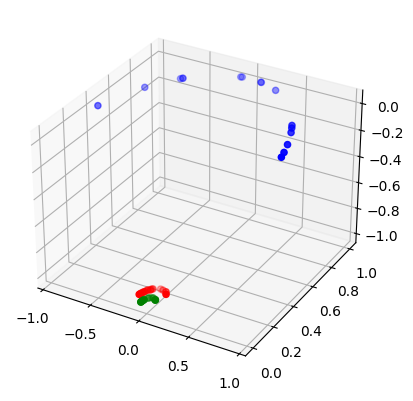}
    \caption{Reduced point cloud feature space in $\mathbb{R}^{12}$. From left to right: mean and three PCA components. Top: in distribution, bottom: \oos{}. Blue: bottles, red: mugs, green: teapots.}
    \label{fig:mug pca}
\end{figure}

\begin{figure}[!h]
    \centering
    \includegraphics[width=0.28\linewidth]{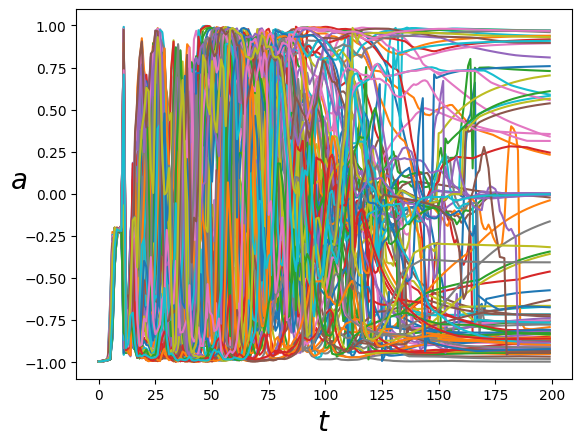}
    \includegraphics[width=0.28\linewidth]{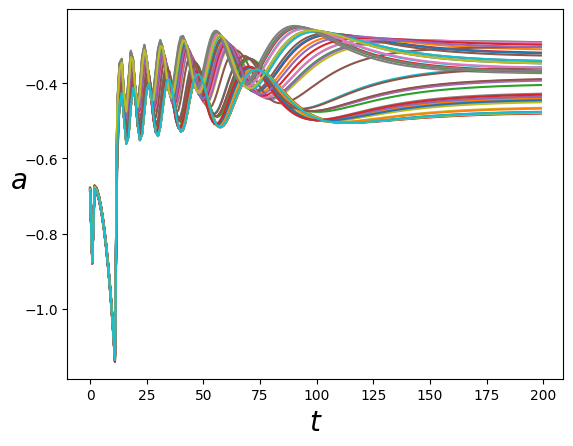}
    \includegraphics[width=0.28\linewidth]{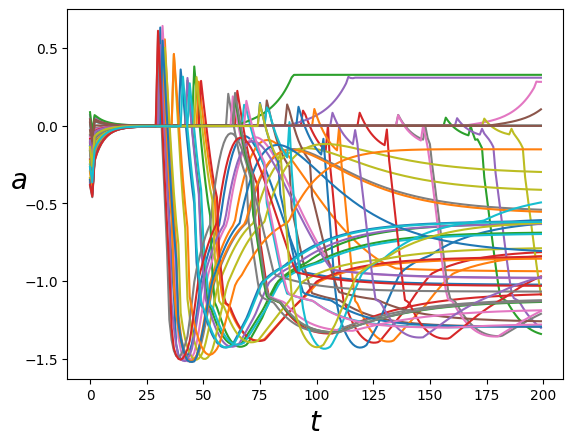}\\
    \includegraphics[width=0.28\linewidth]{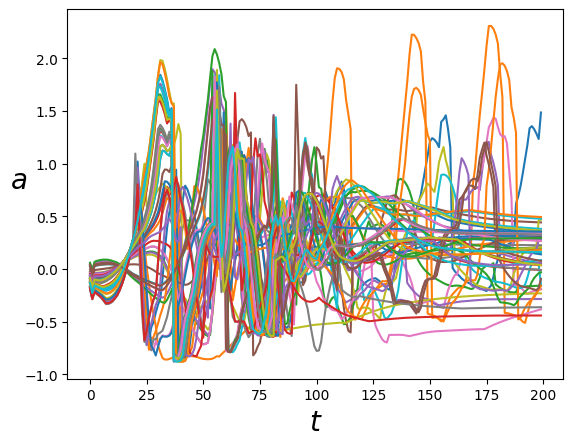}
    \includegraphics[width=0.28\linewidth]{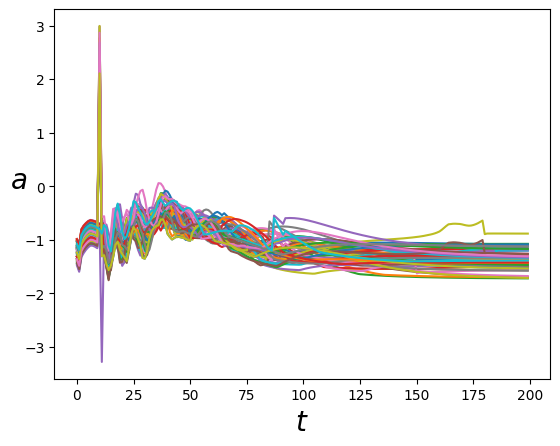}  
    \includegraphics[width=0.28\linewidth]{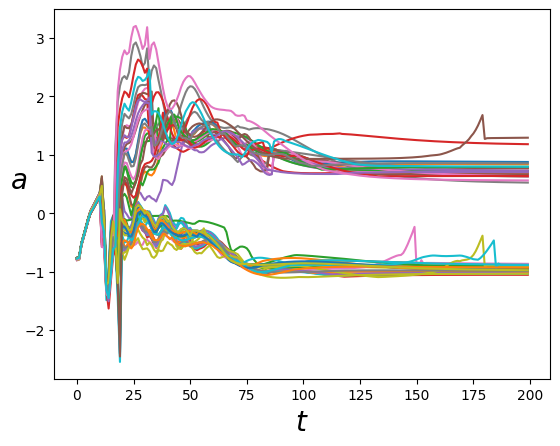}      
    \caption{Slider torque actions as a function of time steps. Each color represents a different trajectory and mass. From left to right, top to bottom: in-support demonstrations and Linear, Neural Net, DeepSets, Transduction and Bilinear Transduction policies for \oos{} masses.}
    \label{fig:slider act}
\end{figure}

\subsection{Training details}
\label{app:training_details}

\subsubsection{Details of Bilinear Transduction Implementation}
We describe the \textbf{bilinear transduction} algorithm implementation in detail. 
For grasp point prediction, during training we uniformly select an object $o\in\{\mathrm{bottle, mug, teapot}\}$, and two instances of those objects, $o_i, o_j$ that were rotated, translated and scaled with different parameters. We learn to transduce $o_i$ to $o_j$, i.e. learn $h_{\theta}(o_j-o_i, o_i)$ to predict $y_j$, the grasping point of $o_j$.
At test time, given test point $x_j$ with an unknown object category, we select a training point $x_i$ with an ``in-distribution'' difference $x_j-x_i$, as described in Algorithm~\ref{alg:unweighted}.
We select $\rho$ to be the closest difference $x_j-x_i$ to an in-distribution difference over all training points $x_i$.
The differences generated by the optimization process are sampled uniformly to generate a smaller subset for comparison.

For the goal-conditioned imitation learning setting, during training we uniformly sample trajectories $\tau_i=\{(s^i_t,a^i_t)\}_{t=1}^T$ for horizon $T$ specified in Appendix~\ref{app:domain_details} and $\tau_j=\{(s^j_t,a^j_t)\}_{t=1}^T$. We further uniformly sample a time step $t$ from $[0,T]$. We learn to transduce state $s^i_t\in\tau_i$ to $s^j_t\in\tau_j$. I.e., learn $h_{\theta}(s^j_t-s^i_t, s^i_t)$ to predict action $a^j_t$.
At test time, we select an anchor trajectory based on the goal (Meta-World, Adroit) or mass (Slider) $g_j$. This is done by selecting the training trajectory $\tau_i$ with goal or mass $g_i$ that generates an ``in-distribution'' difference $g_j-g_i$ with the test point, as described in Algorithm~\ref{alg:unweighted}. We select $\rho$ to be the $10$-th percentile of the in-distribution differences.
The differences generated by the optimization process are approximated by generating differences from the training data. 
We then transduce each \textit{state} in $\tau_i$ to predict actions for the test goal or mass. Given test state $s_t^j$, we predict $a_t^j$ with $h_{\theta}(s^j_t-s^i_t, s^i_t)$, execute action $a_t^j$ in the environment and observe state $s_{t+1}^j$. We complete this process for $T$ steps to obtain $\tau_j$. 



\subsubsection{Details of Training Distributions}
We train on $1000$ samples for all versions of the grasp point prediction domain: single objects, three objects and weighted transduction.  
In the sequential decision domains we train on $N$ demonstrations, sequences of state-action pairs,  $\{(x_i,y_i)\}^T_{i=1}$ where horizon $T$ for each domain is specified in ~\ref{app:domain_details}. 
For Meta-World reach and push,
$N=1000$.
For Slider and Adroit $N=100$. 
All domains were evaluated on $50$ in-distribution out-of-sample points and $50$ \oos{} points.
For each domain, all methods were trained on the same expert dataset and evaluated on the same in-distribution out-of-sample and \oos{} sets.

We collect expert demonstrations as follows.
For grasp point prediction, we use the Neural Descriptor Fields \citep{simeonov2022neural} simulator to generate a feasible grasping point by a Franka Panda arm for three ShapeNet objects (bottle 2d1aa4e124c0dd5bf937e5c6aa3f9597, mug 61c10dccfa8e508e2d66cbf6a91063 and Utah teapot wss.ce04f39420c7c3e82fb82d326efadfe3).
In Meta-World, we use the expert policies provided in the environment.
For Adroit and Slider we train an expert policy using standard Reinforcement Learning (RL) methods - TRPO~\citep{schulman15trpo} for Adroit and SAC~\citep{haarnoja18sac}

The weighting grasp point prediction network was provided with with $300$ labels of positive pairs to be transduced and $300$ negative pairs labeled with binary labels. This is a slightly more relaxed condition than labeling $1000$ objects as in the standard version.

\subsubsection{Details of Model Architectures}
\label{app:model_arch_details}

For analytic domains, we use MLPs (both for NN and bilinear embeddings) with 3 layers of $1000$ hidden units each with ReLu activations. We train all models with periodic activations of fourier features as described in ~\cite{tancik20fourier}. 

\textbf{Linear model}: data points $x$ are processed through a fully connected layer to output predictions $y$.
\textbf{Neural Network}: data points $x$ are processed through an mlp with $k$ layers, $n$ units each and ReLU activations for hidden layers to output predictions $y$.
\textbf{DeepSets}: we refer to the elements of the state space excluding target position (Meta-World, Adroit) or mass (Slider) as observations. Observations and target position or mass are processed separately by two mlps with $k$ layers, $n$ units each and ReLU activations for hidden layers. Their embeddings are summed and processed by an mlp with one hidden layer with $n$ units and ReLU activations to produce predictions $y$.
\textbf{Transduction}: training point $x_j$ and $x_i-x_j$ (for training or test point $x_i$) are concatenated and processed through an mlp with $k$ layers, $n$ units each and ReLU activations for hidden layers, to predict $y_i$.
\textbf{Our architecture Bilinear Transduction}: training point $x_j$ and $\Delta x_{ij}=x_i-x_j$ (for training or test point $x_i$) are embedded separately to a feature space in $\mathbb{R}^{d\cdot m}$, where $d$ is the dimension of $y_i$, by two mlps with $k$ layers, $n$ units each and ReLU activations for hidden layers. Each predicted element of $y_i$ is the dot product of $m$-sized segments of these embeddings as described in \Cref{eq:pibarthet}.

Further, we evaluate for all models with \textbf{Fourier Embedding} as a pre-processing step. We process inputs $x$, target or mass or $\Delta x$ through a linear model which outputs $\times40$ the number of inputs. We then multiply values by $\pi$ and process through the $\sin$ function. This layer is optimized as the first layer of the model.

In Table~\ref{tab:scaled-problems} we search over number of layers $k\in\{2,3\}$, and unit size $n\in\{32,512,1024\}$. The bilinear transduction embedding size is $m=32$.
In Tables~\ref{tab:weighted-mugs} and \ref{tab:3seeds}
we set $k=2$, $n=32$ and $m=32$.

The weighting function used for grasp point prediction is the bilinear transduction architecture with $k=2$, $n=128$ and $m=32$.

\subsubsection{Details of Model Optimization}
We train the analytic functions for $500$ epochs, batch size $32$, and Adam optimizer with learning rate $1e-4$. We optimize the mean squared error (MSE) loss for regression.

We trained all more complex models for $5$k epochs, batch size $32$,
with Adam \citep{kingma2014adam}  optimizer and learning rate 1e-4.
We optimize the L2-norm loss function comparing between ground truth and predicted grasping points or actions for the sequential decision making domains.
Each configuration of hyperparameters was ran and tested on one seed. We demonstrate the stability of our method across three seeds for a fixed set of hyperparameters in Table~\ref{tab:3seeds}.

We train the weighted grasp point prediction for $5$k epochs, batch size $16$, and Adam optimizer with learning rate $1e-4$. We optimize the MSE loss between the output and ground truth binary label indicating if a training point should be transduced to another training point.
The weighting function did not require further finetuning jointly with the bilinear predictor. 


\end{document}